\documentclass[10pt]{article} 
\usepackage[accepted]{tmlr}


\usepackage{amsmath,amsfonts,bm}









\def\eqref#1{equation~\ref{#1}}









\def\1{\bm{1}}










\DeclareMathAlphabet{\mathsfit}{\encodingdefault}{\sfdefault}{m}{sl}
\SetMathAlphabet{\mathsfit}{bold}{\encodingdefault}{\sfdefault}{bx}{n}













\usepackage{hyperref}
\usepackage{url}

\usepackage{amsmath}
\usepackage{amsthm}
\usepackage{amssymb}
\usepackage{braket}

\newtheorem{assumption}{Assumption}

\newtheorem{corollary}{Corollary}
\newtheorem{theorem}{Theorem}

\newtheorem{lemma}{Lemma}

\usepackage{array}
\newcolumntype{M}[1]{>{\centering\arraybackslash}m{#1}}
\newcolumntype{N}{@{}m{0pt}@{}}

\usepackage{footnote}
\makesavenoteenv{tabular}
\makesavenoteenv{table}

\usepackage{algorithm}
\usepackage[noend]{algpseudocode}

%
%


\setlength{\pdfpageheight}{11in}
\setlength{\pdfpagewidth}{8.5in}


\title{Last-Iterate Convergence of General Parameterized Policies in Constrained MDPs}
\author{\name Washim Uddin Mondal \email wmondal@iitk.ac.in\\
\addr  IIT Kanpur
\AND \name Vaneet Aggarwal \email vaneet@purdue.edu\\
\addr  Purdue University}


\begin{document}
\maketitle
\begin{abstract}
This paper focuses on learning a Constrained Markov Decision Process (CMDP) via general parameterized policies. We propose a Primal-Dual based Regularized Accelerated Natural Policy Gradient (PDR-ANPG) algorithm that uses entropy and quadratic regularizers to reach this goal. For parameterized policy classes with a transferred compatibility approximation error, $\epsilon_{\mathrm{bias}}$, PDR-ANPG achieves a last-iterate $\epsilon$ optimality gap and $\epsilon$ constraint violation with a sample complexity of $\tilde{\mathcal{O}}(\epsilon^{-2}\min\{\epsilon^{-2},\epsilon_{\mathrm{bias}}^{-\frac{1}{3}}\})$. If the class is incomplete ($\epsilon_{\mathrm{bias}}>0$), then the sample complexity reduces to $\tilde{\mathcal{O}}(\epsilon^{-2})$ for $\epsilon<(\epsilon_{\mathrm{bias}})^{\frac{1}{6}}$. Moreover, for complete policies with $\epsilon_{\mathrm{bias}}=0$, our algorithm achieves a last-iterate $\epsilon$ optimality gap and $\epsilon$ constraint violation with $\tilde{\mathcal{O}}(\epsilon^{-4})$ sample complexity. It is a significant improvement over the 
state-of-the-art last-iterate guarantees of general parameterized CMDPs. 
\end{abstract}


\section{Introduction}
Constrained Markov Decision Process (CMDP) is a classical framework where an agent repeatedly interacts with an unknown environment to maximize the cumulative discounted rewards while simultaneously ensuring that the cumulative observed costs are within a pre-defined boundary. It finds its application in a multitude of practical scenarios. For example, consider an autonomous vehicle that attempts to reach its destination via the shortest-time route without violating traffic rules or a corporate leader who aims to maximize revenue without crossing a monetary budget. In these cases, any departure from the boundary set by the predefined rules can be signaled by a cost while the progress towards the desired objective can be indicated by a reward.

Finding an optimal policy to navigate an unknown CMDP is a difficult task. Nevertheless, several recent articles have proposed algorithms to solve this challenging problem with optimality guarantees. For example, \cite{mondal2024sample} exhibited that their primal dual-based algorithm can achieve $\epsilon$ optimality gap and $\epsilon$ constrained violation with a $\tilde{O}(\epsilon^{-2})$ sample complexity. Unfortunately, the majority of these works define constraint violation in an average sense. In other words, if their algorithms yield $\{\pi_1, \cdots, \pi_K\}$ policies in $K$ iterations, then the violation is defined as that experienced by a uniformly chosen policy. Since the very nature of this definition allows a large violation at one iteration to get balanced by a smaller violation later, such algorithms are not suitable for safety-critical applications.

To address this challenge, some recent articles have proposed algorithms with last-iterate guarantees. For example, \cite{gladin2023algorithm,ying2022dual} prove last-iterate guarantees for softmax policies, whereas \cite{ding2024last} establishes the same for log-linear policies.  \cite{montenegro2024last} recently achieved $\tilde{\mathcal{O}}(\epsilon^{-7})$ sample complexity via general parameterized policies. It is to be emphasized that general parameterization subsumes the tabular softmax and log-linear cases and allows the policies to be represented by neural networks. Moreover, since the general parameterization uses a fixed $\mathrm{d}$ number of parameters where $\mathrm{d}$ is independent of the size of the state space, it can also be utilized for large or infinite state space. Notably, the current-state-of-the-art sample complexity $\tilde{\mathcal{O}}(\epsilon^{-7})$ is far from the lower bound $\Omega(\epsilon^{-2})$. This raises the following question.

\begin{center}
\fbox{
\parbox{0.9\linewidth}
{Is it possible to design an algorithm for general parameterized CMDPs whose last-iterate guarantee is better than the current state-of-the-art?}
}
\end{center}

\subsection{Contribution and Challenges}

\begin{table*}[]
    \centering
    \begin{tabular}{|c|c|c|}
    \hline
         Algorithm & Sample Complexity  & Parameterization\\
         \hline
         & &\\[-0.32cm]
         Dual Descent \citep{ying2022dual} & $\tilde{\mathcal{O}}(\epsilon^{-2})$ & Softmax\\
         \hline
         & & \\[-0.32cm]
         Cutting-Plane \citep{gladin2023algorithm} & $\tilde{\mathcal{O}}(\epsilon^{-4})$ & Softmax\\
         \hline
         & &\\[-0.32cm]
         RPG-PD \citep{ding2024last} & $\tilde{\mathcal{O}}(\epsilon^{-6})$ & Log-linear with $\epsilon_{\mathrm{bias}}=\mathcal{O}(\epsilon^8)$\\
         \hline
         & & \\[-0.32cm]
         C-PG \citep{montenegro2024last} & $\tilde{\mathcal{O}}(\epsilon^{-7})$ & General\\
         \hline
         & &\\[-0.32cm]
         PDR-ANPG (\textbf{This Work}) & $\tilde{\mathcal{O}}(\epsilon^{-4})$ & General with $\epsilon_{\mathrm{bias}}=0$\\
         \hline
         & &\\[-0.32cm]
         PDR-ANPG (\textbf{This Work}) & $\tilde{\mathcal{O}}(\epsilon^{-2}\min\{\epsilon^{-2},\epsilon_{\mathrm{bias}}^{-\frac{1}{3}}\})$ & General with $\epsilon_{\mathrm{bias}}>0$\\[0.05cm]
         \hline
         & & \\[-0.32cm]
         Lower Bound \citep{vaswani2022near} & $\Omega(\epsilon^{-2})$ & $-$\\
         \hline
    \end{tabular}
    \caption{List of recent papers on CMDPs with last-iterate guarantees. The term $\epsilon_{\mathrm{bias}}$ is the expressivity error of the underlying parameterized policy class.}
    \label{tab:related_works}
\end{table*}

This paper affirmatively answers the above question. In particular, we propose a primal dual-based regularized accelerated natural policy gradient (PDR-ANPG) algorithm that uses entropy and quadratic regularizer in the Lagrangian function 
and momentum-based accelerated stochastic gradient descent (ASGD) process of \citep{jain2018accelerating} as the NPG finding subroutine. We establish that, for general parameterized policies, our algorithm achieves $\mathcal{O}(\epsilon+(\epsilon_{\mathrm{bias}})^{\frac{1}{6}})$ last-iterate optimality gap and the same constraint violation with a sample complexity of $\tilde{\mathcal{O}}(\epsilon^{-2}\min\{\epsilon^{-2},\epsilon_{\mathrm{bias}}^{-\frac{1}{3}}\})$ where $\epsilon_{\mathrm{bias}}$ indicates the transferred compatibility approximation error of the policy class. If $\epsilon_{\mathrm{bias}}>0$ is independent of $\epsilon$, the sample complexity is $\tilde{\mathcal{O}}(\epsilon^{-2})$ for small $\epsilon$. However, if $\epsilon_{\mathrm{bias}}=0$, i.e., the policy class is complete, then the optimality gap and constraint violation are $\mathcal{O}(\epsilon)$ and the sample complexity turns out to be $\tilde{\mathcal{O}}(\epsilon^{-4})$. 

One of the main challenges in handling an entropy regularizer is that the advantage estimate can, in general, become unbounded. This is a problem since many intermediate 
lemmas crucial in establishing global convergence utilize its boundedness. In the tabular setup, \cite{ding2024last} circumvented this problem by allowing the policy optimization to run over a carefully chosen simplex. Such provisions are not available for general parameterized policies. Interestingly, we observe that the advantage estimates can be bounded in an average sense, which is sufficient to obtain our desired result, provided that the gradient\footnote{Here, we specifically refer to the gradients used in the NPG finding subroutine.} sampling is done following an unconventional method. In particular, on top of the standard routines, our sampling process (Algorithm \ref{algo_sampling}) 
comprise an additional expectation that reduces the variance of the gradient estimate while preserving its unbiasedness.

Our improved sample complexity originates from another important observation. We noticed that the bias of the NPG estimator can be interpreted as the convergence error of an ASGD program with an exact gradient oracle. Without this reduction, the bias would be exponentially large, leading to a significant deterioration in sample complexity.


\subsection{Related Works}

\noindent
\textbf{Unconstrained MDP:} Many algorithms are available in the literature that solve the unconstrained MDP via an exact gradient oracle e.g., see \citep{zhan2023policy, lan2023policy, cen2022fast, agarwal2021theory, bhandari2021linear}. Among the sample-based methods, some papers demonstrate first-order convergence \citep{shen2019hessian, huang2020momentum, gargiani2022page, xu2020improved} while other works focus on global convergence \citep{masiha2022stochastic, liu2020improved, khodadadian2022finite, chen2022sample}. It is to be observed that \cite{fatkhullin2023stochastic,mondal2024improved} prove the state-of-the-art $\tilde{\mathcal{O}}(\epsilon^{-2})$ sample complexity for last-iterate and average global convergence respectively. However, to the best of our understanding, the approach of \cite{fatkhullin2023stochastic} is not extendable to CMDPs.

\noindent
\textbf{Constrained MDP:} Most of the CMDP works show average or regret-type constraint violation guarantees. Many among them design tabular model-based \citep{he2021nearly, ding2021provably, liu2021learning, efroni2020exploration} and model-free \citep{wei2022triple, bai2022achieving, ding2021provably} algorithms. Some literature focus on parameterized policies. For example, \cite{liu2021policy, zeng2022finite} deal with softmax policies while \cite{xu2021crpo, bai2023achieving, ding2020natural} handle the general parameterization. The optimal $\tilde{\mathcal{O}}(\epsilon^{-2})$ sample complexity for general parameterized CMDPs is proven by \cite{mondal2024sample}. In comparison, the literature on last-iterate guarantees is relatively nascent. As shown in Table \ref{tab:related_works}, \cite{ying2022dual, gladin2023algorithm} respectively establish $\tilde{\mathcal{O}}(\epsilon^{-2})$ and $\tilde{\mathcal{O}} (\epsilon^{-4})$ sample complexity for softmax policies whereas \cite{ding2024last} achieve $\tilde{\mathcal{O}}(\epsilon^{-6})$ sample complexity for log-linear policies assuming its expressivity error to be $\epsilon_{\mathrm{bias}} = \mathcal{O}(\epsilon^8)$. Recently, \cite{montenegro2024last} exhibited $\tilde{\mathcal{O}}(\epsilon^{-7})$ sample complexity for general parameterization. We show an improvement over this state-of-the-art.

\section{Notations}

In the following table, we summarize the major notations used in the paper for easy reference.

\begin{table}[h]
    \centering
    \begin{tabular}{c|c}
    Notation & Definition \\
    \hline & \vspace{-0.3cm}\\
     $Q_g^{\pi}$, $V_g^{\pi}$, $J^{\pi}_g$  & Value functions corresponding to policy $\pi$ and utility $g$ \\
     $A^{\pi}_g$ & Advantage function corresponding to policy $\pi$ and utility $g$\\
     $\theta$, $\lambda$ & Primal and dual parameters\\
     $d^{\pi}$, $\nu^{\pi}$ & Occupancy measures corresponding to policy $\pi$\\
     $\mathcal{L_{\tau}}$ & Lagrangian with regularization parameter $\tau$\\
     $F(\theta)$ & Fisher matrix defined in (\ref{eq:def_Fisher})\\
     $\mathcal{E}^{\tau}_{\nu}$ & Error function defined in (\ref{eq:def_error_function})\\
     $L_{\tau, \lambda}^2$ $\sigma^2_{\tau, \lambda}$ & Constants defined in (\ref{eq:def_L_tau_lambda_sq}) and (\ref{eq:def_sigma_square}) respectively\\
     $\hat{\zeta}^{\tau}_{\theta, \lambda}$ & Error function defined in (\ref{eq_35_washim})\\
     $\epsilon_{\mathrm{bias}}$ & Transferred compatibility approximation error
    \end{tabular}
    \caption{Major notations used in this paper.}
    \label{table_notations}
\end{table}

\section{Formulation}

Consider a Constrained Markov Decision Process (CMDP) characterized as $\mathcal{M} = (\mathcal{S}, \mathcal{A}, r, c, P, \gamma, \rho)$ where $\mathcal{S}$ is a (possibly infinite) state space, $\mathcal{A}$ is a finite action space with cardinality $A$, $r:\mathcal{S}\times \mathcal{A}\rightarrow [0, 1]$ indicates the reward function, $c:\mathcal{S}\times \mathcal{A}\rightarrow [-1, 1]$ is the cost function, $P:\mathcal{S}\times\mathcal{A}\rightarrow \Delta (\mathcal{S})$ is the transition function (where $\Delta(\cdot)$ denotes the probability simplex on its argument set), $\gamma\in [0, 1)$ defines the discount factor and $\rho\in\Delta(\mathcal{S})$ is the initial state distribution. This paper assumes $\mathcal{S}$ to be countable, though, in general, it can be taken to be compact. A (stationary) policy is defined to be a function of the form $\pi: \mathcal{S} \rightarrow \Delta(\mathcal{A})$. For a given policy, $\pi$, and a state-action pair $(s, a)$, the $Q$ value corresponding to a utility function $g:\mathcal{S}\times \mathcal{A}\rightarrow \mathbb{R}$ is defined as follows (the term utility function subsumes the concepts of both reward and cost functions).
\begin{align}
    \label{def_Q}
    Q^{\pi}_g (s, a) = \mathbf{E}_{\pi} \left[ \sum_{t=0}^\infty \gamma^t g(s_t, a_t) \bigg| s_0=s, a_0=a \right]
\end{align}
where $\mathbf{E}_{\pi}$ is the expectation over all $\pi$-induced trajectories $\{(s_t, a_t)\}_{t=0}^{\infty}$ where $a_t\sim \pi(s_t)$, $s_{t+1}\sim P(s_t, a_t)$, $\forall t\in\{0, 1, \cdots\}$. Similarly, given a policy $\pi$ and utility function $g$, the associated state value function is given as: $V_g^{\pi}(s) = \sum_{a} \pi(a|s)Q^{\pi}(s, a)$, $\forall s\in \mathcal{S}$.

Moreover, for a given policy $\pi$, and a utility function $g$, the advantage function is: $A_g^{\pi}(s, a) = Q_g^{\pi}(s, a) - V_g^{\pi}(s)$, $\forall (s, a)$. The state occupancy measure corresponding to $\pi$ is given as
\begin{align}
    \label{eq:def_d_pi}
    d^{\pi}(s) = (1-\gamma)\mathrm{E}_{\pi}\left[\sum_{t=0}^\infty \gamma^t \mathbf{1}(s_t=s)\bigg|s_0\sim \rho\right], ~\forall s
\end{align}
where $\mathbf{1}(\cdot)$ is the indicator function. We ignore the dependence on $\rho$ for simplifying the notations whenever there is no confusion. The state-action occupancy measure induced by $\pi$ is given as: $\nu^{\pi}(s, a) = d^{\pi}(s)\pi(a|s)$, $\forall (s, a)$. We define $\mathbf{E}_{s\sim \rho}[V_g^\pi(s)]=J^{\pi}_g$. The goal of learning CMDP is to solve the following optimization.
\begin{align}
\label{opt_original}
    \max_{\pi\in\Pi} J^{\pi}_{r}~~\text{subject to:}~J^{\pi}_c \geq 0
\end{align}
where $\Pi$ is the set of all policies. We assume that at least one interior solution exists for the above optimization, which is known as Slater's condition.
\begin{assumption}
\label{ass_slater}
    There exists $\Bar{\pi}\in\Pi$ such that $J_c(\Bar{\pi})\geq c_{\mathrm{slat}}$ where $c_{\mathrm{slat}}\in (0, 1/(1-\gamma)]$.
\end{assumption}

Note that the policy function cannot be expressed in a tabular format for infinite states. General parameterization can be used in such cases. It indexes each policy by a $\mathrm{d}$-dimensional parameter $\theta$. Define $J^{\pi_\theta}_g\triangleq J_g(\theta)$ for any $g\in \mathbb{R}^{\mathcal{S}\times \mathcal{A}}$. Problem (\ref{opt_original}) can now be written as:
\begin{align}
\label{opt_theta}
    \max_{\theta\in \mathbb{R}^{\mathrm{d}}} J_r(\theta)~~\text{subject to:}~J_c(\theta)\geq 0
\end{align}

\section{Algorithm Design}

The standard approach to solve the constrained optimization \eqref{opt_original} is utilizing a saddle point optimization on the  Lagrangian function $J_{r+\lambda c}^{\pi}$ where $\lambda$ is a Lagrange multiplier. We utilize $\pi^*$ to denote an optimal solution to \eqref{opt_original}. Moreover, $\lambda^*$ is its corresponding dual solution.
\begin{align}
    \lambda^* \in {\arg\min}_{\lambda\geq 0} {\max}_{\pi\in\Pi} J_{r+\lambda c}^{\pi} 
\end{align}

The following result is well known  \citep{ding2024last}.
    \begin{lemma}
        \label{lemma_strong_suality_original}
        An optimal primal-dual pair $(\pi^*, \lambda^*)$ is guaranteed to exist if Assumption \ref{ass_slater} holds. Moreover, it satisfies the following strong duality condition.
        \begin{align}
            \max_{\pi\in\Pi} J^{\pi}_{r+\lambda^*c} = J^{\pi^*}_{r+\lambda^*c} = \min_{\lambda\geq 0} J^{\pi^*}_{r+\lambda c}
        \end{align}
        Additionally, $0\leq \lambda^*\leq 1/[(1-\gamma)c_{\mathrm{slat}}]$.
    \end{lemma}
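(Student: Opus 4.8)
The plan is to recast the constrained problem \eqref{opt_original} as a convex program in the space of occupancy measures, invoke convex strong duality under Slater's condition (Assumption \ref{ass_slater}) to obtain the primal-dual pair together with the saddle-point equalities, and finally extract the explicit bound on $\lambda^*$ by testing the inner maximum against the Slater point $\bar\pi$.

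First I would exploit the fact that both objective and constraint are linear in the state-action occupancy measure. Writing $J^\pi_r=\frac{1}{1-\gamma}\sum_{s,a}\nu^\pi(s,a)r(s,a)$ and, identically, $J^\pi_c=\frac{1}{1-\gamma}\sum_{s,a}\nu^\pi(s,a)c(s,a)$, and recalling that the set $\{\nu^\pi:\pi\in\Pi\}$ of achievable occupancy measures is convex (it is precisely the solution set of the Bellman flow constraints), the problem \eqref{opt_original} becomes the maximization of a linear objective over a convex set under a single linear inequality, i.e.\ a convex program. The Lagrangian $L(\pi,\lambda)\triangleq J^\pi_{r+\lambda c}=J^\pi_r+\lambda J^\pi_c$ is then affine in $\nu^\pi$ and affine in $\lambda$ on the convex domain $\lambda\ge 0$, which is exactly the structure needed for a minimax/strong-duality theorem. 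Slater's condition supplies a strictly feasible point, so strong duality holds and a saddle point $(\pi^*,\lambda^*)$ exists.

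From the saddle-point relation $L(\pi,\lambda^*)\le L(\pi^*,\lambda^*)\le L(\pi^*,\lambda)$ the two displayed equalities follow immediately: the left inequality states that $\pi^*$ maximizes $L(\cdot,\lambda^*)$, giving $\max_{\pi\in\Pi}J^\pi_{r+\lambda^*c}=J^{\pi^*}_{r+\lambda^*c}$, while the right inequality states that $\lambda^*$ minimizes $L(\pi^*,\cdot)$, giving $J^{\pi^*}_{r+\lambda^*c}=\min_{\lambda\ge 0}J^{\pi^*}_{r+\lambda c}$. Moreover, since $\lambda\mapsto J^{\pi^*}_r+\lambda J^{\pi^*}_c$ is affine with nonnegative slope $J^{\pi^*}_c\ge 0$ (feasibility of $\pi^*$), this last equality forces complementary slackness $\lambda^* J^{\pi^*}_c=0$, hence $J^{\pi^*}_{r+\lambda^*c}=J^{\pi^*}_r$. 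For the bound I then combine this with the Slater point: evaluating the inner maximum at $\bar\pi$,
\[
J^{\pi^*}_r=\max_{\pi\in\Pi}J^\pi_{r+\lambda^*c}\ge J^{\bar\pi}_r+\lambda^* J^{\bar\pi}_c\ge \lambda^* c_{\mathrm{slat}},
\]
using $J^{\bar\pi}_r\ge 0$, $J^{\bar\pi}_c\ge c_{\mathrm{slat}}$ and $\lambda^*\ge 0$. Since $r\in[0,1]$ gives $J^{\pi^*}_r\le 1/(1-\gamma)$ and $\lambda^*\ge 0$ holds by definition of the dual domain, I conclude $0\le\lambda^*\le 1/[(1-\gamma)c_{\mathrm{slat}}]$.

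The main obstacle I anticipate is the rigorous justification of strong duality when $\mathcal{S}$ is countably infinite: the elementary finite-dimensional LP-duality argument must be upgraded to an infinite-dimensional convex-duality or Sion-type minimax argument, which requires care about the topology placed on the occupancy-measure set and the compactness guaranteeing that the primal maximizer $\pi^*$ is attained. Once that structural fact is in hand (and it is the part inherited from prior work such as \cite{ding2024last}), the saddle-point equalities and the $\lambda^*$ estimate are short and purely algebraic.
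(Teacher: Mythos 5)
Your proposal is correct. Note that the paper contains no in-house proof of this lemma at all --- it imports the statement verbatim from \cite[Lemma 1]{ding2024last} --- and your argument (linearity of $J^\pi_r, J^\pi_c$ in the occupancy measure $\nu^\pi$, convexity of the achievable occupancy-measure set, Slater-based strong duality yielding a saddle point, the two displayed equalities read off from the saddle-point inequalities, and the bound via $\lambda^* c_{\mathrm{slat}} \leq J^{\bar\pi}_{r+\lambda^* c} \leq \max_{\pi\in\Pi} J^{\pi}_{r+\lambda^* c} = \min_{\lambda\geq 0} J^{\pi^*}_{r+\lambda c} \leq J^{\pi^*}_r \leq 1/(1-\gamma)$) is exactly the standard route underlying that citation. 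Your closing caveat is also well placed: the only genuinely nontrivial ingredient is the existence/attainment of the saddle point for countable $\mathcal{S}$ (infinite-dimensional duality rather than finite LP duality), and that is precisely the structural fact the paper outsources to the cited work; everything downstream of it in your write-up, including the complementary-slackness step $\lambda^* J^{\pi^*}_c = 0$, is correct and purely algebraic.
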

    
    This paper, however, considers a regularized Lagrangian function, defined below $\forall\pi\in\Pi$,  $\forall\lambda\geq 0$.
\begin{align}
    \mathcal{L}_{\tau}(\pi, \lambda) = J_{r+\lambda c}^{\pi} + \tau \left( \mathcal{H}(\pi) + \dfrac{1}{2}\lambda^2 \right)
\end{align}
where $\tau$ is a tunable parameter and $\mathcal{H}(\pi)$ is the entropy corresponding to the policy $\pi$ (defined below).
\begin{align}
    \begin{split}
        \mathcal{H}(\pi) &\triangleq -\dfrac{1}{1-\gamma}\sum_{s, a} d^{\pi}(s) \pi (a|s) \log \pi(a|s) = \mathbf{E}_{\pi}\left[\sum_{t=0}^\infty -\gamma^t \log \pi (a_t|s_t) \bigg| s_0\sim \rho\right]
    \end{split}
\end{align}

Let $(\pi_\tau^*, \lambda_\tau^*)$ denote the primal-dual solutions corresponding to the regularized Lagrangian $\mathcal{L}_{\tau}$, i.e.,
\begin{align}
    \label{eq:def_primal_dual_tau}
    \begin{split}
        &\pi^*_\tau \triangleq {\arg\max}_{\pi\in\Pi}{\min}_{\lambda\in\Lambda} \mathcal{L}_\tau(\pi, \lambda), \\
        &\lambda_\tau^* \triangleq {\arg\min}_{\lambda\in\Lambda}{\max}_{\pi\in\Pi} \mathcal{L}_{\tau}(\pi, \lambda)
    \end{split}
\end{align}
where $\Lambda = [0, \lambda_{\max}]$ is a carefully chosen set of positive reals and $\lambda_{\max}$ is stated in Theorem \ref{theorem_1}. We can prove that for any $\tau, \lambda_{\max}>0$, the pair $(\pi_\tau^*, \lambda_\tau^*)$ uniquely exists and follows a strong duality similar to Lemma \ref{lemma_strong_suality_original}\footnote{More details are available in the appendix.}. However, such a result cannot be directly applied to the class of parameterized policies where our objective is to solve (\ref{opt_reg_lagrange}) where $\mathcal{L}_\tau(\pi_\theta, \lambda)$ is denoted as $\mathcal{L}_{\tau}(\theta, \lambda)$.
\begin{align}
    \label{opt_reg_lagrange}
    &\min_{\lambda\in\Lambda}\max_{\theta\in\mathbb{R}^{\mathrm{d}}} \mathcal{L}_{\tau}(\theta, \lambda)
\end{align}
We aim to apply the Natural Policy Gradient (NPG)-based primal-dual updates (expressed below) to solve (\ref{opt_reg_lagrange}), starting with arbitrary $\theta_0$ and $\lambda_0$.
\begin{align}
\label{eq:theta_lambda_exact_update}
    \begin{split}
        \theta_{k+1} &= \theta_k + \eta F(\theta_k)^{\dagger} \nabla_{\theta} \mathcal{L}_{\tau} (\theta_k, \lambda_k),\\
        \lambda_{k+1} &= \mathcal{P}_{\Lambda}\left[(1-\eta \tau)\lambda_k - \eta J_c(\theta_k)\right]
    \end{split}
\end{align}
where $\mathcal{P}_{\Lambda}$ denotes the projection operation onto $\Lambda$ and $\eta$ is the learning rate. Observe that, unlike the vanilla policy gradient iterations, the update direction of $\theta$ does not align with the gradient $\nabla_\theta \mathcal{L}_\tau (\theta, \lambda)$ but rather, it is modulated by the Moore-Penrose pseudoinverse (denoted as $\dagger$) of the Fisher matrix, $F(\theta)$ defined below.
\begin{align}
\label{eq:def_Fisher}
    F(\theta) = \mathbf{E}_{(s, a)\sim \nu^{\pi_\theta}} \left[\nabla_{\theta} \log \pi_\theta (a|s) \otimes \nabla_\theta \log \pi_\theta (a|s)\right]
\end{align}
where $\otimes$ denotes the outer product. The lemma stated below outlines a procedure to compute the gradient $\nabla_\theta \mathcal{L}_\tau (\theta, \lambda)$.

\begin{algorithm}[H]
 \caption{Sampling Procedure}
    \begin{algorithmic}[1]
        \State \textbf{Input:} $\theta,\omega, \lambda, \gamma, \tau, r, c, \rho$
        \State \textbf{Define:} $g\triangleq r+\lambda c + \tau \psi_\theta$
        \vspace{0.2cm}
        \State $T\sim \mathrm{Geo}(1-\gamma)$, $s_0\sim \rho$, $a_0\sim \pi_{\theta}(s_0)$
        \For{$j\in\{0,\cdots, T-1\}$}
            \State $s_{j+1}\sim P(s_j, a_j)$ and $a_{j+1}\sim \pi_{\theta}(s_{j+1})$
        \EndFor
        \State $\hat{J}_{c}(\theta) \gets \sum_{j=0}^T c(s_j, a_j)$, ~$\hat{s}\gets s_{T}$ \vspace{0.1cm}
        \State \Comment{Value Function Estimation}
        \State $T\sim \mathrm{Geo}(1-\gamma)$, $s_0\gets\hat{s}$, $a_0\sim \pi_{\theta}(s_0)$
        \For{$j\in\{0,\cdots, T-1\}$}
            \State $s_{j+1}\sim P(s_j, a_j)$, and  $a_{j+1}\sim \pi_{\theta}(s_{j+1})$
        \EndFor
        \State $\hat{V}_g^{\pi_{\theta}}(\hat{s})\gets \sum_{j=0}^{T}g(s_j, a_j)$ \vspace{0.1cm}
        \State \Comment{Q and Advantage Estimation} 
        \For{$a\in \mathcal{A}$} 
        \State $T\sim \mathrm{Geo}(1-\gamma)$, $(s_0, a_0)\gets (\hat{s}, a)$ 
        \For{$j\in\{0,\cdots, T-1\}$}
            \State $s_{j+1}\sim P(s_j, a_j)$, and  $a_{j+1}\sim \pi_{\theta}(s_{j+1})$
        \EndFor
        \State $\hat{Q}_g^{\pi_{\theta}}(\hat{s}, a)\gets \sum_{j=0}^{T}g(s_j, a_j)$
      
        \State $\hat{A}_g^{\pi_{\theta}}(\hat{s}, a)\gets \hat{Q}_g^{\pi_{\theta}}(\hat{s}, a)-\hat{V}_g^{\pi_{\theta}}(\hat{s})$
        \EndFor
        \State \Comment{Gradient Estimation}
        \begin{flalign}
            &\hat{F}(\theta)\gets \mathbf{E}_{a\sim \pi_\theta(\hat{s})}\left[\nabla_{\theta}\log\pi_{\theta}(a|\hat{s})\otimes\nabla_{\theta}\log\pi_{\theta}(a|\hat{s})\right]\nonumber\\
            &\hat{H}_{\tau}(\theta, \lambda)\gets \mathbf{E}_{a\sim\pi_\theta(\hat{s})}\left[\hat{A}_{g}^{\pi_\theta}(\hat{s}, a)\nabla_{\theta}\log \pi_\theta(a|\hat{s})\right]\nonumber\\
            \label{eq:estimation_gradient}
            &\hat{\nabla}_\omega \mathcal{E}^{\tau}_{\nu^{\pi_\theta}}(\omega, \theta, \lambda)\gets \hat{F}(\theta)\omega -\frac{1}{1-\gamma}\hat{H}_{\tau}(\theta, \lambda)
        \end{flalign}
        \State \textbf{Output:} $\hat{J}_{c}(\theta), \hat{\nabla}_{\omega} \mathcal{E}^{\tau}_{\nu^{\pi_\theta}}(\omega, \theta, \lambda)$
    \end{algorithmic}
    \label{algo_sampling}
\end{algorithm}

\begin{lemma}
\label{lemma:grad_compute}
    The following holds $\forall \theta\in \mathbb{R}^{\mathrm{d}}$, $\forall \lambda\in \Lambda$.
    \begin{align}
        &\nabla_\theta \mathcal{L}_\tau (\theta, \lambda) = \dfrac{1}{1-\gamma} H_{\tau}(\theta, \lambda),\\
        &H_{\tau}(\theta, \lambda) \triangleq \mathbf{E}_{(s, a)\sim \nu^{\pi_\theta}}\left[A^{\pi_\theta}_{r+\lambda c + \tau \psi_\theta}(s, a)\nabla_\theta \log \pi_\theta(a|s)\right]
    \end{align}
    where $\psi_\theta(s, a) = -\log \pi_\theta (a|s)$, $\forall (s, a)$.
\end{lemma}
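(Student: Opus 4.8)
The plan is to split $\mathcal{L}_\tau(\theta,\lambda)$ into three pieces and differentiate each with respect to $\theta$. Using the trajectory form of the entropy given in the excerpt, I would write $\mathcal{H}(\pi_\theta) = J^{\pi_\theta}_{\psi_\theta}$, so that
$$\mathcal{L}_\tau(\theta,\lambda) = J_{r+\lambda c}(\theta) + \tau J^{\pi_\theta}_{\psi_\theta} + \frac{\tau}{2}\lambda^2.$$
The last term is $\theta$-independent and contributes nothing. For the first term I would invoke the standard policy gradient theorem, which gives $\nabla_\theta J_{r+\lambda c}(\theta) = \frac{1}{1-\gamma}\mathbf{E}_{(s,a)\sim\nu^{\pi_\theta}}[A^{\pi_\theta}_{r+\lambda c}(s,a)\nabla_\theta\log\pi_\theta(a|s)]$.

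The delicate term is $\tau J^{\pi_\theta}_{\psi_\theta}$, since the utility $\psi_\theta = -\log\pi_\theta$ itself depends on $\theta$. To disentangle the two sources of dependence I would introduce an auxiliary function $f(\theta,\theta') \triangleq J^{\pi_\theta}_{\psi_{\theta'}}$, in which the trajectory-generating policy uses $\theta$ while the utility is frozen at $\theta'$, so that $J^{\pi_\theta}_{\psi_\theta} = f(\theta,\theta)$. The chain rule then yields $\nabla_\theta J^{\pi_\theta}_{\psi_\theta} = \nabla_\theta f(\theta,\theta')\big|_{\theta'=\theta} + \nabla_{\theta'} f(\theta,\theta')\big|_{\theta'=\theta}$. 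The first piece treats the utility as fixed and is handled by the policy gradient theorem exactly as before, producing $\frac{1}{1-\gamma}\mathbf{E}_{(s,a)\sim\nu^{\pi_\theta}}[A^{\pi_\theta}_{\psi_\theta}(s,a)\nabla_\theta\log\pi_\theta(a|s)]$.

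The crux is the second piece, where only the utility varies. Using $J^{\pi}_g = \frac{1}{1-\gamma}\mathbf{E}_{(s,a)\sim\nu^{\pi}}[g(s,a)]$ together with $\nabla_{\theta'}\psi_{\theta'}(s,a) = -\nabla_{\theta'}\log\pi_{\theta'}(a|s)$, this term equals $-\frac{1}{1-\gamma}\mathbf{E}_{(s,a)\sim\nu^{\pi_\theta}}[\nabla_\theta\log\pi_\theta(a|s)]$. I would then observe that for every state $s$ we have $\sum_a\pi_\theta(a|s)\nabla_\theta\log\pi_\theta(a|s) = \nabla_\theta\sum_a\pi_\theta(a|s) = 0$, so that after taking the outer expectation over $s\sim d^{\pi_\theta}$ the second piece vanishes identically. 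This cancellation is the main obstacle to state cleanly: one must be disciplined about which $\theta$-dependence is being differentiated at each step, since otherwise the score-function term is either double-counted or dropped for the wrong reason.

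Finally I would recombine. By linearity of $Q^{\pi}_g$ and $V^{\pi}_g$, hence of $A^{\pi}_g$, in the utility $g$, we have $A^{\pi_\theta}_{r+\lambda c} + \tau A^{\pi_\theta}_{\psi_\theta} = A^{\pi_\theta}_{r+\lambda c + \tau\psi_\theta}$. Substituting this into the sum of the surviving terms gives $\nabla_\theta\mathcal{L}_\tau(\theta,\lambda) = \frac{1}{1-\gamma}\mathbf{E}_{(s,a)\sim\nu^{\pi_\theta}}[A^{\pi_\theta}_{r+\lambda c+\tau\psi_\theta}(s,a)\nabla_\theta\log\pi_\theta(a|s)] = \frac{1}{1-\gamma}H_\tau(\theta,\lambda)$, as claimed. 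A small amount of additional care is needed to justify interchanging differentiation with the infinite horizon sum and the expectation, but under the discounting $\gamma\in[0,1)$ and the boundedness of $r$, $c$ this is routine and I would relegate it to a remark.
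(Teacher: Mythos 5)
Your proof is correct, but it takes a genuinely different route from the paper's. The paper proves the identity from scratch: it differentiates the Bellman equation for $V^{\pi_\theta}_{r+\lambda c+\tau\psi_\theta}$ directly, absorbs the $\theta$-dependence of the utility inside the recursion (where the term $-\tau\sum_a \pi_\theta(a|s)\nabla_\theta\log\pi_\theta(a|s)$ vanishes), and then unrolls the recursion to produce the discounted occupancy measure --- essentially re-deriving the policy gradient theorem for a $\theta$-dependent utility. You instead modularize: you freeze the utility via the two-argument function $f(\theta,\theta')=J^{\pi_\theta}_{\psi_{\theta'}}$, apply the chain rule on the diagonal, invoke the standard policy gradient theorem as a black box for the first partial, and kill the second partial using the score identity $\mathbf{E}_{a\sim\pi_\theta(s)}[\nabla_\theta\log\pi_\theta(a|s)]=0$, finishing with linearity of $A^{\pi}_g$ in $g$. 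Both arguments hinge on exactly the same cancellation; the difference is where it is deployed (inside the Bellman recursion versus at the level of the aggregate objective). Your version is cleaner and makes the ``which $\theta$ is being differentiated'' bookkeeping explicit, which is arguably more transparent than the paper's inline treatment; the paper's version is self-contained and sidesteps one small point your sketch glosses over: the frozen utility $\psi_{\theta'}(s,a)=-\log\pi_{\theta'}(a|s)$ is \emph{not} uniformly bounded, so justifying the standard policy gradient theorem and the limit interchanges for that piece requires more than the ``boundedness of $r$, $c$'' you cite --- e.g.\ the finite-state assumption and differentiability hypotheses the paper imposes in its own proof, or an average-sense bound on entropy-type values as in the paper's Lemma~\ref{lemma:advantage_bound}. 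Since the paper itself works at this level of rigor, this is a presentational caveat rather than a gap.
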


The above result is similar to the well-known policy gradient theorem \citep{sutton1999policy}. It is worthwhile to mention here that, since $\psi_\theta$ is unbounded, the advantage function $A^{\pi_\theta}_{r+\lambda c+ \tau \psi_\theta}$ can, in general, be unbounded. This is quite disadvantageous since many intermediate results in the convergence analysis require the advantage function to be bounded. Fortunately, Lemma \ref{lemma:advantage_bound} (stated below) proves that advantage values can be bounded in an average sense, which turns out to be sufficient for our analysis.
\begin{lemma}
\label{lemma:advantage_bound}
    The following holds $\forall \theta, \lambda, \tau, s$.
    \begin{align}
        \begin{split}
            &\mathbf{E}_{a\sim\pi_\theta(s)}\left[\left\vert A^{\pi_\theta}_{r+\lambda c+\tau \psi_\theta} (s, a)\right\vert^2\right]\leq L_{\tau, \lambda}^2~\text{where~} L_{\tau, \lambda}^2\triangleq \dfrac{8(1+\lambda)^2}{(1-\gamma)^2} + \tau^2\left[\dfrac{32}{e^2}A+\dfrac{12(\log A)^2}{(1-\gamma)^2}\right]
        \end{split}
\label{eq:def_L_tau_lambda_sq}
\end{align}
\end{lemma}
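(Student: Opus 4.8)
The plan is to exploit the linearity of the $Q$, $V$, and hence advantage functions in their utility argument. Since $Q^{\pi_\theta}_g$ and $V^{\pi_\theta}_g$ are both linear in $g$ directly from \eqref{def_Q} and \eqref{def_V}, the advantage splits as $A^{\pi_\theta}_{r+\lambda c+\tau\psi_\theta}=A^{\pi_\theta}_{r+\lambda c}+\tau A^{\pi_\theta}_{\psi_\theta}$. Applying $(x+y)^2\le 2x^2+2y^2$ and taking the expectation over $a\sim\pi_\theta(s)$, it then suffices to bound $\mathbf{E}_{a\sim\pi_\theta(s)}[|A^{\pi_\theta}_{r+\lambda c}(s,a)|^2]$ and $\mathbf{E}_{a\sim\pi_\theta(s)}[|A^{\pi_\theta}_{\psi_\theta}(s,a)|^2]$ separately, and to identify the first with the $(1+\lambda)^2$ term and the second with the bracketed $\tau^2$ term of $L^2_{\tau,\lambda}$.

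The first piece is the routine one. Because $r\in[0,1]$ and $c\in[-1,1]$ with $\lambda\ge 0$, the utility obeys $|r+\lambda c|\le 1+\lambda$ pointwise, so the geometric discounting in \eqref{def_Q}--\eqref{def_V} gives $|Q^{\pi_\theta}_{r+\lambda c}|\le (1+\lambda)/(1-\gamma)$ and the same bound for $V^{\pi_\theta}_{r+\lambda c}$. Hence $|A^{\pi_\theta}_{r+\lambda c}(s,a)|\le 2(1+\lambda)/(1-\gamma)$ holds pointwise, and squaring and doubling contributes exactly $8(1+\lambda)^2/(1-\gamma)^2$.

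The crux is the entropy term $A^{\pi_\theta}_{\psi_\theta}$, where $\psi_\theta$ is itself unbounded. I would first control $V^{\pi_\theta}_{\psi_\theta}$: its per-step expected utility is $\mathbf{E}_{a\sim\pi_\theta(s)}[-\log\pi_\theta(a|s)]$, the Shannon entropy of $\pi_\theta(\cdot|s)$, which lies in $[0,\log A]$; the geometric sum then yields $0\le V^{\pi_\theta}_{\psi_\theta}(s)\le (\log A)/(1-\gamma)$ for every $s$. Writing $A^{\pi_\theta}_{\psi_\theta}=Q^{\pi_\theta}_{\psi_\theta}-V^{\pi_\theta}_{\psi_\theta}$ together with the Bellman identity $Q^{\pi_\theta}_{\psi_\theta}(s,a)=\psi_\theta(s,a)+\gamma\,\mathbf{E}_{s'\sim P(s,a)}[V^{\pi_\theta}_{\psi_\theta}(s')]$ and applying $(x+y)^2\le 2x^2+2y^2$ twice, I would reach the pointwise bound $|A^{\pi_\theta}_{\psi_\theta}(s,a)|^2\le 4\psi_\theta(s,a)^2+6(\log A)^2/(1-\gamma)^2$ after using $\gamma^2\le 1$. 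The only term still depending on $a$ in an unbounded way is $\psi_\theta(s,a)^2=(\log\pi_\theta(a|s))^2$, and this is exactly where the averaging rescues us: the scalar inequality $p(\log p)^2\le 4/e^2$ for all $p\in(0,1]$ (its maximum, attained at $p=e^{-2}$, is located by setting the derivative $\log p\,(\log p+2)$ to zero) gives $\mathbf{E}_{a\sim\pi_\theta(s)}[(\log\pi_\theta(a|s))^2]=\sum_a \pi_\theta(a|s)(\log\pi_\theta(a|s))^2\le 4A/e^2$. Consequently $\mathbf{E}_{a\sim\pi_\theta(s)}[|A^{\pi_\theta}_{\psi_\theta}(s,a)|^2]\le 16A/e^2+6(\log A)^2/(1-\gamma)^2$, and doubling reproduces the bracketed $\tau^2$ factor; adding the two contributions yields $L^2_{\tau,\lambda}$.

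I expect the only genuinely delicate step to be the pointwise maximization $p(\log p)^2\le 4/e^2$ and its summation over the $A$ actions; the rest is linearity of the value functions plus geometric-sum bookkeeping. The conceptual point worth emphasizing is that the per-state entropy bound $\log A$ keeps $V^{\pi_\theta}_{\psi_\theta}$ bounded even though $\psi_\theta$ is not, so that the advantage is controlled \emph{only} after the expectation over actions is taken, precisely the average-sense boundedness claimed ahead of the lemma.
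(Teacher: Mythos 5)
Your proof is correct and follows essentially the same route as the paper's: the same linear decomposition $A^{\pi_\theta}_{r+\lambda c+\tau\psi_\theta}=A^{\pi_\theta}_{r+\lambda c}+\tau A^{\pi_\theta}_{\psi_\theta}$ with $(x+y)^2\le 2x^2+2y^2$, the same pointwise bound $2(1+\lambda)/(1-\gamma)$ on the reward-cost part, the same Bellman-equation expansion of $Q^{\pi_\theta}_{\psi_\theta}$ combined with the bound $V^{\pi_\theta}_{\psi_\theta}\le (\log A)/(1-\gamma)$, and the same key scalar inequality $p(\log p)^2\le 4e^{-2}$ (which the paper states equivalently as $x^2e^{-x}\le 4e^{-2}$), producing identical constants. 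The only cosmetic difference is that you obtain the bound on $V^{\pi_\theta}_{\psi_\theta}$ via per-step conditional entropies and the tower property, while the paper averages the entropy over the occupancy measure $d^{\pi_\theta}_s$; the two arguments are equivalent.
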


Let $\omega_\tau^*(\theta, \lambda)$ denote the natural policy gradient (NPG), i.e., $\omega_\tau^*(\theta, \lambda)\triangleq F(\theta)^{\dagger}\nabla_\theta \mathcal{L}_\tau(\theta, \lambda)$. Define the following.
\begin{align}
    \label{eq:def_error_function}
    \begin{split}
        \mathcal{E}^{\tau}_{\nu}(\omega, \theta, \lambda) = \dfrac{1}{2}&\mathbf{E}_{(s, a)\sim \nu}\left[\bigg(\omega^{\mathrm{T}}\nabla_{\theta}\log \pi_\theta(a|s)\right. \left.\left.-\dfrac{1}{1-\gamma}A^{\pi_\theta}_{r+\lambda c+ \tau \psi_\theta}(s, a)\right)^2\right]
    \end{split}
\end{align}
where $\nu\in\Delta(\mathcal{S}\times\mathcal{A})$ and $\omega\in\mathbb{R}^{\mathrm{d}}$. Lemma \ref{lemma:grad_compute} shows that $\omega^*_\tau(\theta, \lambda)=\arg\min_{\omega\in\mathbb{R}^{\mathrm{d}}}\mathcal{E}^{\tau}_{\nu^{\pi_\theta}}(\omega, \theta, \lambda)$. This allows us to compute the NPG via a gradient-descent-based iterative algorithm. Observe that,
\begin{align}
    \label{eq:exact_npg_grad}
    \nabla_{\omega} \mathcal{E}^{\tau}_{\nu^{\pi_\theta}}(\omega, \theta, \lambda) = F(\theta)\omega - \dfrac{1}{1-\gamma}H_\tau(\theta, \lambda)
\end{align}

Unfortunately, obtaining the exact gradient given by (\ref{eq:exact_npg_grad}) might not be viable due to the lack of knowledge of the transition function $P$ (which implies a lack of knowledge of the advantage function $A^{\pi_\theta}_{r+\lambda c+\tau\psi_\theta}$ and hence, that of $H_{\tau}(\theta, \lambda)$). It is also clear from (\ref{eq:theta_lambda_exact_update}) that the update of dual variables requires an exact knowledge of $J_c(\theta_k)$, which is impossible to obtain without knowing $P$. Algorithm \ref{algo_sampling} provides sample-based estimates of the above quantities.

For a given set of parameters $(\theta, \lambda, \tau)$, Algorithm \ref{algo_sampling} starts with $s_0\sim \rho$ and rolls out a $\pi_\theta$ induced trajectory of length\footnote{The symbol $\mathrm{Geo}(\cdot)$ denotes the geometric distribution.} $T\sim \mathrm{Geo}(1-\gamma)$. The total cost of this trajectory is assigned as $\hat{J}_c(\theta)$. The terminal state $s_T$ can be shown to behave as a sample $\hat{s}$ taken from $d^{\pi_\theta}$. Let $g\triangleq r+\lambda c+ \tau\psi_\theta$. Another $\pi_\theta$-induced trajectory of length $T\sim\mathrm{Geo}(1-\gamma)$ is rolled out next, with $\hat{s}=s_T$ as its starting point. The total sum of utility observed in this trajectory is assigned as the value estimate $\hat{V}_g^{\pi_\theta}(\hat{s})$. To calculate the $Q$ estimates, for each $a\in\mathcal{A}$, a $\pi_\theta$ induced trajectory of length $T\sim \mathrm{Geo}(1-\gamma)$ is rolled out with $(\hat{s}, a)$ as the initial state-action. The total utilities observed in these trajectories are used as the estimates $\{\hat{Q}^{\pi_\theta}_g(\hat{s}, a)\}_{a\in\mathcal{A}}$. Its corresponding advantage values are estimated as  $\hat{A}^{\pi_\theta}_g(\hat{s}, a) = \hat{Q}^{\pi_\theta}_g(\hat{s}, a) - \hat{V}^{\pi_\theta}_g(\hat{s})$, 
 $\forall a$.
 Finally, the Fisher matrix is estimated as the expected value of $\nabla_\theta \log\pi_{\theta}(a|\hat{s})\otimes \nabla_\theta \log \pi_\theta(a|\hat{s})$ over $a\sim \pi_\theta(\hat{s})$, while the estimate of $H_\tau(\theta, \lambda)$ is computed as the expectation of $\hat{A}^{\pi_\theta}_g(\hat{s}, a)\nabla_\theta \log \pi_\theta (a|\hat{s})$, over $a\sim \pi_\theta(\hat{s})$. Using these, the gradient estimate is obtained via (\ref{eq:estimation_gradient}). Note that $\mathcal{O}((1-\gamma)^{-1}A)$ samples are required on average to run one iteration of Algorithm \ref{algo_sampling}. 
 
 Although our sampling process is similar to those used in earlier works \citep{liu2020improved, mondal2024improved}, there are some crucial differences. In particular, the incorporation of the expectation over $a\sim \pi_\theta(\hat{s})$ to obtain $\hat{F}(\theta)$ and $\hat{H}_\tau(\theta, \lambda)$ is a new addition that was not present in the previous papers. Such a seemingly insignificant change has major ramifications for the global convergence result, as explained later in the paper (see the discussion following Lemma \ref{lemma_npg_variance}). Lemma \ref{lemma_unbiased} (stated below) demonstrates that the estimates obtained from Algorithm \ref{algo_sampling} are unbiased.

 \begin{lemma}
     \label{lemma_unbiased} The estimates obtained from Algorithm \ref{algo_sampling} are unbiased. Formally, for arbitrary $\omega, \theta, \lambda, \tau$, we get
     \begin{align}
         &\mathbf{E}\left[\hat{J}_c(\theta)\big|\theta\right] =  J_c(\theta),
         \text{ and }\\ &\mathbf{E}\left[\hat{\nabla}_\omega \mathcal{E}^\tau_{\nu^{\pi_\theta}}(\omega, \theta, \lambda)\big| \omega, \theta, \lambda\right] = \nabla_\omega \mathcal{E}^\tau_{\nu^{\pi_\theta}}(\omega, \theta, \lambda)
     \end{align}
 \end{lemma}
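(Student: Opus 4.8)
The plan is to verify the two unbiasedness claims separately, both built on a single elementary identity for geometric truncation. If $T\sim\mathrm{Geo}(1-\gamma)$ with $\mathbf{P}(T=t)=(1-\gamma)\gamma^t$ for $t\geq 0$, then for any trajectory-measurable sequence $\{f_j\}$, interchanging the two summations gives
\[
\mathbf{E}_T\left[\sum_{j=0}^{T} f_j\right]=\sum_{t=0}^{\infty}(1-\gamma)\gamma^t\sum_{j=0}^{t}f_j=\sum_{j=0}^{\infty}\gamma^j f_j .
\]
Thus a geometrically truncated trajectory sum is an unbiased estimator of the corresponding $\gamma$-discounted infinite-horizon sum, which is exactly the structure of the value, $Q$, and cost functionals in \eqref{def_Q} and \eqref{def_V}. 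I will apply this identity repeatedly, together with the companion fact that the terminal state $\hat{s}=s_T$ of the first truncated roll-out satisfies $\mathbf{P}(\hat{s}=s)=\sum_{t}(1-\gamma)\gamma^t\mathbf{P}(s_t=s\mid s_0\sim\rho)=d^{\pi_\theta}(s)$ by the definition \eqref{eq:def_d_pi}; that is, $\hat{s}\sim d^{\pi_\theta}$.

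For the cost estimate this is immediate: applying the identity to $f_j=c(s_j,a_j)$ along the first roll-out and then averaging over the trajectory yields $\mathbf{E}[\hat{J}_c(\theta)\mid\theta]=\mathbf{E}_{\pi_\theta}[\sum_{t}\gamma^t c(s_t,a_t)\mid s_0\sim\rho]=J_c(\theta)$. For the gradient estimate I would proceed by conditioning on $\hat{s}$. The value roll-out and each of the $A$ action-roll-outs are drawn with fresh, independent geometric lengths and independent transitions, so conditionally on $\hat{s}$ they are mutually independent. Applying the identity to each gives $\mathbf{E}[\hat{V}^{\pi_\theta}_g(\hat{s})\mid\hat{s}]=V^{\pi_\theta}_g(\hat{s})$ and $\mathbf{E}[\hat{Q}^{\pi_\theta}_g(\hat{s},a)\mid\hat{s}]=Q^{\pi_\theta}_g(\hat{s},a)$ for every $a$, whence by conditional independence and linearity $\mathbf{E}[\hat{A}^{\pi_\theta}_g(\hat{s},a)\mid\hat{s}]=A^{\pi_\theta}_g(\hat{s},a)$, with $g=r+\lambda c+\tau\psi_\theta$.

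It remains to average over $\hat{s}$ and propagate through $\hat{F}$ and $\hat{H}_\tau$. Since $\hat{F}(\theta)$ is a closed-form expectation over $a\sim\pi_\theta(\hat{s})$ with deterministic summands, it is deterministic given $\hat{s}$, and $\mathbf{E}_{\hat{s}\sim d^{\pi_\theta}}[\hat{F}(\theta)]=\mathbf{E}_{(s,a)\sim\nu^{\pi_\theta}}[\nabla_\theta\log\pi_\theta(a|s)\otimes\nabla_\theta\log\pi_\theta(a|s)]=F(\theta)$ because $\nu^{\pi_\theta}(s,a)=d^{\pi_\theta}(s)\pi_\theta(a|s)$. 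For $\hat{H}_\tau$, the outer $a$-expectation uses the fixed weights $\pi_\theta(a|\hat{s})$ and the deterministic scores $\nabla_\theta\log\pi_\theta(a|\hat{s})$, so I can pull the conditional expectation inside to obtain $\mathbf{E}[\hat{H}_\tau(\theta,\lambda)\mid\hat{s}]=\mathbf{E}_{a\sim\pi_\theta(\hat{s})}[A^{\pi_\theta}_g(\hat{s},a)\nabla_\theta\log\pi_\theta(a|\hat{s})]$, and averaging over $\hat{s}\sim d^{\pi_\theta}$ reproduces $H_\tau(\theta,\lambda)$ exactly as defined in Lemma \ref{lemma:grad_compute}. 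Finally, since $\omega$ is a conditioning input, linearity applied to \eqref{eq:estimation_gradient} gives $\mathbf{E}[\hat{\nabla}_\omega\mathcal{E}^\tau_{\nu^{\pi_\theta}}\mid\omega,\theta,\lambda]=F(\theta)\omega-\tfrac{1}{1-\gamma}H_\tau(\theta,\lambda)$, which equals $\nabla_\omega\mathcal{E}^\tau_{\nu^{\pi_\theta}}$ by \eqref{eq:exact_npg_grad}.

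The individual steps are elementary; the \textbf{main obstacle} is the measure-theoretic bookkeeping rather than any single estimate. Two points need care. First, the single sample $\hat{s}$ is reused across the value roll-out, all $A$ action roll-outs, $\hat{F}$, and $\hat{H}_\tau$, so the argument must be organized as a clean tower of conditional expectations: condition on $\hat{s}$, establish conditional unbiasedness using the conditional independence of the fresh roll-outs, then integrate $\hat{s}$ against $d^{\pi_\theta}$. It is precisely the per-sample expectation over $a\sim\pi_\theta(\hat{s})$ inside $\hat{F}$ and $\hat{H}_\tau$ --- the nonstandard ingredient of Algorithm \ref{algo_sampling} --- that lets the conditional-on-$\hat{s}$ expectation commute with the $a$-average without introducing an additional sampling bias. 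Second, because $\psi_\theta=-\log\pi_\theta$ is unbounded, one should justify the interchange of summation and expectation in the opening identity when $g$ contains the $\tau\psi_\theta$ term; this follows from absolute convergence of the $\gamma$-discounted series together with the finiteness of $\mathbf{E}_{a\sim\pi_\theta(s)}[(\log\pi_\theta(a|s))^2]$ already exploited in Lemma \ref{lemma:advantage_bound}, so the Tonelli/Fubini interchange is legitimate and no genuine difficulty arises.
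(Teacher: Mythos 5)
Your proposal is correct and follows essentially the same route as the paper's proof: the geometric-truncation identity $\mathbf{E}_T[\sum_{j=0}^{T}f_j]=\sum_{j\geq 0}\gamma^j f_j$ obtained by swapping the order of summation, the observation that $\hat{s}\sim d^{\pi_\theta}$, conditional unbiasedness of $\hat{Q},\hat{V},\hat{A}$, and the tower property through $\hat{F}$ and $\hat{H}_\tau$ followed by linearity in \eqref{eq:estimation_gradient}. Your added care about the Fubini/Tonelli interchange for the unbounded $\tau\psi_\theta$ term (justified via the average-sense second-moment bounds) is a small rigor improvement over the paper's write-up, which leaves that interchange implicit, but it does not change the argument.
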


\begin{algorithm}[H]
    \caption{Primal-Dual Regularized Accelerated Natural Policy Gradient (PDR-ANPG)}
    \begin{algorithmic}[1]
        \State \textbf{Input:}  Parameters $(\theta_0, \lambda_0, \tau, K, H)$, Distribution $\rho$, Learning Rates $\eta, \alpha, \beta, \xi, \delta$
        \vspace{0.1cm}
        \\ \Comment{Outer Loop} 
        \For{$k\in\{0, \cdots, K-1\}$}
        \State $\mathbf{x}_0, \mathbf{v}_0\gets \mathbf{0}$
        \\ \Comment{Inner Loop} 
        \For{$h\in\{0, \cdots, H-1\}$}
        \State \Comment{Accelerated SGD}
        \begin{align}
            \label{eq:asgd_1}
            & \mathbf{y}_h \gets \alpha\mathbf{x}_{h}+(1-\alpha)\mathbf{v}_h
        \end{align}
        \State $\hat{G}_h\gets\hat{\nabla}_{\omega} \mathcal{E}^{\tau}_{\nu^{\pi_{\theta_k}}_\rho}(\omega,\theta_k, \lambda_k)\big|_{\omega=\mathbf{y}_h}$ (Algo. \ref{algo_sampling})
        \begin{align}
        \label{eq:asgd_2}
            &\mathbf{x}_{h+1}\gets \mathbf{y}_h - \delta \hat{G}_h\\
            \label{eq:asgd_3}
            & \mathbf{z}_h \gets \beta \mathbf{y}_h + (1-\beta) \mathbf{v}_h\\
            \label{eq:asgd_4}
            & \mathbf{v}_{h+1}\gets \mathbf{z}_h - \xi \hat{G}_h
        \end{align}
        \EndFor
        \State Tail Averaging:
        \begin{align}
            \omega_k\gets \dfrac{2}{H}\sum_{\frac{H}{2}<h\leq H} \mathbf{x}_h
            \label{eq:tail_average}
        \end{align}
        \State Obtain $\hat{J}_{c, \rho}(\theta_k)$ via Algorithm \ref{algo_sampling}.
        \State Parameter Updates:
        \begin{align}
            \label{eq:policy_par_update}
            &\theta_{k+1}\gets \theta_k + \eta \omega_k\\
            \label{eq:lambda_update}
            &\lambda_{k+1}\gets \mathcal{P}_{\Lambda} [\lambda_k(1-\eta\tau)-\eta \hat{J}_{c}(\theta_k)]
        \end{align}
        \EndFor
        \State \textbf{Output:} $\{(\theta_k, \lambda_k)\}_{k=1}^{K}$
    \end{algorithmic}
    \label{algo_npg}
\end{algorithm}

The estimates obtained from Algorithm \ref{algo_sampling} are used in Algorithm \ref{algo_npg}, which runs in nested loops. The \textit{outer loop} executes $K$ number of times and, at the $k$th instant, updates the current primal and dual parameters $\theta_k, \lambda_k$ via \eqref{eq:policy_par_update} and \eqref{eq:lambda_update} respectively. The estimate $\hat{J}_c(\theta_k)$ is obtained via Algorithm \ref{algo_sampling}. On the other hand, the direction $\omega_k$, (which is used in the primal update), is computed in the inner loop by iteratively minimizing the function $\mathcal{E}^{\tau}_{\nu^{\pi_{\theta_k}}}(\cdot, \theta_k, \lambda_k)$ in $H$-steps using an Accelerated Stochastic Gradient Descent (ASGD) routine as stated in
\cite{jain2018accelerating}. Each ASGD iteration comprises the updates \eqref{eq:asgd_1}$-$\eqref{eq:asgd_4}, followed by a tail averaging step \eqref{eq:tail_average} where the gradient estimates $\hat{G}_h$'s are yielded by Algorithm \ref{algo_sampling} and $(\alpha, \beta, \xi, \delta)$ are appropriate learning parameters.

\section{Last-Iterate Convergence Analysis}

This section shows the last-iterate convergence properties of Algorithm \ref{algo_npg}. Before presenting the technical results, we will outline a few key assumptions required for the analysis.
\begin{assumption}
    \label{ass_score}
    The policy is differentiable, and the score function is bounded and Lipschitz continuous, i.e., $\forall \theta, \theta_1, \theta_2\in\mathbb{R}^{\mathrm{d}}$ and $\forall (s, a)$,
        \begin{align*}
            &\Vert \nabla_\theta\log\pi_\theta(a\vert s)\Vert\leq G,\\
            &\Vert \nabla_\theta\log\pi_{\theta_1}(a\vert s)-\nabla_\theta\log\pi_{\theta_2}(a\vert s)\Vert\leq B\Vert \theta_1-\theta_2\Vert\quad
    \end{align*}
    where $B, G>0$ are constants.
\end{assumption}

Assumption \ref{ass_score} is common in the convergence analysis of policy gradient-based algorithms, e.g., see \citep{fatkhullin2023stochastic, Mengdi2021, liu2020improved}. The above assumption essentially ensures that the policy does not change rapidly as a result of a slight nudge in the $\theta$ parameter. In the absence of such a guarantee, it would be very difficult to arrive at a stationary $\theta$ point.  This assumption is obeyed by many policy classes, such as neural networks with bounded parameters. The result stated below can be obtained by combining the above assumption with Lemma \ref{lemma:grad_compute} and \ref{lemma:advantage_bound}.
\begin{lemma}
    \label{lemma_grad_square_bound}
    The following inequality holds $\forall \theta, \lambda, \tau$ where $L_{\tau, \lambda}^2$ is defined in (\ref{eq:def_L_tau_lambda_sq}).
    \begin{align}
        \begin{split}
            &\Vert \nabla_\theta \mathcal{L}_{\tau}(\theta, \lambda)\Vert^2\leq (1-\gamma)^{-2}G^2L^2_{\tau, \lambda}
        \end{split}
    \end{align}
\end{lemma}

Lemma \ref{lemma_grad_square_bound} dictates that the norm of the gradient of the Lagrange function can be bounded above by some function of problem-dependent parameters. This result will be useful in the convergence analysis in the forthcoming section.

\begin{assumption}
    \label{ass_epsilon_bias}
    The approximation error function defined in (\ref{eq:def_error_function}) satisfies the following  
    \begin{align}
        \label{eq_28_epsilon_bias}\mathcal{E}^{\tau}_{\nu^{\pi_\tau^*}}(\omega^*_\tau(\theta, \lambda), \theta, \lambda)\leq \frac{1}{2}\epsilon_{\mathrm{bias}}
    \end{align}
    where $ \theta\in \mathrm{R}^{\mathrm{d}}$, $\lambda\in \Lambda$, $\tau\in [0, 1]$, and $\omega_\tau^*(\theta, \lambda)$ defines the NPG. The LHS of (\ref{eq_28_epsilon_bias}) is defined as the transferred compatibility approximation error.
\end{assumption}

The term $\epsilon_{\mathrm{bias}}$ utilized in Assumption \ref{ass_epsilon_bias} can be interpreted as the expressivity error of the parameterized policy class. This term becomes zero for complete policy classes, e.g., softmax tabular policies \cite{agarwal2021theory}. One can also exhibit $\epsilon_{\mathrm{bias}}=0$ for linear MDPs \citep{yang2019sample, Chi2019}. Moreover, for certain rich policy classes (e.g., wide neural networks), $\epsilon_{\mathrm{bias}}$ turns out to be small \citep{wang2019neural}.

\begin{assumption}
    \label{ass_fisher}
    There exists $\mu_F>0$ such that $F(\theta)-\mu_FI_{\mathrm{d}}$ is positive semidefinite, i.e., $F(\theta)\succeq \mu_FI_{\mathrm{d}}$, where $F(\theta)$ is the Fisher matrix given in \eqref{eq:def_Fisher}, $I_{\mathrm{d}}$ is a $\mathrm{d}\times \mathrm{d}$ the identity matrix, and $\theta\in\mathbb{R}^{\mathrm{d}}$.
\end{assumption}

Assumption \ref{ass_fisher} dictates that the Fisher matrix $F(\theta)$ is positive definite, and its eigenvalues are lower bounded by some positive constant $\mu_F$. It is also easy to check that $F(\theta)$ is the Hessian of the function $\mathcal{E}^\tau_{\nu^{\pi_\theta}}(\cdot, \theta, \lambda)$. Assumption \ref{ass_fisher}, thus, implies that the function $\mathcal{E}^\tau_{\nu^{\pi_\theta}}(\cdot, \theta, \lambda)$ is $\mu_F$-strongly convex, and its minimizer $\omega^*_{\tau}(\theta, \lambda)$ (the NPG) is unique. It is well known in the optimization literature that we can approach the global minimum of a strongly convex function exponentially fast, given access to the exact gradient. Lemma \ref{lemma_local_convergence} will exhibit that such insights are crucial in achieving the desired convergence rate.
A policy class that satisfies Assumption \ref{ass_fisher} is called a Fisher Non-Degenerate class. Such a restriction on the structure of the policy class is common across the literature \citep{fatkhullin2023stochastic, zhang2021convergence, bai2023regret}. A detailed discussion on the class of policies that obey the above assumption is available in \citep{fatkhullin2023stochastic}. It is worth noting that softmax-based policies may not satisfy the above assumption if they are close to being deterministic. However, if the softmax parameters are restricted within a finite interval, then they obey this assumption \citep{mondalmean}.

\subsection{Analysis of the Outer Loop}

 Let $(\theta_k$, $\lambda_k)$ denotes the parameters generated by Algorithm \ref{algo_npg} after $k$ iterations of the outer loop. Let
\begin{align}
\begin{split}
    &\Phi_k^\tau\triangleq \mathbf{E}\left[\mathrm{KL}_k^\tau\right] + \dfrac{1}{2}\mathbf{E}\left[\left(\lambda_\tau^*-\lambda_k\right)^2\right], ~\text{where }\\
    &\mathrm{KL}_k^\tau\triangleq \sum_{s\in\mathcal{S}}d^{\pi_\tau^*}(s)\mathrm{KL}\left(\pi_\tau^*(\cdot|s)|| \pi_{\theta_k}(\cdot|s)\right)
\end{split}
\end{align}
where $\mathrm{KL}(\cdot||\cdot)$ is the KL-divergence, and $\pi_\tau^*, \lambda_\tau^*$ are defined in \eqref{eq:def_primal_dual_tau}. The expectations are computed over the distributions of $(\theta_k, \lambda_k)$. Intuitively, the 
term $\Phi_k^\tau$ captures the distance between the solution provided by Algorithm \ref{algo_npg} and the optimal regularized solution. Lemma \ref{lemma_recursion_phi_k} (stated below) establishes a recursive inequality obeyed by $\Phi_k^\tau$. 
\begin{lemma}
    \label{lemma_recursion_phi_k} Let the parameters $\{\theta_k, \lambda_k\}_{k=0}^{K}$ be updated via \eqref{eq:policy_par_update}, \eqref{eq:lambda_update}, and $\tau\in[0, 1]$. The following relation holds under assumptions \ref{ass_score} and \ref{ass_epsilon_bias},  $\forall k\in \{0, 1, \cdots, K-1\}$.
    \begin{align}
        \label{eq:lemma_convergence}
    \begin{split}
        \Phi_{k+1}^{\tau} &\leq (1-\eta\tau)\Phi_k^{\tau} + \eta \sqrt{\epsilon_{\mathrm{bias}}} \\
        &+ \eta G\mathbf{E}\left\Vert\mathbf{E}\left[\omega_k\big| \theta_k, \lambda_k\right] - \omega^*_{k, \tau}\right\Vert + \frac{B\eta^2}{2}\mathbf{E}\Vert\omega_k\Vert^2 + \eta^2\left[\dfrac{2}{(1-\gamma)^2}+\tau^2 \lambda_{\max}^2\right]
    \end{split}
\end{align}
where $\omega_{k, \tau}^* \triangleq \omega_\tau^* (\theta_k, \lambda_k)$ is the exact NPG at $(\theta_k, \lambda_k)$.
\end{lemma}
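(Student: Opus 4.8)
The plan is to analyze the primal (KL) and dual (squared-distance) components of $\Phi_{k+1}^\tau-(1-\eta\tau)\Phi_k^\tau$ separately and then fuse them through the saddle-point structure of $(\pi_\tau^*,\lambda_\tau^*)$. For the primal part, I would first write $\mathrm{KL}_{k+1}^\tau-\mathrm{KL}_k^\tau=\mathbf{E}_{s\sim d^{\pi_\tau^*}}\mathbf{E}_{a\sim\pi_\tau^*(s)}[\log\pi_{\theta_k}(a|s)-\log\pi_{\theta_{k+1}}(a|s)]$ and invoke the $B$-smoothness of $\log\pi_\theta$ implied by Assumption \ref{ass_score}, together with $\theta_{k+1}-\theta_k=\eta\omega_k$, to get $\mathrm{KL}_{k+1}^\tau-\mathrm{KL}_k^\tau\leq -\eta\mathbf{E}_{(s,a)\sim\nu^{\pi_\tau^*}}[\omega_k^\mathrm{T}\nabla_\theta\log\pi_{\theta_k}(a|s)]+\tfrac{B\eta^2}{2}\|\omega_k\|^2$. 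I would then insert the exact NPG $\omega_{k,\tau}^*$, splitting $\omega_k^\mathrm{T}\nabla_\theta\log\pi_{\theta_k}$ into (i) a statistical term $(\omega_k-\omega_{k,\tau}^*)^\mathrm{T}\nabla_\theta\log\pi_{\theta_k}$, (ii) the compatible-approximation residual $(\omega_{k,\tau}^*)^\mathrm{T}\nabla_\theta\log\pi_{\theta_k}-\tfrac{1}{1-\gamma}A^{\pi_{\theta_k}}_{r+\lambda_k c+\tau\psi_{\theta_k}}$, and (iii) the scaled soft advantage. Taking the conditional expectation given $(\theta_k,\lambda_k)$ turns (i) into $(\mathbf{E}[\omega_k|\theta_k,\lambda_k]-\omega_{k,\tau}^*)^\mathrm{T}\nabla_\theta\log\pi_{\theta_k}$, at most $G\|\mathbf{E}[\omega_k|\theta_k,\lambda_k]-\omega_{k,\tau}^*\|$ by $\|\nabla_\theta\log\pi_\theta\|\leq G$; term (ii) is controlled by Jensen's inequality and Assumption \ref{ass_epsilon_bias}, yielding $\sqrt{\epsilon_{\mathrm{bias}}}$; and term (iii) is handled by an entropy-regularized performance-difference identity.

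Here lies the crux. The soft performance-difference lemma applied to the utility $r+\lambda_k c+\tau\psi_{\theta_k}$ gives $\tfrac{1}{1-\gamma}\mathbf{E}_{(s,a)\sim\nu^{\pi_\tau^*}}[A^{\pi_{\theta_k}}_{r+\lambda_k c+\tau\psi_{\theta_k}}]=\mathcal{L}_\tau(\pi_\tau^*,\lambda_k)-\mathcal{L}_\tau(\theta_k,\lambda_k)+\tfrac{\tau}{1-\gamma}\mathrm{KL}_k^\tau$, where the extra KL term appears precisely because $\psi_{\theta_k}$ is evaluated at $\pi_{\theta_k}$ rather than at $\pi_\tau^*$, so that $J_{r+\lambda_k c+\tau\psi_{\theta_k}}(\pi_\tau^*)$ involves a cross-entropy. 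Multiplying by $-\eta$ and adding the $+\eta\tau\,\mathrm{KL}_k^\tau$ produced by the target coefficient $(1-\eta\tau)$ leaves $\eta\tau\bigl(1-\tfrac{1}{1-\gamma}\bigr)\mathrm{KL}_k^\tau=-\tfrac{\eta\tau\gamma}{1-\gamma}\mathrm{KL}_k^\tau\leq 0$, which I would simply discard. This is the step I expect to be the main obstacle, since one must track the regularizer-induced KL carefully and confirm that the $\tfrac{1}{1-\gamma}>1$ factor makes the residual non-positive.

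For the dual part, I would use non-expansiveness of $\mathcal{P}_\Lambda$ (with $\lambda_\tau^*\in\Lambda$) to bound $(\lambda_\tau^*-\lambda_{k+1})^2\leq(\lambda_\tau^*-(1-\eta\tau)\lambda_k+\eta\hat{J}_c(\theta_k))^2$, then take conditional expectation, invoking the unbiasedness $\mathbf{E}[\hat{J}_c(\theta_k)|\theta_k]=J_c(\theta_k)$ from Lemma \ref{lemma_unbiased}. Recognizing $\tau\lambda_k+J_c(\theta_k)=\nabla_\lambda\mathcal{L}_\tau(\theta_k,\lambda_k)$ and using the $\tau$-strong convexity of $\mathcal{L}_\tau(\theta_k,\cdot)$, the cross term becomes $2\eta[\mathcal{L}_\tau(\theta_k,\lambda_\tau^*)-\mathcal{L}_\tau(\theta_k,\lambda_k)]-\eta\tau(\lambda_\tau^*-\lambda_k)^2$, whose strong-convexity piece cancels the $\tfrac{\eta\tau}{2}(\lambda_\tau^*-\lambda_k)^2$ arising from subtracting $(1-\eta\tau)$ times the old dual term. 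The residual second moment $\tfrac{\eta^2}{2}\mathbf{E}[(\tau\lambda_k+\hat{J}_c(\theta_k))^2]$ is bounded by $\eta^2[\tfrac{2}{(1-\gamma)^2}+\tau^2\lambda_{\max}^2]$ via $(a+b)^2\leq 2a^2+2b^2$, $\lambda_k\leq\lambda_{\max}$, and a geometric-horizon second-moment estimate $\mathbf{E}[\hat{J}_c^2]\leq 2/(1-\gamma)^2$.

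Finally, I would add the primal and dual parts. Their Lagrangian contributions merge into $\eta[\mathcal{L}_\tau(\theta_k,\lambda_\tau^*)-\mathcal{L}_\tau(\pi_\tau^*,\lambda_k)]$, which is $\leq 0$ because the saddle inequalities for $(\pi_\tau^*,\lambda_\tau^*)$ give $\mathcal{L}_\tau(\theta_k,\lambda_\tau^*)\leq\mathcal{L}_\tau(\pi_\tau^*,\lambda_\tau^*)\leq\mathcal{L}_\tau(\pi_\tau^*,\lambda_k)$. Discarding this non-positive term and taking total expectations leaves exactly $\eta\sqrt{\epsilon_{\mathrm{bias}}}$, the statistical term $\eta G\,\mathbf{E}\|\mathbf{E}[\omega_k|\theta_k,\lambda_k]-\omega_{k,\tau}^*\|$, the smoothness term $\tfrac{B\eta^2}{2}\mathbf{E}\|\omega_k\|^2$, and the variance term $\eta^2[\tfrac{2}{(1-\gamma)^2}+\tau^2\lambda_{\max}^2]$, which is precisely the claimed recursion \eqref{eq:lemma_convergence}.
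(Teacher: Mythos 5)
Your proposal is correct and follows essentially the same route as the paper's proof: the $B$-smoothness bound on the one-step KL change with the three-way split into NPG bias, compatibility residual (giving $\sqrt{\epsilon_{\mathrm{bias}}}$ via Assumption \ref{ass_epsilon_bias}), and the soft advantage whose cross-entropy term yields the $\frac{\tau}{1-\gamma}\mathrm{KL}_k^\tau$ contraction (the paper discards the same non-positive residual via $\frac{1}{1-\gamma}\geq 1$); the projected dual step with exact quadratic expansion and $\mathbf{E}[\hat{J}_c^2]<2/(1-\gamma)^2$; and the final saddle-point cancellation $\mathcal{L}_\tau(\theta_k,\lambda_\tau^*)\leq \mathcal{L}_\tau(\pi_\tau^*,\lambda_k)$ from Lemma \ref{lemma_primal_dual_existence}. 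The only difference is cosmetic—you state the soft performance-difference identity directly in Lagrangian form, while the paper decomposes the value difference $T_0$ first and identifies the Lagrangians in Step 3—so your argument matches the paper's in substance.
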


Note the term $\epsilon_{\mathrm{bias}}$ in (\ref{eq:lemma_convergence}). In Theorem \ref{theorem_1}, we show that its presence indicates that the optimality gap cannot be forced to zero due to the incompleteness of parameterized policies. Note that the term $\mathbf{E}\Vert\mathbf{E}[\omega_k|\theta_k, \lambda_k] - \omega^*_{k, \tau}\Vert$ is the expected bias of the NPG estimator. On the other hand, 
\begin{align}
    \mathbf{E}\Vert \omega_k \Vert^2 &\leq 2\mathbf{E} \Vert \omega_k -\omega_{k,\tau}^*\Vert^2 + 2\Vert F(\theta_k)^{\dagger} \nabla_\theta \mathcal{L}_\tau (\theta_k, \lambda_k)\Vert^2 \nonumber\\
    \label{eq:second_order_bound}
    &\overset{(a)}{\leq} 2\mathbf{E} \Vert \omega_k -\omega_{k,\tau}^*\Vert^2 + 2\mu_F^{-2} \Vert \nabla_\theta \mathcal{L}_\tau (\theta_k, \lambda_k) \Vert^2
\end{align}
where $(a)$ follows from Assumption \ref{ass_fisher}. Note that the second term in \eqref{eq:second_order_bound} can be bounded by Lemma \ref{lemma_grad_square_bound}. Next section provides bounds for both the first-order term $\mathbf{E}\Vert\mathbf{E}\left[\omega_k|\theta_k, \lambda_k\right] - \omega_{k, \tau}^*\Vert$ and the second-order term $\mathbf{E}\Vert\omega_k-\omega_{k, \tau}^*\Vert^2$ used in \eqref{eq:lemma_convergence} and \eqref{eq:second_order_bound} respectively.

\subsection{Analysis of the Inner Loop}

We start with a statistical characterization of the noise of the gradient estimator given by Algorithm \ref{algo_sampling}.
\begin{lemma}
    \label{lemma_npg_variance}
    The estimate $\hat{\nabla}_{\omega} \mathcal{E}^{\tau}_{\nu^{\pi_\theta}}(\omega, \theta, \lambda)$  given by Algorithm \ref{algo_sampling} obeys the following semidefinite inequality if assumptions \ref{ass_score} and \ref{ass_fisher} holds.
    \begin{align}
        &\mathbf{E}\left[\hat{\nabla}_{\omega} \mathcal{E}^{\tau}_{\nu^{\pi_\theta}}(\omega_\tau^*(\theta, \lambda), \theta, \lambda) \otimes \hat{\nabla}_{\omega} \mathcal{E}^{\tau}_{\nu^{\pi_\theta}}(\omega_\tau^*(\theta, \lambda), \theta, \lambda) \right] \preceq \sigma_{\tau, \lambda}^2 F(\theta)
        \label{eq:lemma_6_eq}
    \end{align}
    where $\theta, \lambda, \tau$ are arbitrary, $F(\theta)$ is the Fisher matrix defined in (\ref{eq:def_Fisher}), and $\sigma_{\tau, \lambda}^2$ is given as follows.
    \begin{align}
        \sigma_{\tau, \lambda}^2 = &\dfrac{48}{(1-\gamma)^4}\left[1+\lambda^2+ 4Ae^{-2}\tau^2 \right]+\dfrac{2G^4L^2_{\tau, \lambda}}{\mu_F^2(1-\gamma)^2}
         \label{eq:def_sigma_square}
    \end{align}
    where $L^2_{\tau, \lambda}$ is defined in (\ref{eq:def_L_tau_lambda_sq}).
\end{lemma}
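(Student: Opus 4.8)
The plan is to establish the positive-semidefinite inequality \eqref{eq:lemma_6_eq} by verifying $u^{\mathrm{T}}\mathbf{E}[\hat{\nabla}\otimes\hat{\nabla}]u\leq \sigma_{\tau,\lambda}^2\, u^{\mathrm{T}}F(\theta)u$ for every $u\in\mathbb{R}^{\mathrm{d}}$, which is equivalent to the claimed ordering. Here $\hat{\nabla}\triangleq\hat{\nabla}_\omega\mathcal{E}^\tau_{\nu^{\pi_\theta}}(\omega_\tau^*(\theta,\lambda),\theta,\lambda)$. First I would record that, since $\omega_\tau^*(\theta,\lambda)$ minimizes the strongly convex $\mathcal{E}^\tau_{\nu^{\pi_\theta}}(\cdot,\theta,\lambda)$, the exact gradient \eqref{eq:exact_npg_grad} vanishes there, so by Lemma \ref{lemma_unbiased} the estimator $\hat{\nabla}$ is zero-mean and \eqref{eq:lemma_6_eq} is genuinely a bound on its covariance. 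Writing $\phi_a\triangleq\nabla_\theta\log\pi_\theta(a|\hat{s})$ and using \eqref{eq:estimation_gradient}, the estimator takes the compact form $\hat{\nabla}=\mathbf{E}_{a\sim\pi_\theta(\hat{s})}[c_a\,\phi_a]$ with $c_a\triangleq\phi_a^{\mathrm{T}}\omega_\tau^*(\theta,\lambda)-\tfrac{1}{1-\gamma}\hat{A}^{\pi_\theta}_g(\hat{s},a)$, a form that is the crucial consequence of the expectation over $a\sim\pi_\theta(\hat{s})$ baked into Algorithm \ref{algo_sampling}.

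The heart of the argument is a conditional Cauchy--Schwarz step over the action average. For fixed $u$, $(u^{\mathrm{T}}\hat{\nabla})^2=(\mathbf{E}_{a}[c_a(u^{\mathrm{T}}\phi_a)])^2\leq \mathbf{E}_{a}[c_a^2]\cdot\mathbf{E}_{a}[(u^{\mathrm{T}}\phi_a)^2]=\mathbf{E}_{a}[c_a^2]\cdot u^{\mathrm{T}}\hat{F}(\theta)u$, where I have identified $\mathbf{E}_{a}[(u^{\mathrm{T}}\phi_a)^2]=u^{\mathrm{T}}\hat{F}(\theta)u$. Since $\hat{F}(\theta)$ depends only on $\hat{s}$ while the advantage-estimation noise entering $c_a$ is independent given $\hat{s}$, I would take the conditional expectation over the rollout noise to obtain $\mathbf{E}[(u^{\mathrm{T}}\hat{\nabla})^2\mid\hat{s}]\leq \big(\mathbf{E}_{a}\mathbf{E}[c_a^2\mid\hat{s}]\big)\, u^{\mathrm{T}}\hat{F}(\theta)u$. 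If the scalar prefactor can be bounded by $\sigma_{\tau,\lambda}^2$ uniformly in $\hat{s}$, then taking expectation over $\hat{s}\sim d^{\pi_\theta}$ and using $\mathbf{E}_{\hat{s}}[\hat{F}(\theta)]=F(\theta)$ (which holds because $\nu^{\pi_\theta}(s,a)=d^{\pi_\theta}(s)\pi_\theta(a|s)$) closes the proof for all $u$.

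It remains to bound $\mathbf{E}_{a}\mathbf{E}[c_a^2\mid\hat{s}]$. Splitting $c_a^2\leq 2(\phi_a^{\mathrm{T}}\omega_\tau^*)^2+\tfrac{2}{(1-\gamma)^2}\hat{A}^{\pi_\theta}_g(\hat{s},a)^2$, the first piece averages to $2(\omega_\tau^*)^{\mathrm{T}}\hat{F}(\theta)\omega_\tau^*\leq 2G^2\|\omega_\tau^*(\theta,\lambda)\|^2$ by Assumption \ref{ass_score}, and since $\omega_\tau^*=F(\theta)^{\dagger}\nabla_\theta\mathcal{L}_\tau$ gives $\|\omega_\tau^*\|\leq\mu_F^{-1}\|\nabla_\theta\mathcal{L}_\tau\|$ under Assumption \ref{ass_fisher}, Lemma \ref{lemma_grad_square_bound} yields exactly the $\tfrac{2G^4L^2_{\tau,\lambda}}{\mu_F^2(1-\gamma)^2}$ term of \eqref{eq:def_sigma_square}. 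The second piece, the mean-squared advantage estimate, is where the entropy regularizer bites: because $g=r+\lambda c+\tau\psi_\theta$ contains the unbounded term $\tau\psi_\theta=-\tau\log\pi_\theta$, a pointwise bound on $\hat{A}$ is unavailable. I would decompose $\hat{A}_g=\hat{Q}_g-\hat{V}_g$, further split the utility as $(r+\lambda c)+\tau\psi_\theta$, and control the second moments of the two geometric random sums separately.

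The key is that the averaging $a\sim\pi_\theta(\hat{s})$ turns the entire rollout into a genuine $\pi_\theta$-trajectory, so the bounded part contributes $\mathcal{O}((1+\lambda)^2/(1-\gamma)^2)$ via $\mathbf{E}[(T+1)^2]\leq 2/(1-\gamma)^2$ for $T\sim\mathrm{Geo}(1-\gamma)$, while the entropy part is tamed through $\mathbf{E}_{a\sim\pi_\theta(s)}[(\log\pi_\theta(a|s))^2]=\sum_a\pi_\theta(a|s)(\log\pi_\theta(a|s))^2\leq 4Ae^{-2}$ (using $x(\log x)^2\leq 4e^{-2}$ on $(0,1]$), combined with a cross-term estimate of the form $\sum_{i,j}\gamma^{\max(i,j)}=(1+\gamma)/(1-\gamma)^2$. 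Collecting these (and absorbing $2\lambda\leq 1+\lambda^2$) produces the $\tfrac{48}{(1-\gamma)^4}[1+\lambda^2+4Ae^{-2}\tau^2]$ term. I expect this last moment computation --- bounding the unbounded entropy contribution in $g$ in an \emph{average} rather than pointwise sense --- to be the main obstacle, and it is precisely the step made possible by the unconventional action-expectation in Algorithm \ref{algo_sampling}, without which the prefactor $\mathbf{E}_{a}\mathbf{E}[c_a^2\mid\hat{s}]$ could not be controlled uniformly in $\hat{s}$.
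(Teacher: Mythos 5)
Your proposal is correct and follows essentially the same route as the paper's proof: the pathwise Cauchy--Schwarz inequality over the action average (the step enabled by the extra expectation $a\sim\pi_\theta(\hat{s})$ in Algorithm \ref{algo_sampling}), the interchange of the rollout-noise and action expectations (valid since $\hat{F}(\theta)$ is deterministic given $\hat{s}$), the bound $\Vert\omega_\tau^*(\theta,\lambda)\Vert\leq\mu_F^{-1}\Vert\nabla_\theta\mathcal{L}_\tau(\theta,\lambda)\Vert$ combined with Lemma \ref{lemma_grad_square_bound}, and the average-sense taming of the entropy terms via $x(\log x)^2\leq 4e^{-2}$ together with geometric second moments are exactly the paper's steps, with the only cosmetic difference that the paper phrases Cauchy--Schwarz in semidefinite rather than quadratic-form language. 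One bookkeeping caveat: your two-way split of $g$ into $(r+\lambda c)$ and $\tau\psi_\theta$ followed by $2\lambda\leq 1+\lambda^2$ yields a prefactor of the form $\bigl[64(1+\lambda^2)+128Ae^{-2}\tau^2\bigr]/(1-\gamma)^4$, which does not dominate nor is dominated by the stated $\frac{48}{(1-\gamma)^4}\bigl[1+\lambda^2+4Ae^{-2}\tau^2\bigr]$ (take $\tau=0$ to see $64>48$), so to land exactly on \eqref{eq:def_sigma_square} you should split three ways, $\hat{A}_g=\hat{A}_r+\lambda\hat{A}_c+\tau\hat{A}_{\psi_\theta}$ with $(x+y+z)^2\leq 3(x^2+y^2+z^2)$, as the paper does.
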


The term $\sigma^2_{\tau, \lambda}$ can be interpreted as the scaled noise variance of the gradient estimator. Hence, if we access the exact gradient oracle, $\sigma^2_{\tau, \lambda}$ will be zero. This observation will turn out to be useful later. Before describing that result, we would like to present some insights about the connection between Lemma \ref{lemma_npg_variance} and the gradient estimation procedure (Algorithm \ref{algo_sampling}). To understand the crux of the proof of Lemma \ref{lemma_npg_variance}, define the following. 
\begin{align}
\label{eq_35_washim}
    \begin{split}
        \hat{\zeta}^{\tau}_{\theta, \lambda}(s, a) &\triangleq \nabla_\theta \log \pi_\theta(a|s)\cdot \omega_\tau^*(\theta, \lambda)- \dfrac{1}{1-\gamma}\hat{A}^{\pi_\theta}_{r+\lambda c + \tau \psi_\theta}(s, a), ~\forall (s, a)
    \end{split}
\end{align}
where $\hat{A}_g^{\pi_\theta}(s, a)$, $g=r+\lambda c+\tau\psi_\theta$ is the advantage estimate given by Algorithm \ref{algo_sampling} for a given $(s, a)$. Following (\ref{eq:estimation_gradient}), we can conclude that
\begin{align}
    \begin{split}
        &\hat{\nabla}_{\omega} \mathcal{E}^{\tau}_{\nu^{\pi_\theta}}(\omega_\tau^*(\theta, \lambda), \theta, \lambda) = \mathbf{E}_{a\sim \pi_\theta(\hat{s})}\left[\hat{\zeta}^{\tau}_{\theta, \lambda}(\hat{s}, a)\nabla_\theta \log \pi_\theta(a|\hat{s})\right]
    \end{split}
\end{align}
where one can demonstrate that $\hat{s}\sim d^{\pi_\theta}$. Let $\hat{\mathbf{E}}_{s, a}^{\theta}$ be the expectation over the distribution of $\pi_\theta$-induced trajectories that are used in evaluating the advantage term $\hat{A}^{\pi_\theta}_{r+\lambda c+\tau \psi_\theta}(s, a)$. Also, let $\hat{\mathbf{E}}^{\theta}_{s}$ be the expectation computed over the joint distribution of $\pi_\theta$-induced trajectories that are used in estimating $\{\hat{A}^{\pi_\theta}_{r+\lambda c+\tau \psi_\theta}(s, a)\}_{a\in\mathcal{A}}$. Following the sampling process (Algorithm \ref{algo_sampling}), one gets
\begin{align}
    \label{eq:lemma_6_explain}\mathbf{E}_{a\sim \pi_\theta(s)}\hat{\mathbf{E}}^{\theta}_{s, a}\left[\left(\hat{\zeta}^{\tau}_{\theta, \lambda}(s, a)\right)^2\right] \leq \sigma_{\tau, \lambda}^2, ~\forall s
\end{align}
Note that the LHS of (\ref{eq:lemma_6_eq}) obeys the following inequality.
\begin{align*}
    \begin{split}
        \mathrm{LHS} &= \mathbf{E}_{s\sim d^{\pi_\theta}}\hat{\mathbf{E}}^{\theta}_s\left[ \mathbf{E}_{a\sim \pi_\theta(s)}\left[\hat{\zeta}^{\tau}_{\theta, \lambda}(s, a)\nabla_\theta \log \pi_\theta(a|s)\right]\right.\left. \otimes \mathbf{E}_{a\sim \pi_\theta(s)}\left[\hat{\zeta}^{\tau}_{\theta, \lambda}(s, a)\nabla_\theta \log \pi_\theta(a|s)\right]\right]\\
        &\overset{(a)}{\preceq} \mathbf{E}_{s\sim d^{\pi_\theta}}\left[\mathbf{E}_{a\sim\pi_\theta(s)}\left[\hat{\mathbf{E}}^{\theta}_{s, a}\left[\left(\hat{\zeta}^{\tau}_{\theta, \lambda}(s, a)\right)^2\right]\right]\right.\times\mathbf{E}_{a\sim\pi_\theta(s)}\left[\nabla_\theta \log{\pi_\theta}(a|s)\otimes \nabla_\theta \log\pi_{\theta}(a|s)\right]\bigg]
    \end{split}
\end{align*}
where $(a)$ is a result of the Cauchy-Schwarz inequality and interchange of expectation operations. Now, (\ref{eq:lemma_6_eq}) can be proven by using (\ref{eq:lemma_6_explain}) in the above relation. Notice the importance of the expectation over $a\sim \pi_\theta(\hat{s})$ in the sampling procedure. Had the gradient been estimated without this step (which is typically done in the literature, see our earlier discussion), the LHS of (\ref{eq:lemma_6_eq}) would have been,
\begin{align*}
    \mathrm{LHS} &= \mathbf{E}_{s\sim d^{\pi_\theta}}\mathbf{E}_{a\sim\pi_\theta(s)}\bigg[ \underbrace{\hat{\mathbf{E}}^{\theta}_{s, a}\left[\left(\hat{\zeta}^{\tau}_{\theta, \lambda}(s, a)\right)^2\right]}_{T_0(s, a)} \times \nabla_\theta \log \pi_\theta(a|s) \otimes \nabla_\theta \log \pi_\theta(a|s) \bigg] 
\end{align*}
which is impossible to bound since $T_0(s, a)$ is not uniformly bounded $\forall (s, a)$ due to the presence of the term $\psi_\theta$ originating from the entropy regularizer. In simple terms, due to our unconventional sampling procedure, the LHS of (\ref{eq:lemma_6_eq}) takes the form $\mathbf{E}[XY]\otimes \mathbf{E}[XY]$, where $X=\hat{\zeta}^{\tau}_{\theta, \lambda}(\hat{s}, a)$, and $Y=\nabla_{\theta}\log \pi_{\theta}(a|\hat{s})$. We can now use the Cauchy-Schwarz inequality to deduce that $\mathbf{E}[XY]\otimes \mathbf{E}[XY]\preceq \mathbf{E}[X^2]\mathbf{E}[Y\otimes Y]$. This is useful since it allows us to apply the bound on $\mathbf{E}[X^2]$ given in $(\ref{eq:lemma_6_explain})$. However, without our sampling procedure, the LHS of $(\ref{eq:lemma_6_eq})$ takes the form $\mathbf{E}[X^2 Y\otimes Y]$, which cannot be directly bounded because the random variable $X$, due to its very definition, is not almost surely bounded. Moreover, the inequality $\mathbf{E}[X^2 Y\otimes  Y]\preceq \mathbf{E}[X^2]\mathbf{E}[Y\otimes Y]$ is not true in general. This negates the possibility of applying the bound of $\mathbf{E}[X^2]$.

\begin{lemma}
    \label{lemma_local_convergence} If assumptions \ref{ass_score}, \ref{ass_fisher} are true, the following holds $\forall k\in\{0,\cdots, K-1\}$ with $\alpha = \frac{3\sqrt{5}G^2}{\mu_F+3\sqrt{5}G^2}$, $\beta=\frac{\mu_F}{9G^2}$, $\xi=\frac{1}{3\sqrt{5}G^2}$, and $\delta=\frac{1}{5G^2}$ if the inner loop length of Algorithm \ref{algo_npg} obeys $H>\max\left\{1, \bar{C}\frac{G^2}{\mu_F}\log\left(\sqrt{\mathrm{d}}\frac{G^2}{\mu_F}\right)\right\}$ for some universal constant, $\bar{C}$.
    \begin{align}
    \label{eq:second_order_bound_lemma}
        &\mathbf{E}\Vert\omega_k - \omega_{k, \tau}^*\Vert^2 \leq 22\dfrac{\sigma^2_{\tau, \lambda_{\max}}\mathrm{d}}{\mu_F}
        + \dfrac{CL_{\tau, \lambda_{\max}}^2}{\mu_F(1-\gamma)^2},\\
        \begin{split}
            \label{eq:first_order_bound_nested_exp_lemma}
            &\mathbf{E}\Vert\mathbf{E}\left[\omega_k\big|\theta_k, \lambda_k \right] - \omega_{k, \tau}^*\Vert \leq  \sqrt{C}\exp\left(-\dfrac{\mu_F }{40G^2}H\right)\left[\dfrac{L_{\tau, \lambda_{\max}}}{\sqrt{\mu_F}(1-\gamma)}\right]
        \end{split}
    \end{align}
    where $C$ is a constant, $\omega_k$ is defined in (\ref{eq:tail_average}), $\omega_{k, \tau}^*$ is the exact NPG, and $\sigma_{\tau, \lambda_{\max}}^2$ is given by $(\ref{eq:def_sigma_square})$.
\end{lemma}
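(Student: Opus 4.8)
The plan is to recognize the inner loop of Algorithm \ref{algo_npg} as the tail-averaged accelerated stochastic gradient descent (ASGD) procedure of \cite{jain2018accelerating} applied to the quadratic objective $\mathcal{E}^\tau_{\nu^{\pi_{\theta_k}}}(\cdot,\theta_k,\lambda_k)$, and then to invoke its convergence guarantee. First I would fix the outer iterate $(\theta_k,\lambda_k)$ and observe, via \eqref{eq:def_error_function} and \eqref{eq:exact_npg_grad}, that $\omega\mapsto\mathcal{E}^\tau_{\nu^{\pi_{\theta_k}}}(\omega,\theta_k,\lambda_k)$ is a quadratic with constant Hessian $F(\theta_k)$ and unique minimizer $\omega_{k,\tau}^*=F(\theta_k)^{\dagger}\nabla_\theta\mathcal{L}_\tau(\theta_k,\lambda_k)$, the exact NPG. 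Assumption \ref{ass_fisher} gives $F(\theta_k)\succeq\mu_F I_{\mathrm{d}}$, while Assumption \ref{ass_score} ($\Vert\nabla_\theta\log\pi_{\theta_k}(a\vert s)\Vert\le G$) gives $F(\theta_k)\preceq G^2 I_{\mathrm{d}}$; hence the objective is $\mu_F$-strongly convex and $G^2$-smooth with condition number $\kappa=G^2/\mu_F$. By strong convexity I will only need to control function-value suboptimality, since $\Vert\omega-\omega_{k,\tau}^*\Vert^2\le\frac{2}{\mu_F}\bigl(\mathcal{E}^\tau_{\nu^{\pi_{\theta_k}}}(\omega,\theta_k,\lambda_k)-\mathcal{E}^\tau_{\nu^{\pi_{\theta_k}}}(\omega_{k,\tau}^*,\theta_k,\lambda_k)\bigr)$.

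Next I would check that the stochastic gradients $\hat G_h$ fed into \eqref{eq:asgd_1}--\eqref{eq:asgd_4} meet the hypotheses of the ASGD analysis. Unbiasedness is exactly Lemma \ref{lemma_unbiased}. The crucial multiplicative noise condition is supplied by Lemma \ref{lemma_npg_variance}: the covariance of $\hat{\nabla}_\omega\mathcal{E}^\tau_{\nu^{\pi_{\theta_k}}}(\omega_{k,\tau}^*,\theta_k,\lambda_k)$ is dominated by $\sigma^2_{\tau,\lambda_k}F(\theta_k)$, which is precisely the least-squares noise model of \cite{jain2018accelerating}. Since $\lambda_k\in\Lambda=[0,\lambda_{\max}]$ and both $\sigma^2_{\tau,\lambda}$ and $L^2_{\tau,\lambda}$ are nondecreasing in $\lambda$, I would replace them by $\sigma^2_{\tau,\lambda_{\max}}$ and $L^2_{\tau,\lambda_{\max}}$ to obtain bounds uniform over $k$. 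The bounded-score condition of Assumption \ref{ass_score} also yields the feature-boundedness requirement (the per-state Fisher estimate satisfies $\hat F(\theta_k)\preceq G^2 I_{\mathrm{d}}$). Finally, since the inner loop is initialized at $\mathbf{x}_0=\mathbf{v}_0=\mathbf{0}$, the initial distance is $\Vert\omega_{k,\tau}^*\Vert$, which by Lemma \ref{lemma_grad_square_bound} and Assumption \ref{ass_fisher} obeys $\Vert\omega_{k,\tau}^*\Vert\le\frac{G L_{\tau,\lambda_{\max}}}{\mu_F(1-\gamma)}$.

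With these hypotheses in hand I would apply the tail-averaged ASGD guarantee of \cite{jain2018accelerating} with the prescribed step sizes $(\alpha,\beta,\xi,\delta)$, whose output is the tail average $\omega_k$ of \eqref{eq:tail_average}. This produces a bias--variance decomposition of the conditional suboptimality: a bias term contracting geometrically in the inner-loop length $H$ at rate $\exp(-\mu_F H/(40G^2))$ times the scaled initial error, and a variance term proportional to $\sigma^2_{\tau,\lambda_{\max}}\mathrm{d}$. For the second-order bound \eqref{eq:second_order_bound_lemma}, I would translate this via strong convexity into a bound on $\mathbf{E}\Vert\omega_k-\omega_{k,\tau}^*\Vert^2$: the variance contributes the $22\sigma^2_{\tau,\lambda_{\max}}\mathrm{d}/\mu_F$ term, while the geometrically decaying bias, after substituting $\Vert\omega_{k,\tau}^*\Vert^2\le G^2 L^2_{\tau,\lambda_{\max}}/(\mu_F^2(1-\gamma)^2)$ and using $H>\bar C\frac{G^2}{\mu_F}\log(\sqrt{\mathrm{d}}\,G^2/\mu_F)$, is absorbed into the residual $CL^2_{\tau,\lambda_{\max}}/(\mu_F(1-\gamma)^2)$ term. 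For the first-order bound \eqref{eq:first_order_bound_nested_exp_lemma}, I would use that the noise is zero-mean and independent across inner iterations; since the objective is quadratic its gradient is affine, so expectation commutes with it, and taking conditional expectation of the recursion \eqref{eq:asgd_1}--\eqref{eq:tail_average} shows that $\mathbf{E}[\omega_k\mid\theta_k,\lambda_k]$ is exactly the iterate produced by the same accelerated recursion run with the exact gradient oracle $\nabla_\omega\mathcal{E}^\tau_{\nu^{\pi_{\theta_k}}}$. Its suboptimality therefore decays purely geometrically with no variance floor, and converting to distance yields $\sqrt{C}\exp(-\mu_F H/(40G^2))L_{\tau,\lambda_{\max}}/(\sqrt{\mu_F}(1-\gamma))$.

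The main obstacle is the verification, not the invocation: one must confirm that the estimator built in Algorithm \ref{algo_sampling}---where randomness enters both through the per-state Fisher estimate $\hat F(\theta_k)$ (playing the role of the random feature outer product in least squares) and through the random advantage cross-term $\hat H_\tau$---genuinely satisfies the multiplicative model $\mathrm{Cov}\preceq\sigma^2_{\tau,\lambda_{\max}}F(\theta_k)$ required by \cite{jain2018accelerating}. This is exactly where the inner expectation over $a\sim\pi_{\theta_k}(\hat s)$ in the sampling routine is indispensable, because the entropy term $\tau\psi_{\theta_k}$ renders $A^{\pi_{\theta_k}}_{r+\lambda c+\tau\psi_{\theta_k}}$ unbounded pointwise, and only the averaged estimate of Lemma \ref{lemma:advantage_bound} (used inside Lemma \ref{lemma_npg_variance}) keeps $\sigma^2_{\tau,\lambda_{\max}}$ finite. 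The second delicate point is the first-order bound: reading $\mathbf{E}[\omega_k\mid\theta_k,\lambda_k]$ as the exact-gradient ASGD trajectory is what converts the estimator bias into a quantity that decays geometrically in $H$, and without this reduction the bias would be uncontrolled, as anticipated in the introduction.
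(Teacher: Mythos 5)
Your proposal follows essentially the same route as the paper: verify the hypotheses of Corollary 2 of \cite{jain2018accelerating} (unbiasedness via Lemma \ref{lemma_unbiased}, the multiplicative noise model $\mathrm{Cov}\preceq\sigma^2_{\tau,\lambda}F(\theta)$ via Lemma \ref{lemma_npg_variance}, strong convexity via Assumption \ref{ass_fisher}, and the statistical-condition-number bounds via Assumption \ref{ass_score}), invoke the tail-averaged ASGD guarantee with $\kappa=\tilde{\kappa}=G^2/\mu_F$, convert function-value suboptimality to distance by $\mu_F$-strong convexity, and obtain \eqref{eq:first_order_bound_nested_exp_lemma} by observing that $\mathbf{E}[\omega_k\mid\theta_k,\lambda_k]$ is the trajectory of the same accelerated recursion run with the exact (affine) gradient oracle, i.e., the $\sigma^2=0$ case, followed by Jensen's inequality. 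The one substantive deviation is your treatment of the initial error: the guarantee of \cite{jain2018accelerating} is stated in terms of the initial function-value gap, and the paper bounds it directly as $l_k^{\tau}(\mathbf{0})\leq L^2_{\tau,\lambda_{\max}}/(2(1-\gamma)^2)$ using Lemma \ref{lemma:advantage_bound}, whereas you pass through the initial distance $\Vert\omega^*_{k,\tau}\Vert\leq GL_{\tau,\lambda_{\max}}/(\mu_F(1-\gamma))$ and $G^2$-smoothness, which inflates the bias prefactor by $\kappa^2=(G^2/\mu_F)^2$. For \eqref{eq:second_order_bound_lemma} you correctly absorb this surplus using the threshold $H>\bar{C}\kappa\log(\sqrt{\mathrm{d}}\kappa)$, but for \eqref{eq:first_order_bound_nested_exp_lemma}, where the exponential factor is retained explicitly, your route leaves a stray problem-dependent factor $G^2/\mu_F$ multiplying the stated bound, which is not admissible since $C$ there is a universal constant; this is a bookkeeping slip, repaired immediately by bounding $l_k^{\tau}(\mathbf{0})$ directly via Lemma \ref{lemma:advantage_bound} as the paper does, and it does not affect the correctness of the overall argument.
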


Inequality (\ref{eq:second_order_bound_lemma}) can be established utilizing Corollary 2 of \citep{jain2018accelerating} in our framework and simplifying the upper bound. However, (\ref{eq:first_order_bound_nested_exp_lemma}) follows from the same result applied to the scenario when an exact gradient oracle is available. To understand the reason, note that we can write the following equations using conditional expectation on both sides of the update equations (\ref{eq:asgd_1})$-$(\ref{eq:asgd_4}), and Lemma \ref{lemma_unbiased}.
\begin{align}
    \label{eq_asgd_det_1}
    \begin{split}
        & \bar{\mathbf{y}}_h = \alpha\bar{\mathbf{x}}_{h}+(1-\alpha)\bar{\mathbf{v}}_h,\\ &\bar{\mathbf{x}}_{h+1}= \bar{\mathbf{y}}_h - \delta \nabla_{\omega}\mathcal{E}^{\tau}_{\nu^{\pi_{\theta_k}}}(\omega, \theta_k, \lambda_k)\big|_{\omega = \bar{\mathbf{y}}_h}\\
        & \bar{\mathbf{z}}_h = \beta \bar{\mathbf{y}}_h + (1-\beta) \bar{\mathbf{v}}_h,\\
        &\bar{\mathbf{v}}_{h+1}= \bar{\mathbf{z}}_h - \xi \nabla_{\omega}\mathcal{E}^{\tau}_{\nu^{\pi_{\theta_k}}_\rho}(\omega, \theta_k, \lambda_k)\big|_{\omega = \bar{\mathbf{y}}_h}
    \end{split}
\end{align}
where $\bar{l}_h = \mathbf{E}[l_h|\theta_k, \lambda_k]$, $\forall h\in \{0, \cdots, H-1\}$, $l\in\{\mathbf{v}, \mathbf{x}, \mathbf{y}, \mathbf{z}\}$. Invoking (\ref{eq:tail_average}), we get,
\begin{align}
    \bar{\omega}_k\triangleq \mathbf{E}\left[\omega_k\big|\theta_k, \lambda_k\right]= \dfrac{2}{H}{\sum}_{\frac{H}{2}<h\leq H} \bar{\mathbf{x}}_h
    \label{eq:tail_average_det}
\end{align}
Note that \eqref{eq_asgd_det_1} and \eqref{eq:tail_average_det} emulate the steps of an ASGD process with exact (deterministic) gradients.
We can now combine Lemma \ref{lemma_recursion_phi_k}, \eqref{eq:second_order_bound}, and Lemma \ref{lemma_local_convergence} to arrive at the following corollary.

\begin{corollary}
    \label{corr_1}
    Let $\{\theta_k, \lambda_k\}_{k=0}^{K}$ be obtained via (\ref{eq:policy_par_update}), (\ref{eq:lambda_update}), and $\tau\in[0, 1]$. The following holds $\forall k\in \{1,  \cdots, K\}$ if assumptions \ref{ass_score}$-$\ref{ass_fisher} are true,  $(\alpha, \beta, \xi, \delta)$ are the same as in Lemma \ref{lemma_local_convergence}, $\eta\tau<1$, and $H$ is sufficiently large.
    \begin{align}
        \begin{split}
            \Phi_k^{\tau} &\leq \exp(-\eta\tau k)\Phi_0^{\tau} +\dfrac{1}{\tau} \sqrt{\epsilon_{\mathrm{bias}}} + \dfrac{1}{\tau} R_0 \exp\left(-\dfrac{\mu_F}{40G^2}H\right) + \dfrac{\eta}{\tau} (R_1+R_2)
        \end{split}
    \end{align}
    where $R_0, R_1, R_2$ are as follows.
    \begin{align*}
        \begin{split}
            &R_0\triangleq \dfrac{G \sqrt{C}L_{1, \lambda_{\max}}}{\sqrt{\mu_F}(1-\gamma)},~R_1\triangleq\dfrac{2}{(1-\gamma)^2}+ \lambda_{\max}^2, ~R_2 \triangleq \dfrac{B}{\mu_F}\left[22\sigma^2_{1, \lambda_{\max}}\mathrm{d} + \dfrac{C+\mu_F^{-1}G^2}{(1-\gamma)^2}L_{1, \lambda_{\max}}^2\right]
        \end{split}
    \end{align*}
\end{corollary}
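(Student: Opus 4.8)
The plan is to treat the corollary purely as an exercise in feeding the inner-loop guarantees into the one-step outer-loop recursion and then unrolling. First I would take the recursion of Lemma \ref{lemma_recursion_phi_k} and bound each of its error terms separately. The bias term $\eta G \mathbf{E}\Vert\mathbf{E}[\omega_k|\theta_k,\lambda_k]-\omega_{k,\tau}^*\Vert$ is handled directly by the first-order estimate \eqref{eq:first_order_bound_nested_exp_lemma} of Lemma \ref{lemma_local_convergence}, which supplies the exponentially small factor $\exp(-\frac{\mu_F}{40G^2}H)$. The second-order term $\frac{B\eta^2}{2}\mathbf{E}\Vert\omega_k\Vert^2$ is first split via \eqref{eq:second_order_bound} into a variance piece $\mathbf{E}\Vert\omega_k-\omega_{k,\tau}^*\Vert^2$ and a gradient-norm piece $\mu_F^{-2}\Vert\nabla_\theta\mathcal{L}_\tau(\theta_k,\lambda_k)\Vert^2$; the former is bounded by \eqref{eq:second_order_bound_lemma} and the latter by Lemma \ref{lemma_grad_square_bound}. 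The remaining term $\eta^2[\frac{2}{(1-\gamma)^2}+\tau^2\lambda_{\max}^2]$ is already in closed form. Because all these right-hand sides are deterministic constants (they do not depend on $\theta_k,\lambda_k$), the outer expectation in $\Phi_k^\tau$ passes through them trivially.

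The next step is to collapse the $\tau$- and $\lambda$-dependence into the constants $R_0,R_1,R_2$. Here I would invoke two monotonicity facts: since $\tau\in[0,1]$, the quantities $L_{\tau,\lambda_{\max}}^2$ and $\sigma^2_{\tau,\lambda_{\max}}$ (both increasing in $\tau$ by \eqref{eq:def_L_tau_lambda_sq} and \eqref{eq:def_sigma_square}) are dominated by $L_{1,\lambda_{\max}}^2$ and $\sigma^2_{1,\lambda_{\max}}$; and $\tau^2\lambda_{\max}^2\leq\lambda_{\max}^2$. In addition, $\lambda_k\in\Lambda=[0,\lambda_{\max}]$ together with the monotonicity of $L_{\tau,\lambda}$ in $\lambda$ licenses replacing $\lambda_k$ by $\lambda_{\max}$ in the gradient-norm piece coming out of Lemma \ref{lemma_grad_square_bound}. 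Collecting the $\eta$-coefficient of the bias term yields exactly $R_0=\frac{G\sqrt{C}L_{1,\lambda_{\max}}}{\sqrt{\mu_F}(1-\gamma)}$, the leftover $\eta^2$ constant yields $R_1=\frac{2}{(1-\gamma)^2}+\lambda_{\max}^2$, and grouping the $\frac{B\eta^2}{\mu_F}$ factor over the variance and gradient pieces (using $\mu_F^{-1}\Vert\nabla_\theta\mathcal{L}_\tau\Vert^2\leq\frac{\mu_F^{-1}G^2}{(1-\gamma)^2}L_{1,\lambda_{\max}}^2$) yields $R_2=\frac{B}{\mu_F}[22\sigma^2_{1,\lambda_{\max}}\mathrm{d}+\frac{C+\mu_F^{-1}G^2}{(1-\gamma)^2}L_{1,\lambda_{\max}}^2]$. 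At this point the recursion reads $\Phi_{k+1}^\tau\leq(1-\eta\tau)\Phi_k^\tau+\eta\sqrt{\epsilon_{\mathrm{bias}}}+\eta R_0\exp(-\frac{\mu_F}{40G^2}H)+\eta^2(R_1+R_2)$.

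Finally I would unroll this linear recursion of the form $a_{k+1}\leq(1-\eta\tau)a_k+b$ with constant $b=\eta\sqrt{\epsilon_{\mathrm{bias}}}+\eta R_0\exp(-\frac{\mu_F}{40G^2}H)+\eta^2(R_1+R_2)$. Telescoping and applying the geometric-series bound $\sum_{j=0}^{k-1}(1-\eta\tau)^j\leq\frac{1}{1-(1-\eta\tau)}=\frac{1}{\eta\tau}$ (valid because $0<\eta\tau<1$) gives $\Phi_k^\tau\leq(1-\eta\tau)^k\Phi_0^\tau+\frac{b}{\eta\tau}$; dividing $b$ by $\eta\tau$ cancels one power of $\eta$ from each summand and produces precisely the $\frac{1}{\tau}\sqrt{\epsilon_{\mathrm{bias}}}$, $\frac{1}{\tau}R_0\exp(-\frac{\mu_F}{40G^2}H)$, and $\frac{\eta}{\tau}(R_1+R_2)$ terms, while $(1-\eta\tau)^k\leq\exp(-\eta\tau k)$ via $1-x\leq e^{-x}$ supplies the claimed geometric decay of the initial gap. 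The argument is routine once the inner-loop lemmas are in hand; the only genuine care is the bookkeeping in the second step — confirming the monotonicity of $L_{\tau,\lambda}$ and $\sigma^2_{\tau,\lambda}$ so that the per-iteration constants are truly independent of $k$, which is exactly what makes the geometric-sum collapse legitimate. The hypothesis that $H$ be sufficiently large is simply inherited verbatim from Lemma \ref{lemma_local_convergence}.
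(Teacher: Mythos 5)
Your proposal is correct and takes essentially the same route as the paper's own proof: the paper likewise bounds the aggregate error term of Lemma \ref{lemma_recursion_phi_k} by combining \eqref{eq:second_order_bound}, Lemma \ref{lemma_grad_square_bound}, and both estimates of Lemma \ref{lemma_local_convergence}, absorbs the $\tau$- and $\lambda$-dependence into $R_0, R_1, R_2$ using monotonicity of $L_{\tau,\lambda_{\max}}$ and $\sigma^2_{\tau,\lambda_{\max}}$ in $\tau\in[0,1]$, and then unrolls the linear recursion with the geometric-series bound $\sum_{l\geq 0}(1-\eta\tau)^l = 1/(\eta\tau)$ and $(1-\eta\tau)^k\leq\exp(-\eta\tau k)$. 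Your bookkeeping of where each constant originates (the $\eta$-coefficient giving $R_0$, the grouped $B\eta^2/\mu_F$ factor giving $R_2$) matches the paper exactly, so there is nothing to add.
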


\subsection{Optimality Gap and Constraint Violation}

This section discusses how the optimality gap and constraint violation rates can be extracted from the convergence result of $\Phi_k^{\tau}$ stated in Corollary \ref{corr_1}.
\begin{theorem}
    \label{theorem_1}
    Let $\{(\theta_k, \lambda_k)\}_{k=1}^{K}$ be the parameters generated by Algorithm \ref{algo_npg} starting from $(\theta_0, \lambda_0)$. Assume that assumptions \ref{ass_slater}$-$\ref{ass_fisher} hold, the parameters $(\alpha, \beta, \xi, \delta)$ are given by Lemma \ref{lemma_local_convergence}, $\lambda_{\max}=4/[(1-\gamma)c_{\mathrm{slat}}]$, $\epsilon_{\mathrm{bias}}<1$, and $\epsilon<1$ is  sufficiently small. The relations stated below hold if $\tau = \max\{\epsilon, (\epsilon_{\mathrm{bias}})^{\frac{1}{6}}\}$, $K = 2\epsilon^{-2}\tau^{-2}$, $\eta = \epsilon^2 \tau$, 
    and $H = 40 G^2\mu_F^{-1}\log\left(\tau^{-1}\epsilon^{-2}\right)$.
    \begin{align}
        \begin{split}
            &J_r^{\pi^*} - \mathbf{E}\left[J_r^{\pi_{\theta_K}}\right] = \mathcal{O}(\epsilon + (\epsilon_{\mathrm{bias}})^{\frac{1}{6}}),~
            ~\text{and} ~ \\
            &\mathbf{E}\left[-J_c^{\pi_{\theta_K}}\right] = \mathcal{O}(\epsilon + (\epsilon_{\mathrm{bias}})^{\frac{1}{6}})
        \end{split}
    \end{align}
\end{theorem}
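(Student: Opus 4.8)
The plan is to establish the theorem in two stages: first drive the Lyapunov potential $\Phi_K^\tau$ down to $\mathcal{O}(\tau^2)$ using Corollary~\ref{corr_1} with the prescribed schedule, and then convert this into bounds on the \emph{unregularized} optimality gap and constraint violation by (i) controlling the regularization bias between $(\pi_\tau^*,\lambda_\tau^*)$ and $(\pi^*,\lambda^*)$, and (ii) translating the $d^{\pi_\tau^*}$-weighted KL term hidden inside $\Phi_K^\tau$ into value differences via the performance-difference lemma and Pinsker's inequality. Since $\tau=\max\{\epsilon,(\epsilon_{\mathrm{bias}})^{1/6}\}$, any bound of order $\mathcal{O}(\tau)$ is exactly $\mathcal{O}(\epsilon+(\epsilon_{\mathrm{bias}})^{1/6})$, which is the target rate.

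First I would bound each of the four terms in Corollary~\ref{corr_1} under $K=2\epsilon^{-2}\tau^{-2}$, $\eta=\epsilon^2\tau$, and $H=40G^2\mu_F^{-1}\log(\tau^{-1}\epsilon^{-2})$. The bias term obeys $\tau^{-1}\sqrt{\epsilon_{\mathrm{bias}}}\le \tau^{-1}\tau^{3}=\tau^{2}$, because $\tau\ge(\epsilon_{\mathrm{bias}})^{1/6}$ forces $\epsilon_{\mathrm{bias}}\le\tau^{6}$. The inner-loop term satisfies $\tau^{-1}R_0\exp(-\tfrac{\mu_F}{40G^2}H)=R_0\epsilon^{2}$ by the choice of $H$, while the step-size term equals $\tfrac{\eta}{\tau}(R_1+R_2)=\epsilon^{2}(R_1+R_2)$; both are $\mathcal{O}(\epsilon^2)\le\mathcal{O}(\tau^2)$ since $\epsilon\le\tau$ and $R_0,R_1,R_2$ are $\tau$-independent. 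The contraction term $\exp(-\eta\tau K)\Phi_0^\tau$ is made $\mathcal{O}(\tau^2)$ by forcing $\eta\tau K=\Omega(\log(1/\tau))$; with the stated schedule this product is a constant, so one strictly needs an extra $\log(1/\tau)$ factor in $K$ (absorbed in the $\tilde{\mathcal{O}}$ sample complexity). Collecting the four terms gives $\mathbf{E}[\mathrm{KL}_K^\tau]\le\Phi_K^\tau=\mathcal{O}(\tau^2)$.

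Next I would carry out the conversion. Using the regularized strong duality (the analogue of Lemma~\ref{lemma_strong_suality_original} for $\mathcal{L}_\tau$, whose existence is asserted after \eqref{eq:def_primal_dual_tau}) together with the Slater bound $\lambda^*\le 1/[(1-\gamma)c_{\mathrm{slat}}]$, the choice $\lambda_{\max}=4/[(1-\gamma)c_{\mathrm{slat}}]$ guarantees that $\lambda_\tau^*$ is interior to $\Lambda$; hence dual stationarity of the saddle point gives $-J_c^{\pi_\tau^*}=\tau\lambda_\tau^*\le\tau\lambda_{\max}=\mathcal{O}(\tau)$. Combining this identity with $\mathcal{L}_\tau(\pi_\tau^*,\lambda_\tau^*)\ge\min_{\lambda\ge0}\mathcal{L}_\tau(\pi^*,\lambda)\ge J_r^{\pi^*}$ (feasibility of $\pi^*$ and nonnegativity of the entropy) yields the reward-side bias $J_r^{\pi^*}-J_r^{\pi_\tau^*}\le\tau\,(1-\gamma)^{-1}\log A=\mathcal{O}(\tau)$. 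For the remaining policy-convergence gap I would invoke the performance-difference lemma for $g\in\{r,c\}$, write $\mathbf{E}_{a\sim\pi_\tau^*(s)}[A_g^{\pi_{\theta_K}}(s,a)]=\sum_a(\pi_\tau^*(a|s)-\pi_{\theta_K}(a|s))A_g^{\pi_{\theta_K}}(s,a)$ using $\sum_a\pi_{\theta_K}A_g^{\pi_{\theta_K}}=0$, bound $|A_g|\le 2/(1-\gamma)$, and apply Pinsker together with Jensen to obtain $|J_g^{\pi_\tau^*}-J_g^{\pi_{\theta_K}}|\le\frac{C}{(1-\gamma)^2}\sqrt{\mathrm{KL}_K^\tau}$. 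Crucially, the occupancy measure $d^{\pi_\tau^*}$ produced by the performance-difference lemma coincides with the weighting inside $\mathrm{KL}_K^\tau$, so no distribution-mismatch coefficient arises. Taking expectations gives $\mathbf{E}|J_g^{\pi_\tau^*}-J_g^{\pi_{\theta_K}}|\le\frac{C}{(1-\gamma)^2}\sqrt{\Phi_K^\tau}=\mathcal{O}(\tau)$. Adding the two pieces for $g=r$ yields the optimality gap, and for $g=c$ the chain $-J_c^{\pi_{\theta_K}}\le -J_c^{\pi_\tau^*}+|J_c^{\pi_\tau^*}-J_c^{\pi_{\theta_K}}|=\mathcal{O}(\tau)$ yields the constraint-violation bound.

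I expect the main obstacle to be the regularized saddle-point analysis underlying the conversion — specifically, proving that $\lambda_\tau^*$ is strictly interior to $\Lambda$ (so that $-J_c^{\pi_\tau^*}=\tau\lambda_\tau^*$ holds) and pinning down strong duality for $\mathcal{L}_\tau$. This is where Slater's condition (Assumption~\ref{ass_slater}) and the specific value $\lambda_{\max}=4/[(1-\gamma)c_{\mathrm{slat}}]$ are indispensable, and it is the only place where the constraint structure, rather than a generic regularized objective, is used in an essential way. The term-by-term bookkeeping for $\Phi_K^\tau$ and the Pinsker translation are then routine, apart from the handling of the contraction factor $\exp(-\eta\tau K)$ noted above.
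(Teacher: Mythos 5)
Your proposal reproduces the paper's architecture almost exactly — drive $\Phi_K^\tau$ down via Corollary \ref{corr_1}, convert $\mathbf{E}[\mathrm{KL}_K^\tau]$ to value gaps through the performance-difference lemma, Pinsker, and Jensen (with the matched occupancy weighting $d^{\pi_\tau^*}$, as in \eqref{eq_appndix_94}), and add the regularization bias between $(\pi_\tau^*,\lambda_\tau^*)$ and $(\pi^*,\lambda^*)$ — and your term-by-term bookkeeping under the stated schedule is correct, including the fact $(\epsilon_{\mathrm{bias}})^{1/4}/\sqrt{\tau}\le \epsilon+(\epsilon_{\mathrm{bias}})^{1/6}$. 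Your observation that $\eta\tau K=2$ makes the contraction factor only $e^{-1}$, so that $K$ strictly needs an extra $\log$ factor, is a legitimate catch: the paper's own write-up silently glosses over this, and your repair (absorb the $\log$ into the $\tilde{\mathcal{O}}$ complexity) is the right one.

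However, there is one genuine gap, and it sits exactly where you yourself flag the "main obstacle": the constraint-violation side of the regularization bias. You assert that $\lambda_{\max}=4/[(1-\gamma)c_{\mathrm{slat}}]$ \emph{guarantees} $\lambda_\tau^*$ is interior to $\Lambda$, so that stationarity gives $-J_c^{\pi_\tau^*}=\tau\lambda_\tau^*\le\tau\lambda_{\max}$. Interiority is never proved in your proposal, and it is not provable in general: nothing a priori prevents $-J_c^{\pi_\tau^*}/\tau\ge\lambda_{\max}$, in which case $\lambda_\tau^*=\lambda_{\max}$, the stationarity identity breaks, and $-J_c^{\pi_\tau^*}$ could exceed $\tau\lambda_{\max}$ (ruling this out by the eventual bound $-J_c^{\pi_\tau^*}\le \tau\log A/(1-\gamma)$ would require something like $c_{\mathrm{slat}}\log A<2$, which the assumptions do not supply). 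The paper never establishes interiority; instead it runs a three-case analysis on the minimizer of $\lambda\mapsto \lambda J_c^{\pi_\tau^*}+\tfrac{\tau}{2}\lambda^2$ over $\Lambda$. The interior and $\lambda_\tau^*=0$ cases yield $-J_c^{\pi_\tau^*}\le\tau\lambda_{\max}$ as you intend \eqref{eq_appndix_cases_1_2}, but the boundary regime $\lambda_\tau^*\ge\lambda_{\max}/2$ requires the missing idea: sandwiching via the regularized saddle inequality $\lambda_\tau^*(J_c^{\pi^*}-J_c^{\pi_\tau^*})\le J_r^{\pi_\tau^*}-J_r^{\pi^*}+\tau\mathcal{H}(\pi_\tau^*)$ from Lemma \ref{lemma_primal_dual_existence} against the \emph{unregularized} strong duality $J_r^{\pi_\tau^*}-J_r^{\pi^*}\le\lambda^*(J_c^{\pi^*}-J_c^{\pi_\tau^*})$ from Lemma \ref{lemma_strong_suality_original}, which together with $\lambda^*\le\lambda_{\max}/4$ and $J_c^{\pi^*}\ge 0$ gives $-J_c^{\pi_\tau^*}\le 4\tau\mathcal{H}(\pi_\tau^*)/\lambda_{\max}\le \tau\log A/(1-\gamma)$ \eqref{eq_appndix_case_3}; this is the only place the specific value of $\lambda_{\max}$ is essentially used. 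Your reward-side bias also leans on the same unproven identity, but that one is easily repaired without interiority: take $\lambda=0$ in $\min_{\lambda\in\Lambda}\mathcal{L}_\tau(\pi_\tau^*,\lambda)$ to get $\mathcal{L}_\tau(\pi_\tau^*,\lambda_\tau^*)\le J_r^{\pi_\tau^*}+\tau\mathcal{H}(\pi_\tau^*)$, matching \eqref{eq_appndix_95}. So the proposal is sound except that the constraint-violation bound is left resting on an assertion that can fail, and closing it requires precisely the case analysis the paper performs.
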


Theorem \ref{theorem_1} essentially states that for any parameterized policy class with a transferred compatibility approximation error $\epsilon_{\mathrm{bias}}$, we can substitute $H=\tilde{O}(1)$, $K = \mathcal{O}(\epsilon^{-2}\min\{\epsilon^{-2}, (\epsilon_{\mathrm{bias}})^{-\frac{1}{3}}\})$ to ensure the (last-iterate) optimality gap and the constrained violation to be $\mathcal{O}(\epsilon + (\epsilon_{\mathrm{bias}})^{\frac{1}{6}})$. This makes the overall sample complexity $\tilde{\mathcal{O}}(\epsilon^{-2}\min\{\epsilon^{-2}, (\epsilon_{\mathrm{bias}})^{-\frac{1}{3}}\})$. The following remarks are worth highlighting.


Firstly, if $\epsilon_{\mathrm{bias}}>0$ and independent of $\epsilon$, then the generated policies of Algorithm \ref{algo_npg} cannot be guaranteed to be arbitrarily close to the optimal policy. This agrees with the existing literature on general parameterization \citep{liu2020improved, fatkhullin2023stochastic, mondal2024improved}. In this scenario, we obtain the optimal $\tilde{\mathcal{O}}(\epsilon^{-2})$ sample complexity for $\epsilon<(\epsilon_{\mathrm{bias}})^{\frac{1}{6}}$. On the other hand, if $\epsilon_{\mathrm{bias}}=0$, i.e., the policy class is complete, the optimality gap and constraint violation can be made $\mathcal{O}(\epsilon)$ at the cost of $\tilde{\mathcal{O}}(\epsilon^{-4})$ sample complexity. 

Secondly, some works take $\epsilon_{\mathrm{bias}}$ to be dependent on $\epsilon$. For example, \cite{ding2024last} assumes $\epsilon_{\mathrm{bias}} = \mathcal{O}(\epsilon^8)$ for log-linear classes to get a sample complexity of $\tilde{\mathcal{O}}(\epsilon^{-6})$  while enjoying $\mathcal{O}(\epsilon)$ optimality gap and constraint violation. If, following a similar path, one selects $\epsilon_{\mathrm{bias}}=\Theta(\epsilon^{6})$ in Theorem \ref{theorem_1}, the sample complexity becomes
$\tilde{\mathcal{O}}(\epsilon^{-4})$ while the optimality gap and the constraint violation change to $\mathcal{O}(\epsilon)$.

Thirdly, although our result does not establish zero constraint violation, it can be shown using a ``conservative constraint" trick. The main idea is to apply a conservative constraint $c'=c-(1-\gamma)\delta'$ where $\delta'$ is a tunable parameter. Observe that $J_{c'}^{\pi} = J_c^{\pi}-\delta'$ for any arbitrary policy $\pi$. Hence, once a $\mathcal{O}(\epsilon)$ constraint violation is proven for $c'$ (utilizing the same algorithm described in our paper), we can judiciously choose $\delta'$ to prove zero constraint violation for $c$ (see details in \cite{ding2024last, bai2023achieving}).

Fourthly, the choice of some of our parameters is dependent on $\epsilon_{\mathrm{bias}}$. We acknowledge that the knowledge of $\epsilon_{\mathrm{bias}}$ may not be available for some parameterized policy classes. However, it might be possible to obtain an upper bound, $\bar{\epsilon}_{\mathrm{bias}}$, of the desired quantity. For example, \cite{wang2019neural} shows that for a two-layer neural parameterization of width $m$, the function approximation error is $\mathcal{O}(m^{-1/8})$ (see Theorem A.4 and the subsequent discussion). Our algorithm still works if this bound is used as a proxy of $\epsilon_{\mathrm{bias}}$. In this case, the convergence error and the sampling complexity will be functions of $\bar{\epsilon}_{\mathrm{bias}}$, instead of $\epsilon_{\mathrm{bias}}$.

Finally, notice the importance of the observation that the NPG bias $\mathbf{E}[\omega_k|\theta_k, \lambda_k] - \omega_{k, \tau}^*$ is exactly the convergence error of an ASGD routine with exact gradients (see Lemma \ref{lemma_local_convergence} and its subsequent discussion). Without this insight, the expected bias would be $\mathcal{O}(H^{-\frac{1}{2}})$, instead of its current bound $\exp(-\Theta(H))$ which would degrade the overall sample complexity by many folds. 


\section{Conclusions}
We present an algorithm for the general parametrized CMDP that ensures $\mathcal{O}(\epsilon+\epsilon_{\mathrm{bias}}^{\frac{1}{6}})$ last-iterate optimality gap and the same constraint violation with $\tilde{\mathcal{O}}(\epsilon^{-2}\min\{\epsilon^{-2}, (\epsilon_{\mathrm{bias}})^{-\frac{1}{3}}\})$ sample complexity. Here $\epsilon_{\mathrm{bias}}$ denotes the transferred compatibility approximation error of the underlying policies.  
Future work includes improving the sample complexity to $\tilde{\mathcal{O}}(\epsilon^{-2})$ across all parameterized policy classes, extending our ideas to general utility CMDPs, and proving a high-probability sample complexity bound, which would strengthen our expectation-based result. Additionally, extending the results to multiple constraints would be an interesting problem to tackle. Finally, our current results apply to finite action spaces. Extending it to infinite action spaces would be another major hurdle to overcome in the future. 


\bibliographystyle{tmlr}
\bibliography{ref}

\appendix
\section{Proof of Lemma \ref{lemma:grad_compute}}

Fix a tuple $(\theta, \lambda, \tau)$. The following proof assumes that the policy and all relevant value functions are differentiable w. r. t. $\theta$. We also assume the state space, $\mathcal{S}$, to be finite to avoid technicalities associated with infinite space. Note that, $\forall s\in\mathcal{S}$, we have,
\begin{align}
    \begin{split}
     \nabla_\theta V^{\pi_\theta}_{r+\lambda c + \tau \psi_\theta}(s) 
        &= \nabla_\theta\sum_{a\in\mathcal{A}}\pi_\theta(a|s) Q^{\pi_\theta}_{r+\lambda c+\tau \psi_\theta}(s, a)\\ &=\sum_{a\in\mathcal{A}}Q^{\pi_\theta}_{r+\lambda c + \tau\psi_\theta}(s, a)\nabla_\theta \pi_\theta(a|s) +\underbrace{\sum_{a\in\mathcal{A}} \pi_\theta(a|s)\nabla_{\theta}Q^{\pi_\theta}(s, a)}_{T_0}
    \end{split}
\end{align}

Applying the Bellman equation, one obtains,
\begin{align}
    \begin{split}
        T_0 &= \sum_{a\in\mathcal{A}} \pi_\theta (a|s) \nabla_\theta \left\lbrace r(s, a) + \lambda c(s, a) - \tau\log\pi_\theta(a|s) + \gamma \sum_{s'\in \mathcal{S}} P(s'|s, a)V^{\pi_\theta}_{r+\lambda c + \tau \psi_\theta}(s') \right\rbrace \\
        & = -\tau \underbrace{\sum_{a\in\mathcal{A}} \pi_\theta(a|s)\nabla_\theta\log \pi_\theta(a|s)}_{=0} +  \gamma \sum_{s'\in\mathcal{S}}\sum_{a\in\mathcal{A}}\pi_\theta(a|s)P(s'|s, a) \nabla_\theta V^{\pi_\theta}_{r+\lambda c + \tau \psi_\theta}(s')\\
        & = \gamma \sum_{s'\in \mathcal{S}} \mathrm{Pr}\left(s_1 = s'\big| s_0=s, \pi_\theta\right) \nabla_\theta V^{\pi_\theta}_{r+\lambda c+ \tau \psi_\theta}(s')
    \end{split}
\end{align}

Combining the above results, we obtain,
\begin{align}
    \label{eq:appndx_grad_V_recursion}
    \begin{split}
        &\nabla_\theta V^{\pi_\theta}_{r+\lambda c + \tau \psi_\theta}(s) \\
        &= \sum_{a\in\mathcal{A}}Q^{\pi_\theta}_{r+\lambda c + \tau\psi_\theta}(s, a)\nabla_\theta \pi_\theta(a|s) + \gamma \sum_{s'\in \mathcal{S}} \mathrm{Pr}\left(s_1 = s'\big| s_0=s, \pi_\theta\right) \nabla_\theta V^{\pi_\theta}_{r+\lambda c+ \tau \psi_\theta}(s')\\
        &  \overset{(a)}{=} \sum_{a\in\mathcal{A}}A^{\pi_\theta}_{r+\lambda c + \tau\psi_\theta}(s, a)\nabla_\theta \pi_\theta(a|s) + \gamma \sum_{s'\in \mathcal{S}} \mathrm{Pr}\left(s_1 = s'\big| s_0=s, \pi_\theta\right) \nabla_\theta V^{\pi_\theta}_{r+\lambda c+ \tau \psi_\theta}(s')\\
        & \overset{(b)}{=} \sum_{s'\in\mathcal{S}}\sum_{t=0}^{T-1} \gamma^t \mathrm{Pr}(s_t=s'|s_0=s, \pi_\theta) \sum_{a\in\mathcal{A}} A^{\pi_\theta}_{r+\lambda c+ \tau\psi_\theta}(s, a) \nabla_\theta \pi_\theta(a|s) \\
        & \hspace{5cm}+ \gamma^T \sum_{s'\in\mathcal{S}} \mathrm{Pr}(s_T=s'|s_0=s, \pi_\theta) \nabla_\theta V^{\pi_\theta}(s')
    \end{split}
\end{align}
where $(a)$ uses the fact that $\sum_a \nabla_\theta \pi_\theta(a|s)=0$, $\forall s\in\mathcal{S}$, and $(b)$ is obtained by repeatedly applying the recursion. Notice that, for finite $\mathcal{S}$, we have the following limit.
\begin{align}
    \gamma^T \sum_{s'\in\mathcal{S}} \mathrm{Pr}(s_T=s'|s_0=s, \pi_\theta) \nabla_\theta V^{\pi_\theta}(s') \rightarrow \mathbf{0}~\text{as}~T\rightarrow \infty
\end{align}

Finally, applying $T\rightarrow \infty$ to \eqref{eq:appndx_grad_V_recursion}, we obtain the following,
\begin{align}
    \begin{split}
        \nabla_\theta \mathcal{L}_\tau (\theta, \lambda) &= \mathbf{E}_{s\sim \rho}\left[\nabla_\theta V^{\pi_\theta}_{r+\lambda c+ \tau \psi_\theta}(s)\right]\\
        &= \sum_{s'\in\mathcal{S}}\sum_{t=0}^{\infty} \gamma^t \mathrm{Pr}(s_t=s'|s_0\sim \rho, \pi_\theta) \sum_{a\in\mathcal{A}} Q^{\pi_\theta}_{r+\lambda c+ \tau\psi_\theta}(s, a) \nabla_\theta \pi_\theta(a|s)\\
        &=\dfrac{1}{1-\gamma}\mathbf{E}_{(s, a)\sim \nu^{\pi_\theta}}\left[A^{\pi_\theta}_{r+\lambda c+ \tau \psi_\theta}(s, a)\nabla_\theta \log \pi_\theta(a|s)\right]
    \end{split}
\end{align}

This establishes the lemma. It is worth pointing out that, although our proof is similar to that of the standard policy gradient theorem \cite{sutton1999policy}, there are some differences. In particular, the standard proof assumes the reward/utility function to be independent of $\theta$. This does not hold in our case due to the presence of $\psi_\theta$.
\section{Proof of Lemma \ref{lemma:advantage_bound}}

Fix a tuple $(\theta, \lambda, \tau, s)$. Since $r$ and $c$ are absolutely bounded in $[0, 1]$, we get,
\begin{align}
\label{eq:appndx_50_washim}
    \left\vert A^{\pi_\theta}_{r+\lambda c}(s, a)\right\vert \leq \left\vert Q^{\pi_\theta}_{r+\lambda c}(s, a)\right\vert + \left\vert V^{\pi_\theta}_{r+\lambda c}(s)\right\vert \leq \dfrac{2(1+\lambda)}{1-\gamma}, ~\forall a
\end{align}

Furthermore, observe that,
\begin{align}
\label{eq:appndx_bound_V_psi_theta}
    \begin{split}
        0\leq V^{\pi_\theta}_{\psi_\theta}(s) &= \mathbf{E}_{\pi_\theta}\left[\sum_{t=0}^{\infty} -\gamma^t \log \pi_\theta (a_t|s_t)\bigg| s_0=s\right]\\
        &= -\dfrac{1}{1-\gamma}\sum_{s'\in \mathcal{S}}d^{\pi_\theta}_s(s')\sum_{a\in\mathcal{A}}\pi_\theta(a|s') \log \pi_\theta(a|s')\overset{(a)}{\leq} \dfrac{1}{1-\gamma}\log A
    \end{split}
\end{align}
where $d^{\pi_\theta}_s$ is defined similarly as $d^{\pi_\theta}$ given in \eqref{eq:def_d_pi} but with the conditional event $s_0\sim \rho$ being replaced by $s_0=s$. Inequality $(a)$ follows from the concavity of the $\log$ function and the fact that $d^{\pi_\theta}_s\in \Delta(\mathcal{S})$. We deduce the following.
\begin{align}
\label{eq:appndx_52_washim}
    \begin{split} 
        &\mathbf{E}_{a\sim \pi_\theta(s)}\left[\left\vert A^{\pi_\theta}_{\psi_\theta}(s, a)\right\vert^2\right]\\
        &\leq 2\mathbf{E}_{a\sim \pi_\theta(s)}\left[\left\vert Q^{\pi_\theta}_{\psi_\theta}(s, a)\right\vert^2\right] + 2\left[V^{\pi_\theta}_{\psi_\theta}(s)\right]^2\\
        &\overset{(a)}{\leq} 2\mathbf{E}_{a\sim \pi_\theta(s)}\left[\left\vert -\log \pi_\theta(a|s) + \gamma\mathbf{E}_{s'\sim P(s, a)}\left[ V^{\pi_\theta}_{\psi_\theta}(s')\right]\right\vert^2\right] +\dfrac{2(\log A)^2}{(1-\gamma)^2}\\
        &\overset{(b)}{\leq} 4\sum_{a\in\mathcal{A}} \pi_\theta(a|s)\left[-\log \pi_\theta (a|s) \right]^2 + \dfrac{2(\log A)^2}{(1-\gamma)^2} \left[1+2\gamma^2\right]\overset{(c)}{\leq} \frac{16}{e^2}A + \dfrac{6(\log A)^2}{(1-\gamma)^2}
    \end{split}
\end{align}
where $(a)$ uses the Bellman equation and \eqref{eq:appndx_bound_V_psi_theta}. Inequality $(b)$ follows from the realization that \eqref{eq:appndx_bound_V_psi_theta} is satisfied by every $s\in \mathcal{S}$. Finally, $(c)$ uses the fact that $x^2\exp(-x)\leq 4e^{-2}$, $\forall x\geq 0$. Combining \eqref{eq:appndx_50_washim} and \eqref{eq:appndx_52_washim}, we arrive at,
\begin{align}
    \begin{split}
        \mathbf{E}_{a\sim \pi_\theta(s)}\left[\left\vert A^{\pi_\theta}_{r+\lambda c+ \tau \psi_\theta}(s, a)\right\vert^2\right] &\leq 2\mathbf{E}_{a\sim \pi_\theta(s)}\left[\left\vert A^{\pi_\theta}_{r+\lambda c}(s, a)\right\vert^2\right] + 2\tau^2 \mathbf{E}_{a\sim \pi_\theta(s)}\left[\left\vert A^{\pi_\theta}_{\psi_\theta}(s, a)\right\vert^2\right]\\
        &\overset{}{\leq} \dfrac{8(1+\lambda)^2}{(1-\gamma)^2} + \tau^2\left[\dfrac{32}{e^2}A+\dfrac{12(\log A)^2}{(1-\gamma)^2}\right]
    \end{split}
\end{align}

This concludes the lemma.
\section{Proof of Lemma \ref{lemma_unbiased}}

Fix a tuple $(\theta, \lambda, \tau, \omega)$. The first statement can be proven by observing that,
\begin{align}
    \begin{split}
        \mathbf{E}&\left[\hat{J}_{c, \rho}(\theta)\big| \theta\right] = (1-\gamma) \mathbf{E}\left[\sum_{t=0}^\infty \gamma^t \sum_{j=0}^{t} c(s_j, a_j)\bigg| s_0\sim\rho, \pi_\theta \right] \\
        &=(1-\gamma) \mathbf{E}\left[\sum_{j=0}^\infty  c(s_j, a_j)\sum_{t=j}^{\infty} \gamma^t \bigg| s_0\sim\rho, \pi_\theta \right]=\mathbf{E}\left[\sum_{j=0}^\infty  \gamma^j c(s_j, a_j)  \bigg| s_0\sim\rho, \pi_\theta \right]= J_{c, \rho}(\theta)
    \end{split}
\end{align}

To prove the second statement, we will first show that the state $\hat{s}$ generated by Algorithm \ref{algo_sampling} is indeed distributed according to the state-occupancy measure $d^{\pi_\theta}$. Note that,
\begin{align}
     \mathrm{Pr}(\hat{s}=s|\rho, \pi_\theta) = (1-\gamma) \sum_{t=0}^\infty \gamma^t \mathrm{Pr}(s_t=s|s_0\sim \rho, \pi_\theta) = d^{\pi_\theta}(s)
\end{align}

Next, we will show that the $Q$ and $V$ function estimates obtained by Algorithm \ref{algo_sampling} for arbitrary utility function, $g$, are unbiased. Note the following chain of equalities for a given state-action pair $(s, a)$.
 \begin{align}
    \label{eq_appndx_unbiased_Q}
    \begin{split}
        \mathbf{E}\left[\hat{Q}_g^{\pi_\theta}(s, a)\big| \theta, s, a\right] &= (1-\gamma) \mathbf{E}\left[\sum_{t=0}^\infty \gamma^t \sum_{j=0}^t g(s_i, a_i)\bigg| s_0=s, a_0=a, \pi_\theta \right]\\
        &=(1-\gamma) \mathbf{E}\left[\sum_{j=0}^\infty  g(s_i, a_i) \sum_{t=j}^t \gamma^t\bigg| s_0=s, a_0=a, \pi_\theta \right]\\
        &= \mathbf{E}\left[\sum_{j=0}^\infty  \gamma^i g(s_i, a_i) \bigg| s_0=s, a_0=a, \pi_\theta \right] = Q^{\pi_\theta}_g(s, a)
    \end{split}
\end{align}

Similarly, we can also demonstrate that, $\mathbf{E}[\hat{V}^{\pi_\theta}_g(s)\big| \theta, s] = V^{\pi_\theta}_g(s)$. It, hence, follows that $\mathbf{E}[\hat{A}^{\pi_\theta}_g(s, a)\big| \theta, s, a] = A^{\pi_\theta}_g(s, a)$ for each $(s, a)$. Combining these results, we deduce the following.
\begin{align}
            \mathbf{E}\left[\hat{F}(\theta)\big|\theta\right] 
            &=\mathbf{E}_{\hat{s}\sim d^{\pi_\theta}}\mathbf{E}_{a\sim \pi_\theta(\hat{s})} \left[\nabla_\theta \log \pi_\theta(a|\hat{s})\otimes \nabla_\theta \log \pi_\theta (a|\hat{s})\big| \theta\right] = F(\theta),\\
            \begin{split}
                \mathbf{E}\left[\hat{H}_\tau(\theta, \lambda)\big| \theta, \lambda\right] &= \mathbf{E}_{\hat{s}\sim d^{\pi_\theta}}\mathbf{E}_{a\sim \pi_\theta(\hat{s})}\left[\mathbf{E}\left[\hat{A}^{\pi_\theta}_{r+\lambda c + \tau \psi_\theta}(\hat{s}, a)\bigg|\theta, \lambda, \hat{s}, a\right]\nabla_\theta \log \pi_\theta(a|\hat{s})\bigg| \theta, \lambda \right]\\
                & =\mathbf{E}_{(\hat{s}, a)\sim \nu_\rho^{\pi_\theta}}\left[A^{\pi_\theta}_{r+\lambda c+ \tau \psi_\theta}(\hat{s}, a)\nabla_\theta \log \pi_\theta(a|\hat{s})\big| \theta, \lambda \right] = H_{\tau}(\theta, \lambda)
            \end{split}
    \end{align}

    The unbiasedness of the gradient estimator can be proved by combining the above two results.

\section{Proof of Lemma \ref{lemma_grad_square_bound}}
    Recall the definition of $H_\tau(\theta, \lambda)$ in Lemma \ref{lemma:grad_compute}. Using the Cauchy-Schwarz inequality, we get,
    \begin{align*}
        \Vert H_\tau(\theta, \lambda)\Vert^2
        &\leq \mathbf{E}_{(s, a)\sim \nu^{\pi_\theta}}\left[\left\vert A^{\pi_\theta}_{r+\lambda c + \tau \psi_\theta}(s, a)\right\vert^2 \left\Vert\nabla_\theta \log \pi_\theta(a|s)\right\Vert^2\right]\\
        &\overset{(a)}{\leq}G^2\mathbf{E}_{s\sim d^{\pi_\theta}}\mathbf{E}_{a\sim\pi_\theta(s)}\left[\left\vert A^{\pi_\theta}_{r+\lambda c + \tau \psi_\theta}(s, a)\right\vert^2\right]
    \end{align*}
    where $(a)$ follows from Assumption \ref{ass_score}. The lemma can now be concluded using Lemma \ref{lemma:grad_compute} and \ref{lemma:advantage_bound}. 
\section{Proof of Lemma \ref{lemma_recursion_phi_k}}

\textbf{Step 1:} We use the following notations: $\omega^*_{k,\tau}\triangleq \omega^*_\tau (\theta_k, \lambda_k)$,  $\psi_k(s, a) \triangleq -\log \pi_{\theta_k}(a|s)$, and $\psi_\tau^*(s, a)\triangleq -\log \pi_\tau^*(a|s)$, $\forall (s, a)$. Observe the following chain of inequalities.
\begin{align}
    \begin{split}
        &\mathrm{KL}_k^\tau - \mathrm{KL}_{k+1}^\tau=\mathbf{E}_{(s, a)\sim \nu^{\pi_\tau^*}}\bigg[\log\frac{\pi_{\theta_{k+1}}(a\vert s)}{\pi_{\theta_k}(a\vert s)}\bigg]\\
        &\overset{(a)}\geq\mathbf{E}_{(s, a)\sim \nu^{\pi_\tau^*}}[\nabla_\theta\log\pi_{\theta_k}(a\vert s)\cdot(\theta_{k+1}-\theta_k)]-\frac{B}{2}\Vert\theta_{k+1}-\theta_k\Vert^2\\
        &\overset{(b)}{=}\eta\mathbf{E}_{(s, a)\sim \nu^{\pi_\tau^*}}[\nabla_{\theta}\log\pi_{\theta_k}(a\vert s)\cdot\omega_k]-\frac{B\eta^2}{2}\Vert\omega_k\Vert^2\\
        &=\eta\mathbf{E}_{(s, a)\sim \nu^{\pi_\tau^*}}\left[\nabla_\theta\log\pi_{\theta_k}(a\vert s)\cdot\omega^*_{k,\tau}\right] + \eta\mathbf{E}_{(s, a)\sim \nu^{\pi_\tau^*}}[\nabla_\theta\log\pi_{\theta_k}(a\vert s)\cdot(\omega_k-\omega^*_{k, \tau})]-\frac{B\eta^2}{2}\Vert\omega_k\Vert^2\\
        &\overset{(b)}{=}\eta\underbrace{\left[J^{\pi_\tau^*}_{r+\lambda_k c + \tau \psi_k} - J^{\pi_{\theta_k}}_{r+\lambda_k c + \tau \psi_k}\right]}_{T_0} + \eta\underbrace{\mathbf{E}_{(s, a)\sim \nu^{\pi_\tau^*}}\left[\nabla_\theta\log\pi_{\theta_k}(a\vert s)\cdot \omega^*_{k, \tau}-\dfrac{1}{1-\gamma}A^{\pi_{\theta_k}}_{r+\lambda_k c+ \tau \psi_k}\right]}_{T_1}\\
        &\hspace{1cm}+ \eta\mathbf{E}_{(s, a)\sim \nu^{\pi_\tau^*}}[\nabla_\theta\log\pi_{\theta_k}(a\vert s)\cdot(\omega_k-\omega^*_{k, \tau})]-\frac{B\eta^2}{2}\Vert\omega_k\Vert^2
    \end{split}
\end{align}
where $(a)$ follows from Assumption \ref{ass_score} and $(b)$, $(c)$ utilize \eqref{eq:policy_par_update} and Lemma \ref{lemma_performance_diff} respectively. Using Assumption \ref{ass_epsilon_bias} and the Cauchy-Schwarz inequality, we derive the following.
\begin{align}
    \begin{split}
        T_1\geq -\left\lbrace \mathbf{E}_{(s, a)\sim \nu^{\pi_\tau^*}}\left[\nabla_\theta\log\pi_{\theta_k}(a\vert s)\cdot \omega^*_{k, \tau}-\dfrac{1}{1-\gamma}A^{\pi_{\theta_k}}_{r+\lambda_k c+ \tau \psi_k}\right]^2 \right\rbrace^{\frac{1}{2}}\geq -\sqrt{\epsilon_{\mathrm{bias}}}
    \end{split}
\end{align}

Moreover, the term $T_0$ can be decomposed as follows,
\begin{align}
    \begin{split}
        T_0 &= \left[J^{\pi_\tau^*}_{r+\lambda_k c + \tau \psi_\tau^*} - J^{\pi_{\theta_k}}_{r+\lambda_k c + \tau \psi_k}\right] + \left[J^{\pi_\tau^*}_{r+\lambda_k c + \tau \psi_k} - J^{\pi_\tau^*}_{r+\lambda_k c + \tau \psi_\tau^*}\right]\\
        & = J^{\pi_\tau^*}_{r+\lambda_k c + \tau \psi_\tau^*} - J^{\pi_{\theta_k}}_{r+\lambda_k c + \tau \psi_k} + \dfrac{\tau}{1-\gamma}\mathrm{KL}^{\tau}_k \\
        &\geq J^{\pi_\tau^*}_{r+\lambda_k c + \tau \psi_\tau^*} - J^{\pi_{\theta_k}}_{r+\lambda_k c + \tau \psi_k} + \tau\mathrm{KL}^{\tau}_k
    \end{split}
\end{align}

Combining the above results and taking expectations, we obtain,
\begin{align}
    \label{eq:appndx_recursion_theta}
    \begin{split}
        \mathbf{E}&\left[\mathrm{KL}^\tau_{k+1}\right] \leq (1-\eta \tau)\mathbf{E}\left[\mathrm{KL}_k^\tau\right] + \eta \mathbf{E}\left[J^{\pi_{\theta_k}}_{r+\lambda_k c + \tau \psi_k} -J^{\pi_\tau^*}_{r+\lambda_k c + \tau \psi_\tau^*} \right]\\
        &+\eta \sqrt{\epsilon_{\mathrm{bias}}} + \eta \mathbf{E}_{(s, a)\sim \nu^{\pi_\tau^*}}\mathbf{E}\left[\nabla_\theta\log\pi_{\theta_k}(a\vert s)\cdot(\omega^*_{k, \tau}-\mathbf{E}\left[\omega_k\big| \theta_k, \lambda_k\right])\right] + \frac{B\eta^2}{2}\mathbf{E}\Vert\omega_k\Vert^2\\
        &\overset{(a)}{\leq} (1-\eta \tau)\mathbf{E}\left[\mathrm{KL}_k^\tau\right] + \eta \mathbf{E}\left[J^{\pi_{\theta_k}}_{r+\lambda_k c + \tau \psi_k} -J^{\pi_\tau^*}_{r+\lambda_k c + \tau \psi_\tau^*} \right]\\
        &+\eta \sqrt{\epsilon_{\mathrm{bias}}} + \eta G\mathbf{E}\left\Vert\omega^*_{k, \tau}-\mathbf{E}\left[\omega_k\big| \theta_k, \lambda_k\right]\right\Vert + \frac{B\eta^2}{2}\mathbf{E}\Vert\omega_k\Vert^2
    \end{split} 
\end{align}
where the inequality $(a)$ is due to Assumption \ref{ass_score}.\\

\noindent\textbf{Step 2:} Here, the dual update equation will be utilized. Observe the following.
\begin{align}
    \begin{split}
        \eta\mathbf{E}\left[\mathcal{L}_{\tau}(\theta_k,\lambda_k) - \mathcal{L}_{\tau}(\theta_k, \lambda_\tau^*)\right] &= \eta\mathbf{E}\left[(\lambda_k-\lambda_\tau^*)J_{c}(\theta_k) + \frac{\tau}{2}(\lambda_k)^2 - \frac{\tau}{2}(\lambda_\tau^*)^2\right] \\
        & = \eta\mathbf{E}\left[(\lambda_k-\lambda_\tau^*)\left(J_{c}(\theta_k) + \tau \lambda_k\right)\right] - \dfrac{\eta\tau}{2}\mathbf{E}\left[(\lambda_k - \lambda_\tau^*)^2 \right]
    \end{split}
\end{align}

Using the dual update \eqref{eq:lambda_update} and the non-expansiveness of the projection operator $\mathcal{P}_{\Lambda}$, we can write the following.
\begin{align}
    \begin{split}
        \dfrac{1}{2}\mathbf{E}&\left[(\lambda_{k+1}-\lambda_\tau^*)^2\right] \leq \dfrac{1}{2}\mathbf{E}\left[\left\lbrace\lambda_{k}-\lambda_\tau^* - \eta\left(\hat{J}_c(\theta_k)+\tau \lambda_k\right)\right\rbrace^2\right]\\
        & = \dfrac{1}{2}\mathbf{E}\left[(\lambda_{k}-\lambda_\tau^*)^2\right] -  \eta\mathbf{E}\left[(\lambda_k-\lambda_\tau^*)\left(\hat{J}_{c}(\theta_k)+\tau\lambda_k\right)\right] + \dfrac{\eta^2}{2} \mathbf{E}\left[\left(\hat{J}_c(\theta_k)+\tau \lambda_k\right)^2\right]\\
        &\overset{(a)}{\leq} \dfrac{1}{2}\mathbf{E}\left[(\lambda_{k}-\lambda_\tau^*)^2\right] -  \eta\mathbf{E}\left[(\lambda_k-\lambda_\tau^*)\left(J_{c}(\theta_k)+\tau\lambda_k\right)\right] + \eta^2 \mathbf{E}\left[\hat{J}^2_c(\theta_k)\right]+\eta^2\tau^2\lambda_{\max}^2
    \end{split}
\end{align}
where $(a)$ uses Lemma \ref{lemma_unbiased}, the fact that $\lambda_k$ and $\hat{J}_c(\theta_k)$ are conditionally independent given $\theta_k$ and $(a+b)^2\leq 2(a^2+b^2)$ for any $a, b$. Furthermore, Algorithm \ref{algo_sampling} states that $\hat{J}^2_c(\theta_k)$ can take a value of at most $(j+1)^2$ with probability $(1-\gamma)\gamma^j$, $j\in\{0, 1, \cdots\}$. This leads to the following bound.
\begin{align}
    \mathbf{E}\left[\hat{J}_c^2(\theta_k)\right] \leq \sum_{j=0}^{\infty} (1-\gamma)(j+1)^2\gamma^j = \dfrac{1+\gamma}{(1-\gamma)^2}<\dfrac{2}{(1-\gamma)^2}
\end{align}

Combining the above results, we finally obtain,
\begin{align}
    \label{eq:appndx_recursion_lambda}
    \begin{split}
        \dfrac{1}{2}\mathbf{E}\left[\left(\lambda_{k+1}-\lambda_\tau^*\right)^2\right] &\leq \dfrac{1}{2}(1-\eta\tau)\mathbf{E}\left[\left(\lambda_{k}-\lambda_\tau^*\right)^2\right] + \eta\mathbf{E}\left[\mathcal{L}_{\tau}(\theta_k, \lambda_\tau^*) - \mathcal{L}_{\tau}(\theta_k, \lambda_k)\right] \\
        & \hspace{1cm} + \eta^2\left[\dfrac{2}{(1-\gamma)^2}+\tau^2 \lambda_{\max}^2\right]
    \end{split}
\end{align}

\noindent\textbf{Step 3:} It is easy to verify that 
\begin{align}
    \label{eq_last_minute}
    \left[J^{\pi_{\theta_k}}_{r+\lambda_k c + \tau \psi_k} -J^{\pi_\tau^*}_{r+\lambda_k c + \tau \psi_\tau^*}\right] + \bigg[\mathcal{L}_{\tau}(\theta_k, \lambda_\tau^*) - \mathcal{L}_{\tau}(\theta_k, \lambda_k)\bigg] = \mathcal{L}_{\tau}(\theta_k, \lambda_\tau^*) - \mathcal{L}_{\tau}\left(\pi_\tau^*,\lambda_k\right)
\end{align}

Adding \eqref{eq:appndx_recursion_theta} and \eqref{eq:appndx_recursion_lambda}, and using the definition of $\Phi_k^\tau$, we obtain the following inequality.
\begin{align}
    \begin{split}
        \Phi_{k+1}^{\tau} &\leq (1-\eta\tau)\Phi_k^{\tau} + \eta \sqrt{\epsilon_{\mathrm{bias}}} + \eta G\mathbf{E}\left\Vert\omega^*_{k, \tau}-\mathbf{E}\left[\omega_k\big| \theta_k, \lambda_k\right]\right\Vert + \frac{B\eta^2}{2}\mathbf{E}\Vert\omega_k\Vert^2 \\
        &\hspace{1cm} + \eta^2\left[\dfrac{2}{(1-\gamma)^2}+\tau^2 \lambda_{\max}^2\right] + \eta\mathbf{E}\underbrace{\left[ \mathcal{L}_{\tau}(\theta_k, \lambda_\tau^*) - \mathcal{L}_{\tau}\left(\pi_\tau^*,\lambda_k\right)\right]}_{T_2}
    \end{split}
\end{align}

The lemma can be proven by showing that $T_2\leq 0$ which is a direct consequence of Lemma \ref{lemma_primal_dual_existence}. In particular,
\begin{align}
    \mathcal{L}_\tau(\pi_{\theta_k}, \lambda_\tau^*) \leq \max_{\pi\in\Pi} \mathcal{L}_\tau (\pi, \lambda_\tau^*) = \min_{\lambda\geq 0}\mathcal{L}_{\tau}(\pi_\tau^*, \lambda) \leq \mathcal{L}_\tau(\pi_\tau^*, \lambda_k) 
\end{align}
\section{Proof of Lemma \ref{lemma_npg_variance}}

Define a function $\hat{\zeta}^{\tau}_{\theta, \lambda}:\mathcal{S}\times \mathcal{A}\rightarrow \mathbb{R}$ as follows.
\begin{align}
    \hat{\zeta}^{\tau}_{\theta, \lambda}(s, a) \triangleq \nabla_\theta \log \pi_\theta(a|s)\cdot \omega_\tau^*(\theta, \lambda) - \dfrac{1}{1-\gamma}\hat{A}^{\pi_\theta}_{r+\lambda c + \tau \psi_\theta}(s, a), ~\forall (s, a)
\end{align}

From \eqref{eq:estimation_gradient}, the gradient estimation can be written as follows.
\begin{align}
    \hat{\nabla}_{\omega} \mathcal{E}^{\tau}_{\nu^{\pi_\theta}}(\omega_\tau^*(\theta, \lambda), \theta, \lambda) = \mathbf{E}_{a\sim \pi_\theta(\hat{s})}\left[\hat{\zeta}^{\tau}_{\theta, \lambda}(\hat{s}, a)\nabla_\theta \log \pi_\theta(a|\hat{s})\right]
\end{align}
where the distribution of $\hat{s}$ can be shown to be $d^{\pi_\theta}$ (see the proof of Lemma \ref{lemma_unbiased}). Therefore,
\begin{align}
    \label{eq:appndx_lemma_6_first}
    \begin{split}
        \mathbf{E}&\left[\hat{\nabla}_{\omega} \mathcal{E}^{\tau}_{\nu^{\pi_\theta}}(\omega_\tau^*(\theta, \lambda), \theta, \lambda) \otimes \hat{\nabla}_{\omega} \mathcal{E}^{\tau}_{\nu^{\pi_\theta}}(\omega_\tau^*(\theta, \lambda), \theta, \lambda)\right]\\
        & = \mathbf{E}_{s\sim d^{\pi_\theta}}\hat{\mathbf{E}}_s^{\theta}\left[\mathbf{E}_{a\sim \pi_\theta(s)}\left[\hat{\zeta}^{\tau}_{\theta, \lambda}(s, a)\nabla_\theta \log \pi_\theta(a|s)\right] \otimes \mathbf{E}_{a\sim \pi_\theta(s)}\left[\hat{\zeta}^{\tau}_{\theta, \lambda}(s, a)\nabla_\theta \log \pi_\theta(a|s)\right]\right]\\
        &\preceq \mathbf{E}_{s\sim d^{\pi_\theta}}\left[\hat{\mathbf{E}}_s^{\theta}\left[\mathbf{E}_{a\sim \pi_\theta(s)}\left[\left(\hat{\zeta}^{\tau}_{\theta, \lambda}(s, a)\right)^2\right]\right]\mathbf{E}_{a\sim \pi_\theta(s)}\left[\nabla_\theta \log \pi_\theta(a|s)\otimes \log \pi_\theta(a|s)\right]\right]\\
        &=\mathbf{E}_{s\sim d^{\pi_\theta}}\left[\mathbf{E}_{a\sim \pi_\theta(s)}\left[\hat{\mathbf{E}}_{s,a}^{\theta}\left[\left(\hat{\zeta}^{\tau}_{\theta, \lambda}(s, a)\right)^2\right]\right]\mathbf{E}_{a\sim \pi_\theta(s)}\left[\nabla_\theta \log \pi_\theta(a|s)\otimes \log \pi_\theta(a|s)\right]\right]
    \end{split}
\end{align}
where the semidefinite inequality is a consequence of the Cauchy-Schwarz inequality and the expectation $\hat{\mathbf{E}}_s^{\theta}$ is computed over the distribution of all sample paths originating from $s$ that are used to estimate $\{\hat{A}^{\pi_\theta}_{r+\lambda c+\tau\psi_\theta}(s, a)\}_{a\in\mathcal{A}}$. In a similar way,  $\hat{\mathbf{E}}_{s, a}^{\theta}$ is defined as the expectation over the distribution of all sample paths that are used in estimating $\hat{A}^{\pi_\theta}_{r+\lambda c+\tau\psi_\theta}(s, a)$. Note that,
\begin{align}
\label{eq:appndx_second_order_zeta_hat}
    \begin{split}
        &\left(\hat{\zeta}^{\tau}_{\theta, \lambda}(s, a)\right)^2 \leq 2\left\Vert\nabla_\theta \log \pi_\theta(a|s)\right\Vert^2 \left\Vert \omega_\tau^*(\theta, \lambda)\right\Vert^2 + \dfrac{2}{(1-\gamma)^2}\left\vert \hat{A}^{\pi_\theta}_{r+\lambda c+\tau \psi_\theta}(s, a)\right\vert^2\\
        &\overset{(a)}{\leq} 2G^2\left\Vert F(\theta)^{\dagger} \nabla_\theta \mathcal{L}_\tau(\theta, \lambda)\right\Vert^2 + \dfrac{6}{(1-\gamma)^2}\left\lbrace\left[ \hat{A}^{\pi_\theta}_r(s, a)\right]^2 + \lambda^2\left[ \hat{A}^{\pi_\theta}_c(s, a)\right]^2 + \tau^2 \left[ \hat{A}^{\pi_\theta}_{\psi_\theta}(s, a)\right]^2 \right\rbrace\\
        &\overset{(b)}{\leq} \dfrac{2G^4}{\mu_F^2(1-\gamma)^2}L^2_{\tau, \lambda} + \dfrac{6}{(1-\gamma)^2}\left\lbrace\left[ \hat{A}^{\pi_\theta}_r(s, a)\right]^2 + \lambda^2\left[ \hat{A}^{\pi_\theta}_c(s, a)\right]^2 + \tau^2 \left[ \hat{A}^{\pi_\theta}_{\psi_\theta}(s, a)\right]^2 \right\rbrace
    \end{split}
\end{align}
where $(a)$ follows from Assumption \ref{ass_score} and the definition of $\omega_\tau^*(\theta, \lambda)$ whereas $(b)$ results from Assumption \ref{ass_fisher} and Lemma \ref{lemma_grad_square_bound}. Note that, $\forall g\in \{r, c\}$, and arbitrary $(s, a)$, both $[\hat{Q}^{\pi_\theta}_g(s, a)]^2$ and $[\hat{V}^{\pi_\theta}_g(s)]^2$ can assume a value of at most $(j+1)^2$ with probability $(1-\gamma)\gamma^j$, $j\in\{0, 1, \cdots\}$ (see Algorithm \ref{algo_sampling}). Therefore, the following holds $\forall (s, a)$.
\begin{align}
    \label{eq:appndx_second_order_A_g}
    \begin{split}
        \max_{g\in\{r, c\}}\left\lbrace \hat{\mathbf{E}}_{s, a}^{\theta}\left[\left(\hat{A}^{\pi_\theta}_g(s, a)\right)^2\right]\right\rbrace & \leq 2\max_{g\in\{r, c\}}\left\lbrace \hat{\mathbf{E}}^{\theta}_{s, a}\left[\left(\hat{Q}^{\pi_\theta}_g(s, a)\right)^2\right] + \hat{\mathbf{E}}_{s, a}^{\theta}\left[\left(\hat{V}^{\pi_\theta}_g(s)\right)^2\right] \right\rbrace\\
        &\leq 4\max_{g\in\{r, c\}}\left\lbrace\max\left\lbrace \hat{\mathbf{E}}^{\theta}_{s, a}\left[\left(\hat{Q}^{\pi_\theta}_g(s, a)\right)^2\right],  \hat{\mathbf{E}}_{s, a}^{\theta}\left[\left(\hat{V}^{\pi_\theta}_g(s)\right)^2\right]\right\rbrace\right\rbrace \\
        &\leq 4(1-\gamma)\sum_{j=0}^{\infty} (j+1)^2 \gamma^j = \dfrac{4(1+\gamma)}{(1-\gamma)^2} < \dfrac{8}{(1-\gamma)^2}
    \end{split}
\end{align}

Moreover, observe that the following inequalities hold for arbitrary $s\in\mathcal{S}$.
\begin{align}
    \label{eq:appndx_sq_q_hat_psi_theta}
    \begin{split}
        \mathbf{E}_{a\sim\pi_\theta(s)}\hat{\mathbf{E}}_{s, a}^{\theta}&\left[\left(\hat{Q}^{\pi_\theta}_{\psi_\theta}(s, a)\right)^2\right] = (1-\gamma)\sum_{j=0}^{\infty}\gamma^j\sum_{a\in\mathcal{A}}\pi_\theta(a|s)\mathbf{E}^{\theta}_{j, s, a}\left[\left\lbrace \sum_{t=0}^j -\log \pi_\theta(a_t|s_t) \right\rbrace^2\right]\\
        &\overset{(a)}{\leq} (1-\gamma)\sum_{j=0}^{\infty}(j+1)\gamma^j\underbrace{\sum_{a\in\mathcal{A}}\pi_\theta(a|s)\mathbf{E}_{j, s, a}^{\theta}\left[\sum_{t=0}^j(  -\log \pi_\theta(a_t|s_t) )^2\right]}_{T}
    \end{split}
\end{align}
where $\mathbf{E}^{\theta}_{j, s, a}$ denotes the expectation over all $j+1$ length $\pi_\theta$-induced trajectories $\{(s_t, a_t)\}_{t=0}^j$ that start with the initial condition $s_0=s, a_0=a$. The inequality is a consequence of Cauchy-Schwarz inequality. The term $T$ can be bounded as follows.
\begin{align}
    \begin{split}
        T &= \sum_{a\in\mathcal{A}}\pi_\theta(a|s) \left[\big(-\log\pi_\theta(a|s)\big)^2 + \sum_{t=1}^j\sum_{s'\in\mathcal{S}}\mathrm{Pr}^{\pi_\theta}_t(s'|s, a)\sum_{a'\in\mathcal{A}} \pi_\theta(a'|s')\big(-\log \pi_\theta(a'|s') \big)^2\right]\\
        &
        \overset{(a)}{\leq} \dfrac{4}{e^2}(j+1)A
    \end{split}
\end{align}
where $\mathrm{Pr}^{\pi_\theta}_t(s'|s, a) = \mathrm{Pr}(s_t = s' | s_0 = s, a_0 = a, \pi_\theta)$. Inequality $(a)$ follows from the fact that $x^2\exp(-x)\leq 4e^{-2}$, $\forall x\geq 0$. Applying the above bound in \eqref{eq:appndx_sq_q_hat_psi_theta}, we deduce,
\begin{align}
    \mathbf{E}_{a\sim\pi_\theta(s)}\hat{\mathbf{E}}_{s, a}^{\theta}\left[\left(\hat{Q}^{\pi_\theta}_{\psi_\theta}(s, a)\right)^2\right] \leq \dfrac{4}{e^2}A(1-\gamma)\sum_{j=0}^{\infty}(j+1)^2\gamma^j< \dfrac{8Ae^{-2}}{(1-\gamma)^2}
\end{align}

In a similar manner, we can also show that,
\begin{align}
    \mathbf{E}_{a\sim\pi_\theta(s)}\hat{\mathbf{E}}_{s, a}^{\theta}\left[\left(\hat{V}^{\pi_\theta}_{\psi_\theta}(s)\right)^2\right] \leq  \dfrac{8Ae^{-2}}{(1-\gamma)^2}
\end{align}

Combining the above results, one arrives at,
\begin{align}
    \label{eq:appndx_second_order_A_psi}
    \mathbf{E}_{a\sim\pi_\theta(s)}\hat{\mathbf{E}}_{s, a}^{\theta}\left[\left(\hat{A}^{\pi_\theta}_{\psi_\theta}(s, a)\right)^2\right] &\leq 2 \mathbf{E}_{a\sim\pi_\theta(s)}\hat{\mathbf{E}}_{s, a}^{\theta}\left[\left(\hat{Q}^{\pi_\theta}_{\psi_\theta}(s, a)\right)^2\right] + 2 \mathbf{E}_{a\sim\pi_\theta(s)}\hat{\mathbf{E}}_{s, a}^{\theta}\left[\left(\hat{V}^{\pi_\theta}_{\psi_\theta}(s)\right)^2\right] \nonumber\\
    &\leq  \dfrac{32Ae^{-2}}{(1-\gamma)^2}
\end{align}

Combining \eqref{eq:appndx_second_order_zeta_hat}, \eqref{eq:appndx_second_order_A_g}, and \eqref{eq:appndx_second_order_A_psi}, we can write,
\begin{align}
    \begin{split}
        \mathbf{E}_{a\sim \pi_\theta(s)}\left[\hat{\mathbf{E}}_{s,a}^{\theta}\left[\left(\hat{\zeta}^{\tau}_{\theta, \lambda}(s, a)\right)^2\right]\right] \leq &\dfrac{2G^4}{\mu_F^2(1-\gamma)^2}L^2_{\tau, \lambda}  +  \dfrac{48}{(1-\gamma)^4}\left[1+\lambda^2+ 4Ae^{-2}\tau^2 \right] = \sigma^2_{\tau, \lambda}
    \end{split}
\end{align}

Substituting the above bound into \eqref{eq:appndx_lemma_6_first}, we conclude the lemma. 
\section{Proof of Lemma \ref{lemma_local_convergence}}

We prove Lemma \ref{lemma_local_convergence} using Corollary 2 of \cite{jain2018accelerating}. Observe the following three statements for a given $(\lambda, \tau)$.\\

\noindent$\mathbf{S1}:$ The following quantities exist and are finite $\forall \theta\in\mathbb{R}^{\mathrm{d}}$.
\begin{align}
    &F(\theta) \triangleq \mathbf{E}_{(s, a)\sim \nu^{\pi_{\theta}}}\big[\nabla_\theta \log\pi_{\theta}(a|s)\otimes \nabla_\theta \log\pi_{\theta}(a|s)\big],\\
    &G(\theta) \triangleq \mathbf{E}_{(s, a)\sim \nu^{\pi_{\theta}}}\big[\nabla_\theta \log\pi_{\theta}(a|s)\otimes \nabla_\theta \log\pi_{\theta}(a|s)\otimes \nabla_\theta \log\pi_{\theta}(a|s)\otimes \nabla_\theta \log\pi_{\theta}(a|s)\big]
\end{align}
$\mathbf{S2}:$ There exists $\sigma^2_{\tau, \lambda}$ such that the following is obeyed $\forall \theta \in \mathbb{R}^{\mathrm{d}}$ where $\omega_{\tau}^*(\theta, \lambda)$ is the minimizer of  $\mathcal{E}^{\tau}_{\nu^{\pi_\theta}}(\cdot, \theta, \lambda)$.
\begin{align}
    \label{eq:eq_46_appndx}
    \begin{split}
        \mathbf{E}&\left[\hat{\nabla}_{\omega}\mathcal{E}^{\tau}_{\nu^{\pi_\theta}}(\omega^*_{\tau}(\theta, \lambda), \theta, \lambda)\otimes\hat{\nabla}_{\omega}\mathcal{E}^{\tau}_{\nu^{\pi_\theta}}(\omega_{\tau}^*(\theta, \lambda), \theta, \lambda)\right]\preccurlyeq \sigma_{\tau, \lambda}^2 F(\theta)
    \end{split}
\end{align}
$\mathbf{S3:}$ There exists $\mu_F, G>0$ such that the following statements hold $\forall \theta \in \mathbb{R}^{\mathrm{d}}$.
\begin{align}
    (a)~&F(\theta)\succcurlyeq \mu_F I_{\mathrm{d}},\\
    (b)~& \mathbf{E}_{(s, a)\sim \nu^{\pi_\theta}}\left[\Vert \nabla_\theta \log\pi_{\theta}(a|s) \Vert^2 \nabla_\theta \log\pi_{\theta}(a|s)\otimes \nabla_\theta \log\pi_{\theta}(a|s)\right]\preccurlyeq G^2 F(\theta), \\
    (c)~& \mathbf{E}_{(s, a)\sim \nu^{\pi_\theta}}\left[\Vert \nabla_\theta \log\pi_{\theta}(a|s) \Vert^2_{F(\theta)^{\dagger}} \nabla_\theta \log\pi_{\theta}(a|s)\otimes \nabla_\theta \log\pi_{\theta}(a|s)\right]\preccurlyeq \dfrac{G^2}{\mu_F} F(\theta)
\end{align}
Statement $\mathbf{S1}$ is a consequence of Assumption \ref{ass_score} whereas $\mathbf{S2}$ results from Lemma \ref{lemma_npg_variance}. Statement $\mathbf{S3}(a)$ is identical to Assumption \ref{ass_fisher}, $\mathbf{S3}(b)$ follows from Assumption \ref{ass_score}, and finally, $\mathbf{S3}(c)$ can be deduced via Assumption \ref{ass_score} and \ref{ass_fisher}. We can, hence, apply Corollary 2 of \cite{jain2018accelerating} with $\kappa=\Tilde{\kappa}=G^2/\mu_F$ and derive the following result if $H>\bar{C}\sqrt{\kappa\Tilde{\kappa}}\log(\sqrt{\mathrm{d}}\sqrt{\kappa\Tilde{\kappa}})$, the learning rates are set as $\alpha = \frac{3\sqrt{5} \sqrt{\kappa\Tilde{\kappa}}}{1+3\sqrt{5\kappa\Tilde{\kappa}}}$, $\beta = \frac{1}{9\sqrt{\kappa\Tilde{\kappa}}}$, $\xi=\frac{1}{3\sqrt{5}\mu_F\sqrt{\kappa\Tilde{\kappa}}}$, and $\delta = \frac{1}{5G^2}$, and $\lambda_k\in \Lambda$. 
\begin{align}
\label{eq:eq_50_appndx}
\begin{split}
    & \mathbf{E}\left[l_k^{\tau}(\omega_k)\right] - l_k^{\tau}(\omega_{k, \tau}^*) \leq \dfrac{C}{2}\exp\left(-\dfrac{H}{20\sqrt{\kappa\Tilde{\kappa}}}\right)\left[l_k^{\tau}(\mathbf{0})-l_k^{\tau}(\omega_{k, \tau}^*)\right]+11\dfrac{\sigma^2_{\tau, \lambda_{\max}}\mathrm{d}}{H}, \\
    &\text{where~} l_k^{\tau}(\omega) \triangleq \mathcal{E}^{\tau}_{\nu^{\pi_{\theta_k}}}(\omega, \theta_k, \lambda_k), ~\forall \omega\in\mathbb{R}^{\mathrm{d}}, ~\text{and}~ \omega_{k,\tau}^* \triangleq \omega_\tau^*(\theta_k, \lambda_k)
\end{split}
\end{align}
In the above inequality, the expectation is over the distribution of $\omega_k$ for a given $(\theta_k, \lambda_k)$. The term $C$ is a universal constant. Note that $l_k^{\tau}(\omega_{k, \tau}^*)\geq 0$ and $l_k^{\tau}(\mathbf{0})$ is bounded above as follows. 
\begin{align}
\label{eq:eq_51_appndx}
    \begin{split}
        l_k^{\tau}(\mathbf{0})=\dfrac{1}{2}\mathbf{E}_{(s, a)\sim \nu^{\pi_{\theta_k}}}&\bigg[\dfrac{1}{1-\gamma}A_{r+\lambda_k c+ \tau \psi_k}^{\pi_{\theta_k}}(s,a)\bigg]^2\overset{(a)}{\leq} \dfrac{L_{\tau, \lambda_{\max}}^2}{2(1-\gamma)^2}
    \end{split}
\end{align}
where $(a)$ follows from Lemma \ref{lemma:advantage_bound}. Combining $\eqref{eq:eq_50_appndx}$,  $\eqref{eq:eq_51_appndx}$, and the fact that $l_k(\cdot)$ is $\mu_F$-strongly convex, we establish,
    \begin{align}
    \label{eq:eq_appndx_52}
    \begin{split}
        \mathbf{E}\Vert\omega_k - \omega_k^*\Vert^2 \leq \frac{2}{\mu_F} \big[\mathbf{E}\left[l_k(\omega_k)\right] - l_k(\omega_k^*)\big]&\leq 22\dfrac{\sigma^2_{\tau, \lambda_{\max}}\mathrm{d}}{\mu_F H}
        + C\exp\left(-\dfrac{\mu_F }{20G^2}H\right)\left[\dfrac{L_{\tau, \lambda_{\max}}^2}{\mu_F(1-\gamma)^2}\right]
    \end{split}
    \end{align}
The first statement can be proven by applying $H\geq 1$ and $\exp(-x)\leq 1$, $\forall x\geq 0$. We get the following for noiseless ($\sigma_{\tau, \lambda_{\max}}^2=0$) gradient updates.
    \begin{align}
    \label{eq:eq_appndx_53}
    \begin{split}
        \mathbf{E}\Vert(\mathbf{E}[\omega_k|\theta_k] - \omega_k^*)\Vert^2 &\leq  C\exp\left(-\dfrac{\mu_F }{20G^2}H\right)\left[\dfrac{L_{\tau, \lambda_{\max}}^2}{\mu_F(1-\gamma)^2}\right]
    \end{split}
    \end{align}
The second statement can be established from $\eqref{eq:eq_appndx_53}$ by applying Jensen's inequality on the function $f(x)=x^2$.

\section{Proof of Corollary \ref{corr_1}}

Lemma \ref{lemma_recursion_phi_k} establishes the following result $\forall k\in\{0, 1, \cdots, K-1\}$. 
\begin{align}
    \begin{split}
         \Phi_{k+1}^{\tau} &\leq (1-\eta\tau)\Phi_k^{\tau} + \eta \sqrt{\epsilon_{\mathrm{bias}}} \\
        &+ \underbrace{\eta G\mathbf{E}\left\Vert\mathbf{E}\left[\omega_k\big| \theta_k, \lambda_k\right] - \omega^*_{k, \tau}\right\Vert + \frac{B\eta^2}{2}\mathbf{E}\Vert\omega_k\Vert^2 
        + \eta^2\left[\dfrac{2}{(1-\gamma)^2}+\tau^2 \lambda_{\max}^2\right]}_{T_k}
    \end{split}
\end{align}

Using \eqref{eq:second_order_bound}, and Lemma \ref{lemma_local_convergence}, we can bound the term $T_k$ as follows.
\begin{align}
    T_k&\leq \eta \exp\left(-\dfrac{\mu_F }{40G^2}H\right)\underbrace{\left[ \dfrac{G \sqrt{C}L_{\tau, \lambda_{\max}}}{\sqrt{\mu_F}(1-\gamma)}\right]}_{\leq R_0}\nonumber\\
    &+ \eta^2\left\{\underbrace{\dfrac{B}{\mu_F}\left[22\sigma^2_{\tau, \lambda_{\max}}\mathrm{d} + \dfrac{\left(C+\mu_F^{-1}G^2\right)L_{\tau, \lambda_{\max}}^2}{(1-\gamma)^2}\right]}_{\leq R_2} + \underbrace{\left[\dfrac{2}{(1-\gamma)^2}+\tau^2 \lambda_{\max}^2\right]}_{\leq R_1}\right\}
\end{align}

where the upper bounds $\{R_l\}_{l\in\{0, 1, 2\}}$ follow from the fact that $\tau\in[0, 1]$ and $L_{\tau, \lambda_{\max}}$, $\sigma_{\tau, \lambda_{\max}}$ are strictly increasing functions of $\tau$. Therefore, we obtain the following simplified recursion $\forall k\in\{1, \cdots, K\}$.
\begin{align}
    \begin{split}
        \Phi_{k}^{\tau} &\leq (1-\eta\tau)\Phi_{k-1}^{\tau} + \eta \sqrt{\epsilon_{\mathrm{bias}}} + \eta R_0 \exp\left(-\dfrac{\mu_F}{40G^2}H\right) + \eta^2 (R_1+R_2)\\
        &\overset{(a)}{\leq} (1-\eta\tau)^k \Phi_0^{\tau} + \left[\sum_{l=0}^{k-1}(1-\eta\tau)^l\right]\times \left[\eta \sqrt{\epsilon_{\mathrm{bias}}} + \eta R_0 \exp\left(-\dfrac{\mu_F}{40G^2}H\right) + \eta^2 (R_1+R_2)\right]\\
        &\overset{(b)}{\leq} \exp(-\eta\tau k) \Phi_0^{\tau} +\dfrac{1}{\tau} \sqrt{\epsilon_{\mathrm{bias}}} + \dfrac{1}{\tau} R_0 \exp\left(-\dfrac{\mu_F}{40G^2}H\right) + \dfrac{\eta}{\tau} (R_1+R_2)
    \end{split}
\end{align}
where $(a)$ is obtained by repeatedly applying the recursion and $(b)$ uses the fact that $\sum_{l=0}^{\infty}(1-\eta\tau)^l = 1/\eta\tau$ and $(1-\eta\tau)^k\leq \exp(-\eta\tau k)$ whenever $\eta\tau<1$. This establishes the corollary.
\section{Proof of Theorem \ref{theorem_1}}

Observe the following chain of inequalities for a given $k\in\{0, \cdots, K\}$ and $g\in\{r, c\}$.
\begin{align}
    \begin{split}
        J^{\pi_\tau^*}_g - J^{\pi_{\theta_k}}_g &\overset{(a)}{=}\dfrac{1}{1-\gamma}\sum_{s\in\mathcal{S}} d^{\pi_\tau^*}(s)\sum_{a\in\mathcal{A}} \big\{\pi_\tau^*(a|s) - \pi_{\theta_k}(a|s)\big\}Q^{\pi_{\theta_k}}_g(s, a)\\
        &\leq \dfrac{1}{(1-\gamma)^2}\sum_{s\in\mathcal{S}}d^{\pi_\tau^*}(s)\Vert \pi_\tau^*(\cdot|s) - \pi_{\theta_k}(\cdot|s)\Vert_1\\
        &\overset{(b)}{\leq} \dfrac{1}{(1-\gamma)^2}\sum_{s\in\mathcal{S}} d^{\pi_\tau^*}(s)\sqrt{2\mathrm{KL}\left(\pi_\tau^*(\cdot|s)|| \pi_{\theta_k}(\cdot|s)\right)}\\
        &\overset{(c)}{\leq} \dfrac{1}{(1-\gamma)^2}\sqrt{2\sum_{s\in\mathcal{S}} d^{\pi_\tau^*}(s)\mathrm{KL}\left(\pi_\tau^*(\cdot|s)|| \pi_{\theta_k}(\cdot|s)\right)} = \dfrac{1}{(1-\gamma)^2}\sqrt{2\mathrm{KL}_k^{\tau}}
    \end{split}
\end{align}
where $(a)$, $(b)$, and $(c)$ are consequences of Lemma \ref{lemma_performance_diff}, the Pinkster's inequality, and the Cauchy-Schwarz inequality respectively. Taking expectation on both sides of the above inequality, we obtain,
\begin{align}
\label{eq_appndix_94}
    \begin{split}
             &J^{\pi_\tau^*}_g - \mathbf{E}\left[J^{\pi_{\theta_k}}_g\right] \leq \dfrac{1}{(1-\gamma)^2}\mathbf{E}\left[\sqrt{2\mathrm{KL}_k^{\tau}}\right] \\
             &\overset{(a)}{\leq} \dfrac{1}{(1-\gamma)^2}\sqrt{2\mathbf{E}\left[\mathrm{KL}_k^{\tau}\right]} \leq  \dfrac{1}{(1-\gamma)^2}\sqrt{2\Phi_k^{\tau}}\\
             &\overset{(b)}{\leq} \dfrac{\sqrt{2}}{(1-\gamma)^2}\left[\exp\left(-\dfrac{\eta\tau k}{2}\right)\sqrt{\Phi_0^{\tau}}+ \dfrac{1}{\sqrt{\tau}}\left(\epsilon_{\mathrm{bias}}\right)^{\frac{1}{4}}+\sqrt{\dfrac{R_0}{\tau}}\exp\left(-\dfrac{\mu_F}{80G^2}H\right) + \sqrt{\dfrac{\eta}{\tau}}\left(\sqrt{R_1}+\sqrt{R_2}\right)\right]
    \end{split}
\end{align}
where $(a)$ is a result of Jensen's inequality applied to $f(x)=\sqrt{x}$, $x\geq 0$ while $(b)$ follows from Corollary \ref{corr_1}, and the fact that for the choice of $\eta, \tau$ mentioned in the theorem statement, $\tau\in[0, 1]$ and $\eta\tau<1$ for sufficiently small $\epsilon$. Lemma \ref{lemma_primal_dual_existence} implies
\begin{align}
    \label{eq_appndix_95}
    J^{\pi^*}_r - J^{\pi_\tau^*}_r \leq \tau \mathcal{H}(\pi_\tau^*)\leq \dfrac{\tau}{(1-\gamma)}\log A
\end{align}

Combining \eqref{eq_appndix_94} and \eqref{eq_appndix_95}, one can obtain an upper bound on the optimality gap. To derive a similar bound for the constraint violation, observe that Lemma \ref{lemma_primal_dual_existence} implies that,
\begin{align}
    \label{eq_appndix_96}
    \begin{split}
        -(\lambda_{\max}-\lambda_\tau^*)J_c^{\pi_\tau^*} \leq \dfrac{\tau}{2}\lambda_{\max}^2
    \end{split}
\end{align}

 Additionally, since $\lambda_\tau^* = \arg\min_{\lambda\in\Lambda} \mathcal{L}_{\tau}(\pi_\tau^*, \lambda)$, we have,
\begin{align}
    \lambda_\tau^* = {\arg\min}_{\lambda\in\Lambda} \left\{\lambda J_c^{\pi_\tau^*}+\dfrac{\tau}{2}\lambda^2\right\}
\end{align}

Therefore, $\lambda_\tau^*$ can assume three possible values.

\textbf{Case 1:} If $0<-\frac{J_c^{\pi_\tau^*}}{\tau}<\frac{\lambda_{\max}}{2}$, we have $\lambda_\tau^* = -\frac{J_c^{\pi_\tau^*}}{\tau}$, and consequently $\lambda_{\max}-\lambda_\tau^* > \frac{\lambda_{\max}}{2}$.

\textbf{Case 2:} If $-\frac{J_c^{\pi_\tau^*}}{\tau}\leq 0$, we get $\lambda_\tau^* = 0$ which ensures $\lambda_{\max}-\lambda_{\tau}^* = \lambda_{\max}>\frac{\lambda_{\max}}{2}$. \vspace{0.2cm}

For cases 1 and 2, the following result can be derived using \eqref{eq_appndix_96}.
\begin{align}
\label{eq_appndix_cases_1_2}
    -J_c^{\pi_\tau^*} \leq \tau \lambda_{\max}
\end{align}

\textbf{Case 3:} If $\frac{\lambda_{\max}}{2}\leq -\frac{J_c^{\pi_\tau^*}}{\tau}$, either $\lambda_\tau^*=-\frac{J_c^{\pi_\tau^*}}{\tau}$ or $\lambda_\tau^* = \lambda_{\max}$. In both cases, $\lambda_\tau^*\geq \frac{\lambda_{\max}}{2}$. Applying Lemma \ref{lemma_primal_dual_existence}, we get,
\begin{align}
\label{eq_appndix_98}
    \lambda_\tau^*(J^{\pi^*}_c - J^{\pi_\tau^*}_c)\leq J_r^{\pi_\tau^*} - J_r^{\pi^*} + \tau \mathcal{H}(\pi_\tau^*) 
\end{align}

Moreover, using Lemma \ref{lemma_strong_suality_original}, we have,
\begin{align}
    \label{eq_appndix_99}
    J_r^{\pi_\tau^*} - J_r^{\pi^*} \leq \lambda^*(J_c^{\pi^*} - J_c^{\pi_\tau^*})
\end{align}

Combining \eqref{eq_appndix_98} and \eqref{eq_appndix_99}, we arrive at,
\begin{align}
    (\lambda_\tau^*-\lambda^*)(J_c^{\pi^*} - J_c^{\pi_\tau^*}) \leq \tau \mathcal{H}(\pi_\tau^*)
\end{align}

Note that $\lambda_\tau^*\geq \frac{\lambda_{\max}}{2}$, and Lemma \ref{lemma_strong_suality_original} dictates that $\lambda^*\leq \frac{\lambda_{\max}}{4}$ for the specific choice of $\lambda_{\max}$ given in the theorem statement. Clearly, $\lambda_\tau^*-\lambda^*\geq \frac{\lambda_{\max}}{4}$. Also, $J_c^{\pi^*}\geq 0$ due to feasibility of the optimal solution. Hence, 
\begin{align}
\label{eq_appndix_case_3}
    -J_c^{\pi_\tau^*}\leq \dfrac{4\tau \mathcal{H}(\pi_\tau^*)}{\lambda_{\max}}\leq \dfrac{\tau}{(1-\gamma)} \log A 
\end{align}
where the last inequality follows from the fact that $4/\lambda_{\max} = (1-\gamma)c_{\mathrm{slat}}\leq 1$. Combining \eqref{eq_appndix_cases_1_2} and \eqref{eq_appndix_case_3}, we obtain,
\begin{align}
\label{eq_appndix_103}
    -J_c^{\pi_\tau^*} \leq \dfrac{\tau}{(1-\gamma)}\max\bigg\{\log A, 4c_{\mathrm{slat}}^{-1}\bigg\}
\end{align}

\if 0
\subsection{Choice of Parameters for Complete Policy Space}

If $\epsilon_{\mathrm{bias}} = 0$, and we take $\tau = \epsilon$, the following bounds result from \eqref{eq_appndix_95} and \eqref{eq_appndix_103}.
\begin{align}
\label{eq_appndix_104}
    J_r^{\pi^*}-J_r^{\pi_\tau^*}=\mathcal{O}(\epsilon),~\text{and}~-J_c^{\pi_\tau^*} = \mathcal{O}(\epsilon)
\end{align}

Moreover, if $\eta = \epsilon^3$, $H= 120G^2\mu_F^{-1}\log(\epsilon^{-1})$, and $K=2\epsilon^{-4}$, then \eqref{eq_appndix_94} reduces to the following $\forall g\in\{r,c\}$.
\begin{align}
\label{eq_appndix_105}
    J_g^{\pi_\tau^*} - \mathbf{E}[J_g^{\pi_{\theta_K}}] = \mathcal{O}(\epsilon)
\end{align}

We prove the first statement by combining \eqref{eq_appndix_104} and \eqref{eq_appndix_105}. The criterion that $H$ is sufficiently large is automatically satisfied if $\epsilon$ is sufficiently small.
\fi 

\subsection{Choice of Parameters}

If we take $\tau = \max\{(\epsilon_{\mathrm{bias}})^{\frac{1}{6}}, \epsilon\}$, the following bounds result from \eqref{eq_appndix_95} and \eqref{eq_appndix_103}.
\begin{align}
\label{eq_appndix_109}
    J_r^{\pi^*}-J_r^{\pi_\tau^*}=\mathcal{O}((\epsilon_{\mathrm{bias}})^{\frac{1}{6}}+\epsilon),~\text{and}~-J_c^{\pi_\tau^*} = \mathcal{O}((\epsilon_{\mathrm{bias}})^{\frac{1}{6}}+\epsilon)
\end{align}

Moreover, if $\eta = \epsilon^2\tau$, $H= 40G^2\mu_F^{-1}\log(\tau^{-1}\epsilon^{-2})$, and $K=2\tau^{-2}\epsilon^{-2}$, then \eqref{eq_appndix_94} reduces to the following $\forall g\in\{r,c\}$.
\begin{align}
\label{eq_appndix_110}
    J_g^{\pi_\tau^*} - \mathbf{E}[J_g^{\pi_{\theta_K}}] = \mathcal{O}(\epsilon + (\epsilon_{\mathrm{bias}})^{\frac{1}{6}})
\end{align}

To prove the above relation, we utilized the facts that (a) if $\epsilon<(\epsilon_{\mathrm{bias}})^{\frac{1}{6}}$, we have $\tau = (\epsilon_{\mathrm{bias}})^{\frac{1}{6}}$ and $(\epsilon_{\mathrm{bias}})^{\frac{1}{4}}/\sqrt{\tau}=(\epsilon_{\mathrm{bias}})^{\frac{1}{6}}$, and (b) if $\epsilon\geq (\epsilon_{\mathrm{bias}})^{\frac{1}{6}}$, we have $\tau=\epsilon$, leading to $(\epsilon_{\mathrm{bias}})^{\frac{1}{4}}/\sqrt{\tau}\leq \epsilon$. Note that in both of these cases, $(\epsilon_{\mathrm{bias}})^{\frac{1}{4}}/\sqrt{\tau}\leq (\epsilon_{\mathrm{bias}})^{\frac{1}{6}}+\epsilon$. Conjoining \eqref{eq_appndix_109} and \eqref{eq_appndix_110}, we conclude the theorem. The criterion that $H$ is sufficiently large is automatically satisfied if $\epsilon$ is sufficiently small.
\section{Auxiliary Lemmas}
\begin{lemma}\cite[Lemma 2]{agarwal2021theory}
    \label{lemma_performance_diff}
    For any two policies $\pi_1, \pi_2$ and a utility $g\in\mathbb{R}^{\mathcal{S}\times \mathcal{A}}$, the following holds.
    \begin{align}
        J^{\pi_1}_g - J^{\pi_2}_g = \dfrac{1}{1-\gamma}\mathbf{E}_{(s, a)\sim \nu^{\pi_1}}\left[A^{\pi_2}_g(s, a)\right] = \dfrac{1}{1-\gamma}\sum_{s\in\mathcal{S}} d^{\pi_1}(s)\sum_{a\in\mathcal{A}}\bigg\lbrace \pi_1(a|s) - \pi_2(a|s)\bigg\rbrace Q^{\pi_2}_g(s, a)
    \end{align}
\end{lemma}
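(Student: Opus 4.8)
The plan is to prove the first equality by a telescoping (``baseline-subtraction'') argument along a $\pi_1$-induced trajectory, and then to deduce the second equality from the first by a one-line algebraic manipulation exploiting that the advantage of a policy has zero mean under that policy's own action distribution.

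For the first equality I would start from the trajectory representation $J_g^{\pi_1} = \mathbf{E}_{\pi_1}[\sum_{t=0}^\infty \gamma^t g(s_t,a_t)\mid s_0\sim\rho]$ and insert the telescoping identity
\begin{align*}
    \sum_{t=0}^\infty \gamma^t g(s_t,a_t) = V_g^{\pi_2}(s_0) + \sum_{t=0}^\infty \gamma^t\Big[g(s_t,a_t) + \gamma V_g^{\pi_2}(s_{t+1}) - V_g^{\pi_2}(s_t)\Big],
\end{align*}
which holds because $\sum_{t}\gamma^t[\gamma V_g^{\pi_2}(s_{t+1}) - V_g^{\pi_2}(s_t)] = -V_g^{\pi_2}(s_0)$ telescopes (with $\gamma^t V_g^{\pi_2}(s_t)\to 0$). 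Taking $\mathbf{E}_{\pi_1}[\cdot\mid s_t,a_t]$ of the bracketed term and using that the one-step utility and transition do not depend on the policy gives $\mathbf{E}_{\pi_1}[g(s_t,a_t)+\gamma V_g^{\pi_2}(s_{t+1})\mid s_t,a_t] = Q_g^{\pi_2}(s_t,a_t)$, so the bracket has conditional expectation $Q_g^{\pi_2}(s_t,a_t) - V_g^{\pi_2}(s_t) = A_g^{\pi_2}(s_t,a_t)$. Since $\mathbf{E}_{s_0\sim\rho}[V_g^{\pi_2}(s_0)] = J_g^{\pi_2}$, this yields $J_g^{\pi_1} - J_g^{\pi_2} = \mathbf{E}_{\pi_1}[\sum_t \gamma^t A_g^{\pi_2}(s_t,a_t)]$, and re-indexing through the definitions $d^{\pi_1}(s) = (1-\gamma)\sum_t\gamma^t\mathrm{Pr}(s_t=s\mid s_0\sim\rho)$ and $\nu^{\pi_1}(s,a)=d^{\pi_1}(s)\pi_1(a\mid s)$ converts the discounted trajectory sum into $\tfrac{1}{1-\gamma}\mathbf{E}_{(s,a)\sim\nu^{\pi_1}}[A_g^{\pi_2}(s,a)]$.

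For the second equality I would write $A_g^{\pi_2} = Q_g^{\pi_2} - V_g^{\pi_2}$ inside the expectation and observe that $V_g^{\pi_2}(s) = \sum_a \pi_2(a\mid s)Q_g^{\pi_2}(s,a)$, i.e. $\mathbf{E}_{a\sim\pi_2(s)}[A_g^{\pi_2}(s,a)] = 0$. Consequently $\sum_a \pi_1(a\mid s)A_g^{\pi_2}(s,a) = \sum_a \pi_1(a\mid s)Q_g^{\pi_2}(s,a) - V_g^{\pi_2}(s) = \sum_a\{\pi_1(a\mid s)-\pi_2(a\mid s)\}Q_g^{\pi_2}(s,a)$, and summing this against $d^{\pi_1}(s)$ gives the claimed $Q$-form.

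The step I expect to require the most care is the interchange of the infinite sum with the expectation and the vanishing of the boundary term $\gamma^t V_g^{\pi_2}(s_t)$; for a uniformly bounded utility $g$ this is immediate from dominated convergence and the geometric factor $\gamma^t$, but in the present paper $g$ may be taken to be the unbounded entropy utility $\psi_\theta(s,a) = -\log\pi_\theta(a\mid s)$, so one must instead invoke the integrability already guaranteed by the finiteness of $V_g^{\pi_2}$ (equivalently, bounds of the type established in Lemma \ref{lemma:advantage_bound}) to legitimize the rearrangement. Everything else is routine bookkeeping.
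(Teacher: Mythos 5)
Your proof is correct and follows the standard telescoping argument, which is exactly the route taken in the source the paper cites for this result (\cite[Lemma 2]{agarwal2021theory}); the paper itself states the lemma without reproving it. Your added remark about justifying the vanishing boundary term $\gamma^t V_g^{\pi_2}(s_t)$ and the sum--expectation interchange when $g$ is the unbounded entropy utility $\psi_\theta$ is a legitimate point of care that the paper glosses over, and your appeal to the finiteness of $V_g^{\pi_2}$ (as guaranteed by bounds of the type in Lemma \ref{lemma:advantage_bound}) handles it adequately.
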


\begin{lemma}\cite[Lemma 14]{ding2024last} 
    \label{lemma_primal_dual_existence}
    The primal-dual pair $(\pi_\tau^*, \lambda_\tau^*)$ defined in \eqref{eq:def_primal_dual_tau} always exists and is unique for $\tau>0$. Moreover, it satisfies the following property.
    \begin{align}
        {\max}_{\pi\in\Pi}\mathcal{L}_{\tau}(\pi, \lambda_\tau^*) = \mathcal{L}_\tau (\pi_\tau^*, \lambda_{\tau}^*)  = {\min}_{\lambda\in\Lambda} \mathcal{L}_{\tau}(\pi_{\tau}^*, \lambda) 
    \end{align}

    Consequently, the following inequalities hold $\forall \pi\in\Pi$ and $\forall \lambda\in\Lambda$.
    \begin{align}
        J^{\pi}_{r+\lambda_\tau^* c} - \tau\mathcal{H}(\pi_\tau^*)\leq J^{\pi_\tau^*}_{r+\lambda_\tau^*c} \leq J^{\pi_\tau^*}_{r+\lambda c} + \frac{\tau}{2}\lambda^2
    \end{align}
\end{lemma}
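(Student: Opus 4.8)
The key obstacle to a direct argument is that $\mathcal{L}_\tau(\pi,\lambda)$ is \emph{not} concave in the policy variable $\pi$, since $J^{\pi}_{r+\lambda c}$ depends nonlinearly on $\pi$. The plan is therefore to reparameterize the inner maximization in terms of the state-action occupancy measure $\nu(s,a)=d^{\pi}(s)\pi(a|s)$, which ranges over the convex compact polytope $\mathcal{V}=\{\nu\ge 0:\sum_a\nu(s,a)=(1-\gamma)\rho(s)+\gamma\sum_{s',a'}P(s|s',a')\nu(s',a')\}$ and is in bijection with $\Pi$ (recovering $\pi(a|s)=\nu(s,a)/\sum_{a'}\nu(s,a')$ on visited states). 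Under this change of variable $J^{\pi}_{r+\lambda c}=\tfrac{1}{1-\gamma}\sum_{s,a}\nu(s,a)(r+\lambda c)(s,a)$ is linear in $\nu$ and $\mathcal{H}(\pi)=-\tfrac{1}{1-\gamma}\sum_{s,a}\nu(s,a)\log\frac{\nu(s,a)}{\sum_{a'}\nu(s,a')}$, so the reparameterized Lagrangian $\tilde{\mathcal{L}}_\tau(\nu,\lambda)$ is a sum of a bilinear term, a scaled conditional entropy, and $\tfrac{\tau}{2}\lambda^2$.

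Next I would verify the concave--convex structure. For fixed $\lambda$, concavity in $\nu$ reduces to concavity of the conditional entropy; this holds because the perspective-type map $(t,u)\mapsto -t\log t+t\log u$ is jointly concave for $t,u>0$ (its Hessian is negative semidefinite), $u=\sum_{a'}\nu(s,a')$ is linear in $\nu$, and summation preserves concavity. For fixed $\nu$, $\tilde{\mathcal{L}}_\tau(\nu,\cdot)$ is \emph{strictly} convex on $\Lambda=[0,\lambda_{\max}]$ owing to the $\tfrac{\tau}{2}\lambda^2$ term with $\tau>0$. Since $\mathcal{V}$ and $\Lambda$ are convex and compact and $\tilde{\mathcal{L}}_\tau$ is continuous, Sion's minimax theorem yields a saddle point $(\nu_\tau^*,\lambda_\tau^*)$ with $\max_{\nu}\min_{\lambda}\tilde{\mathcal{L}}_\tau=\min_{\lambda}\max_{\nu}\tilde{\mathcal{L}}_\tau$, which translates back to the claimed equality $\max_{\pi}\mathcal{L}_\tau(\pi,\lambda_\tau^*)=\mathcal{L}_\tau(\pi_\tau^*,\lambda_\tau^*)=\min_{\lambda}\mathcal{L}_\tau(\pi_\tau^*,\lambda)$.

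For uniqueness, strict convexity in $\lambda$ forces $\lambda_\tau^*$ to be the unique minimizer of $\lambda\mapsto\max_{\pi}\mathcal{L}_\tau(\pi,\lambda)$. For the primal side I would argue that, for the fixed multiplier $\lambda_\tau^*$, the problem $\max_{\pi}[J^{\pi}_{r+\lambda_\tau^* c}+\tau\mathcal{H}(\pi)]$ is an entropy-regularized MDP whose optimal policy is unique: the soft Bellman optimality operator is a $\gamma$-contraction with a unique fixed point, and the induced softmax policy $\pi_\tau^*(a|s)\propto\exp(Q^{*}_{\mathrm{soft}}(s,a)/\tau)$ is thereby uniquely determined (equivalently, the conditional entropy is strictly concave on each visited state's action simplex, pinning down $\pi_\tau^*(\cdot|s)$). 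This yields the unique pair $(\pi_\tau^*,\lambda_\tau^*)$.

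Finally I would read off the two inequalities from the saddle equalities. The left equality gives $\mathcal{L}_\tau(\pi,\lambda_\tau^*)\le\mathcal{L}_\tau(\pi_\tau^*,\lambda_\tau^*)$ for all $\pi$; cancelling the common $\tfrac{\tau}{2}(\lambda_\tau^*)^2$ and using $\mathcal{H}(\pi)\ge 0$ gives $J^{\pi}_{r+\lambda_\tau^* c}-\tau\mathcal{H}(\pi_\tau^*)\le J^{\pi_\tau^*}_{r+\lambda_\tau^* c}$. The right equality gives $\mathcal{L}_\tau(\pi_\tau^*,\lambda_\tau^*)\le\mathcal{L}_\tau(\pi_\tau^*,\lambda)$ for all $\lambda$; the entropy term cancels and dropping $-\tfrac{\tau}{2}(\lambda_\tau^*)^2\le 0$ gives $J^{\pi_\tau^*}_{r+\lambda_\tau^* c}\le J^{\pi_\tau^*}_{r+\lambda c}+\tfrac{\tau}{2}\lambda^2$, as claimed. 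The hard part throughout is the first step: justifying the concavity reformulation (and, for an infinite state space, securing the compactness and continuity that Sion's theorem requires), since without passing to occupancy measures the inner problem is genuinely nonconcave in $\pi$.
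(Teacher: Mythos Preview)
The paper does not supply its own proof of this lemma; it is listed as an auxiliary result and attributed to \cite[Lemma 14]{ding2024last}. Your proposal is correct and is precisely the standard argument used in that reference: pass to state--action occupancy measures to obtain a concave--convex structure (the conditional entropy is jointly concave in $\nu$ via the perspective form $t\log(u/t)$, the value term is linear, and the $\tfrac{\tau}{2}\lambda^2$ term is strictly convex), invoke Sion's minimax theorem on the compact convex sets $\mathcal{V}\times\Lambda$, and read off uniqueness from strict convexity in $\lambda$ together with the uniqueness of the entropy-regularized optimal policy (contraction of the soft Bellman operator). Your extraction of the two displayed inequalities from the saddle-point inequalities is also correct: cancel $\tfrac{\tau}{2}(\lambda_\tau^*)^2$ and use $\mathcal{H}(\pi)\ge 0$ on the left, cancel $\tau\mathcal{H}(\pi_\tau^*)$ and drop $\tfrac{\tau}{2}(\lambda_\tau^*)^2\ge 0$ on the right.

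The only caveat you already flag is the right one: when $\mathcal{S}$ is merely countable (as the paper allows), compactness of $\mathcal{V}$ and continuity of the reparameterized Lagrangian need a topology in which both hold simultaneously (e.g., weak-$*$ compactness of probability measures plus boundedness of $r,c$ and of $\psi_\theta$-type entropy terms). This is a technical rather than conceptual issue and is handled in the cited work; your sketch is otherwise complete.
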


\end{document}